\newcommand{\squeeze}[1]{{\vspace{-0.1cm}}}
\newcommand{\remove}[1]{}
\newcommand{\problemname}{\text{source seeking}}
\DeclareMathOperator*{\argmax}{arg\,max}
\newcommand{\trajpoints}{{\config}}
\newcommand{\RSS}{\text{RSS}}
\newcommand{\Tsample}{T^{\mathrm{sample}}}
\newcommand{\Truntime}{T^{\mathrm{run}}}
\newcommand{\config}{\mathcal{Z}}
\newcommand{\R}{\mathbb{R}}
\newcommand{\deltot}{\delta_{\mathrm{tot}}} %time step
\newcommand{\signal}{\mathbf{X}}
\newcommand{\signaly}{\mathbf{Y}}
\newcommand{\cnt}{\mathbf{N}}
\newcommand{\RR}{\mathbb{R}}
\newcommand{\To}{\longrightarrow}
\def\Vec#1{\!\!\hbox{$#1$\kern-0.38em\lower0.85em\hbox{$\vec{}\,$}}\,}%
\newcommand{\bbm}{\begin{bmatrix}}
\newcommand{\ebm}{\end{bmatrix}}
\newcommand{\bpm}{\begin{pmatrix}}
\newcommand{\epm}{\end{pmatrix}}
\newcommand{\mbb}[1]{{\mathbb{#1}}}
\newcommand{\mc}[1]{\mathcal{#1}}
\newcommand{\normal}[1]{\mc{N}\left(#1\right)}
\newtheorem{definition}{Definition}
\newtheorem{theorem}{Theorem}
\newtheorem{lemma}[theorem]{Lemma}
\newtheorem{fact}[theorem]{Fact}
\newtheorem{corollary}[theorem]{Corollary}
\newtheorem{proposition}[theorem]{Proposition}
\newcommand{\calO}{\mathcal{O}}
\newcommand{\muhat}{\widehat{\mu}}
\newcommand{\Stop}{\calS^{\mathrm{top}}}
\newcommand{\Sstk}{\mathcal{S}^{*}(k)}
\newcommand{\calS}{\mathcal{S}}
\newcommand{\calSi}{\calS_i}
\newcommand{\Poi}{\mathrm{Poisson}}
\newcommand{\Egood}{\mathcal{E}_{\mathrm{good}}}
\newcommand{\mubar}{\overline{\mu}}
\newcommand{\UCB}{\mathrm{UCB}}
\newcommand{\LCB}{\mathrm{LCB}}
\newcommand{\UCBbar}{\overline{\UCB}}
\newcommand{\LCBbar}{\overline{\LCB}}
\newcommand{\calT}{\mathcal{T}}
\newcommand{\BigOhTil}{\widetilde{\mathcal{O}}}
\newcommand{\I}{\mathbb{I}}
\newcommand{\calE}{\mathcal{E}}
\newcommand{\snakeucbtext}{AdaSearch }
\newcommand{\snakeucb}{\mathtt{AdaSearch}}
\newcommand{\infomax}{\mathtt{InfoMax}}
\newcommand{\naivesnake}{\mathtt{NaiveSearch}}
\newcommand{\Hadapt}{\mathcal{C}_{\mathtt{adapt}}}
\newcommand{\Hunif}{\mathcal{C}_{\mathtt{unif}}}
\newcommand{\Hadaptk}{\Hadapt^{(k)}}
\newcommand{\Hunifk}{\Hunif^{(k)}}
\newcommand{\cardS}{|\calS|}
\newcommand{\Shat}{\widehat{\calS}}
\newcommand{\KL}{\mathrm{KL}}
\newcommand{\delbar}{\delta_0}
\newcommand{\impby}{\text{implied by}}
\newcommand{\diverg}{\mathrm{d}}
\newcommand{\divergx}{\diverg(\mu(x),\mu^*)}
\newcommand{\divergbot}{\diverg(\mu(x),\mu^{(k)})}
\newcommand{\divergtop}{\diverg(\mu^{(k+1)},\mu(x))}
\newcommand{\divergboteps}{\diverg_{\epsilon}(\mu(x),\mu^{(k)})}
\newcommand{\divergtopeps}{\diverg_{\epsilon}(\mu^{(k+1)},\mu(x))}
\newcommand{\mux}{\mu(x)}
\newcommand{\ifin}{i_{\mathrm{fin}}}
\title{A Successive-Elimination Approach to Adaptive Robotic Sensing}
\author{
  Esther Rolf\thanks{UC Berkeley, CA. erolf@berkeley.edu}
\and
  David Fridovich-Keil\thanks{UC Berkeley, CA. dfk@berkeley.edu}
\and
  Max Simchowitz\thanks{UC Berkeley, CA. msimchow@berkeley.edu} 
\and
  Benjamin Recht\thanks{UC Berkeley, CA. brecht@berkeley.edu} 
\and
  Claire Tomlin\thanks{UC Berkeley, CA. tomlin@eecs.berkeley.edu}}
\begin{document}
% \nipsfinalcopy is no longer used

\maketitle
\thispagestyle{firstpage}

\begin{abstract}
  %!TEX root = main_ieee.tex
We study an adaptive source seeking problem,
in which a mobile robot must identify the strongest emitter(s) of a signal in an environment with background
emissions.
Background signals may be highly heterogeneous and can mislead algorithms that are based on receding horizon control.
We propose
$\snakeucb$, a general algorithm for adaptive source seeking in the face of heterogeneous background noise. $\snakeucb$ combines global trajectory planning with
principled confidence intervals in order to concentrate measurements in
promising regions while guaranteeing sufficient coverage of the entire
area.
Theoretical analysis shows that $\snakeucb$ confers gains over a uniform sampling strategy when the distribution of background signals is highly
variable.
Simulation experiments demonstrate that
when applied to the problem of
 radioactive source seeking, $\snakeucb$ outperforms
both uniform sampling and
a receding time horizon information-maximization approach based on the current literature.
We also demonstrate $\snakeucb$ in hardware, providing further evidence of its potential for real-time implementation.

\end{abstract}

% ==================== INTRODUCTION ==================
%!TEX root = main_ieee.tex

\section{Introduction}
\label{sec:introduction}

\iftoggle{ieee}{\IEEEPARstart{R}{obotic}}{Robotic} \problemname~is a problem domain in which %one or many autonomous robots
a mobile robot
must traverse an environment to
locate the maximal emitters of a signal of interest, usually in the presence of
background noise. Adaptive \problemname~involves adaptive sensing and active
information gathering, and encompasses several
well-studied problems in robotics, including the rapid identification of
accidental contamination leaks and radioactive sources \cite{vetter2018gamma,
  masarich2018radiation},
and finding individuals in search and rescue missions \cite{hoffmann2010mobile}. %In such settings, it is often critical to devise a sensing strategy that returns a correct solution as quickly as possible.
We consider a specific motivating application of radioactive source-seeking (RSS), in which a
UAV (Fig.~\ref{fig:drone}) must identify the $k$-largest radioactive emitters in
a planar environment, where $k$ is a user-defined parameter.
RSS is a particularly interesting instance of \problemname~due
to
the challenges posed by the highly heterogeneous background noise \cite{pahlajani2014error}. %, and to

A well-adopted methodology for approaching \problemname\ problems is information
maximization (see Sec.~\ref{sec:related_work}), in which measurements are collected in the most promising locations
following a receding planning horizon. Information maximization is appealing because it
favors measuring regions that are likely to contain the highest
emitters and avoids wasting time elsewhere. However, when operating in real-time, computational constraints necessitate
approximations such as limits on planning horizon and trajectory
parameterization. These limitations scale with size of the search
region and complexity of the sensor model and may cause the algorithm to be excessively
greedy,
 spending extra travel time tracking down false leads.

 \begin{figure}[t]
%    \centering \includegraphics[width=0.95\columnwidth]{fancy_front_figure.png}
\centering 
\iftoggle{ieee}{
\includegraphics[width=0.95\columnwidth]{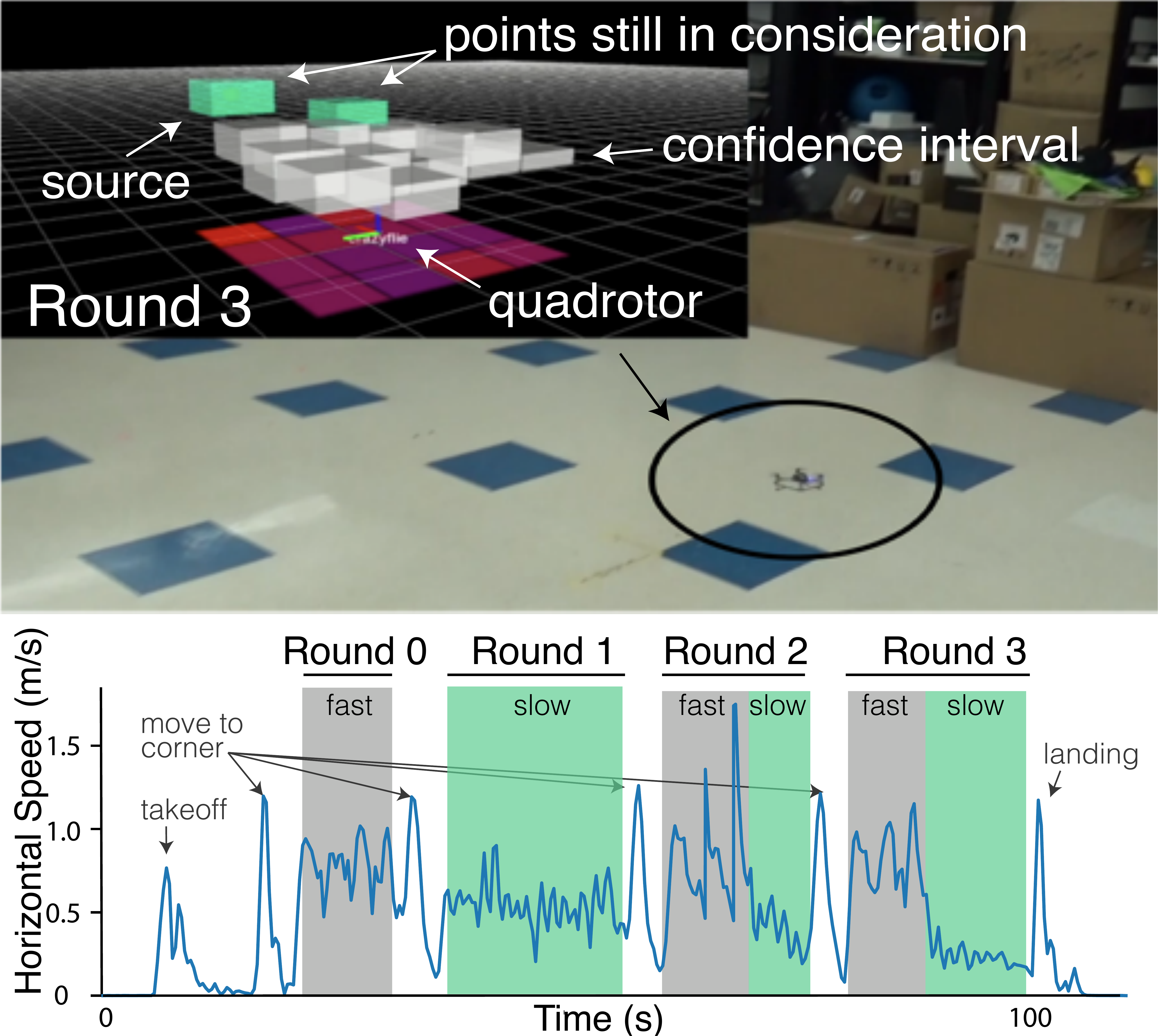}}
{\includegraphics[width=0.6\columnwidth]{figures_tro/front_fig_smaller.pdf}}
    \caption{A Crazyflie 2.0 quadrotor in a motion capture room during a hardware demo of $\snakeucb$ (round 3 of the algorithm). Top: confidence intervals for estimated radioactivity (simulated) within the room. 
    Bottom: horizontal speed over time, indicating slow and fast sections in
      which $\snakeucb$ allocates more and fewer sensor measurements, respectively. In later rounds, more time is spent measuring points that are still in consideration (green points).} %(red pixels are most
      %radioactive). Boxes reprepresent confidence intervals on the true emission rates;
     % green intervals are still in
      %consideration, while grey ones have been ruled out.}
    \label{fig:drone}
    \iftoggle{ieee}{\vspace{-1.5em}}{}
  \end{figure}

To overcome these limitations, we introduce $\snakeucb$, a successive-elimination framework for general
\problemname~problems with multiple sources, and demonstrate it within the context of RSS. $\snakeucb$ explicitly maintains confidence intervals over the emissions
rate at each point in the environment. Using these confidence intervals, the algorithm identifies a set of candidate points likely to be among the top-$k$ emitters, and eliminates points that are not.
Rather than iteratively planning for short, receding time horizons, $\snakeucb$ repeats a
\emph{fixed, globally-planned path}, adjusting the robot's speed in real-time to
focus measurements on promising regions.
%of the search space
% which contain the candidate points identified by our confidence intervals.
This approach offers
coverage of the full search space while affording an adaptive
measurement allocation in the spirit of information maximization. %In addition,
By maintaining a single fixed, global path, $\snakeucb$ reduces the online computational
overhead, yielding an algorithm easily amenable to real-time implementation. %Moreover, the explicit confidence intervals ensure a reliable termination criterion.
 %explicitly maintains confidence intervals over the emissions
%rate at each point in the environment. At each round, the agent adjusts its speed and, therefore, its measurement allocation, to .
%Moreover,
%In Sec.~\ref{sec:algorithm_snake}, we introduce the general $\snakeucb$ planning
%strategy, which in Sec.~\ref{sec:methods} we specialize to RSS with Poisson
%measurements and a linear sensing model . In
%Sec.~\ref{sec:results}, we compare $\snakeucb$ with an implementation of information
%maximization based on \cite{marchant2014bayesian} and tailored specifically to
%the radioactive source-seeking problem.

Specifically, our main contributions are:
\begin{itemize}
\item $\snakeucb$, a general framework for designing efficient sensing
  trajectories for robotic \problemname~problems,
\item Theoretical runtime analysis of $\snakeucb$ as well as of a naive, uniform
  sampling baseline which follows the same fixed global path but moves at
  constant speed, and
\item Simulation experiments for RSS evaluating $\snakeucb$ in comparison with a uniform baseline
  and information maximization.
\end{itemize}
Our theoretical analysis sharply quantifies $\snakeucb$'s improvement over its
uniform sampling analog. Experiments validate this finding in practice, and also show that $\snakeucb$ outperforms a custom implementation of information maximization tailored to the RSS problem.
Together, these results suggest that the accuracy and efficient runtime of $\snakeucb$ are robust to heterogeneous background noise, which stands in contrast to existing alternative methods. This robustness is particularly valuable in real-world applications where the exact distribution of background signals in the environment is likely unknown.
%Additionally, to emphasize
% that $\snakeucb$ is not only applicable in simulation, we present a hardware
%demonstration using a small quadrotor helicopter in a motion capture room.

The remainder of this paper is organized as follows. Sec.~\ref{sec:related_work}
presents a brief survey of related literature. Sec.~\ref{sec:algorithm_snake}
provides a formal statement of the \problemname~problem and introduces our
solution, $\snakeucb$. In Sec.~\ref{sec:methods}, we consider a radioactive
source seeking (RSS) case study and develop two appropriate sensing models which allow us to apply $\snakeucb$ to RSS.
%As points of theoretical and empirical comparison in the RSS case study, we consider a uniform sampling analog of $\snakeucb$ and a customized implementation of information maximization.
Sec.~\ref{sec:analysis} analyzes the theoretical runtime
complexity of $\snakeucb$ and its uniform sampling analog for the RSS problem. In Sec.~\ref{sec:results}, we present simulation experiments which corroborate these
theoretical results. %Our experiments show that $\snakeucb$ consistently
%performs favorably to both alternatives and, surprisingly, that the uniform sampling baseline performs comparably to information maximization.
A hardware
demonstration provides further evidence of $\snakeucb$'s potential for real-time
application. Sec.~\ref{sec:generalizations} suggests a number of extensions and generalizations to $\snakeucb$, and Sec.~\ref{sec:conclusion} concludes with a summary of our results.

\section{Related Work}
\label{sec:related_work}

% Adaptive robotic sensing is central to problems in environmental monitoring~\cite{vetter2018gamma, masarich2018radiation,dunbabin2012robots}, and encompasses many different modeling choices and algorithmic methods.  %We characterize related work by underlying method, focusing on discussing aspects of methods which are  are ammenable for problem settings with large amounts of heterogenous noise.

There is a breadth of existing work related to \problemname.
Much of this literature, particularly when tailored to robotic
applications, leverages some form of information maximization, often using a
Gaussian process prior. However, our own work is inspired by approaches from the pure exploration
multi-armed bandit literature, even though bandits are not typically used to
model physical sensing problems with realistic motion constraints. We survey the
most relevant work in both information maximization and multi-armed bandits
below.

%\textbf{Information maximization methods.}
\subsection{Information maximization methods}
A popular approach to active sensing and source seeking in robotics, e.g. in active mapping \cite{bourgault2002information}
and target localization \cite{miller2016ergodic}, is to choose trajectories
that maximize a measure of information gain~\cite{bai2016information,ma2017active, charrow2015information,bourgault2002information,levine2010information}. In the specific case of linear Gaussian
measurements, Atanasov et al.~\cite{atanasov2014information} formulate the informative path
planning problem as an optimal control problem that affords an offline
solution.
Similarly, Lim et al.~\cite{lim2016adaptive} propose a recursive divide and conquer approach to active information gathering for discrete hypotheses, which is near-optimal in the noiseless case.

Planning for  information maximization-based methods
typically proceeds with a receding horizon \cite{bai2016information,
  marchant2012bayesian, marchant2014bayesian, martinez2009bayesian,
  guestrin2005near}.  For example, Ristic et al. \cite{ristic2010information} formulate information
gathering as a partially observable Markov decision process and approximate a solution using a receding horizon.
Marchant et al. \cite{marchant2012bayesian} combine upper confidence bounds
(UCBs) at potential source
locations with a penalization term for travel distance to define a greedy acquisition function for Bayesian optimization. Their subsequent work %Rather thanoptimize individual steps,
\cite{marchant2014bayesian} reasons at the path level
to find longer, more informative trajectories. Noting the limitations of a
greedy receding horizon approach, \cite{hitz2014fully} incentivizes exploration
by using a look-ahead step in planning. Though similar in spirit to these
information seeking approaches, a key benefit of $\snakeucb$ is that it
is not greedy, but rather iterates over a global path.

% \textbf{Gaussian processes for information maximization.}
Information maximization methods typically
require a prior distribution on the underlying signals. Many
%prior
 active sensing approaches model this prior as being drawn from a Gaussian process (GP) over
an underlying space of possible functions
\cite{miller2016ergodic,bai2016information,marchant2012bayesian},
tacitly enforcing the assumption that the sensed signal is
smooth~\cite{marchant2012bayesian}. In certain applications, this is well
motivated by physical laws, e.g. diffusion~\cite{hitz2014fully}. However, GP
priors may not reflect the sparse, heterogeneous emissions encountered in
radiation detection and similar problem settings. 
%Hence, $\snakeucb$ does not rely upon GP priors.

% \textbf{Multi-armed bandit methods.}
\subsection{Multi-armed bandit methods}
$\snakeucb$ draws heavily on confidence-bound based algorithms
from the pure exploration bandit literature \cite{even2006action,
  audibert2010best, jamieson2014lil}. %We formulate signal strength as a
%``reward'' emanating from a set of possible source locations.
In contrast to these works, our method explicitly incorporates a physical
sensor model and
allows for efficient measurement allocation despite the physical movement constraints inherent to mobile robotic
sensing. Other works have studied spatial constraints in the online,
``adversarial'' reward setting \cite{koren2017multi, bubeck2017kserver}.
Baykal et al.~\cite{baykal2017persistent} consider spatial constraints in a
persistent surveillance problem, in which the objective is to observe as many
events of interest as possible despite unknown, time-varying event statistics.
Recently, Ma et al. \cite{ma2017active}
encode a notion of spatial hierarchy in designing informative trajectories,
based on a multi-armed bandit formulation. While \cite{ma2017active} and
$\snakeucb$ are similarly motivated, %hierarchical planning is sub-optimal for
% the radiation sensing problem due to the rapid decay of sensitivity with
% distance to the signal source.
hierarchical planning can be inefficient for many sensing models, e.g.
for short-range sensors, or signals that decay quickly with distance from the source.
%when sensitivity decays with squared distance to the source of a signal, as in Sec.~\ref{sec:physical_ci}.
%In the different setting of
%online, adversarial bandit reward maximization, the problem of spatial
%constraints has been studied from a worst-case perspective in topologies of
%theoretical interest%.

Bandit algorithms are also studied from a Bayesian perspective, where a prior is placed over underlying rewards.
 For example, Srinivas et al. \cite{srinivas2012information} %provide a theoretical analysis of their GP-UCB algorithm for the setting where rewards are drawn from a GP prior,
provide an interpretation of the GP upper confidence bound (GP-UCB) algorithm in terms of information maximization.
$\snakeucb$ does not use such a prior, and is more
similar to the lower and upper confidence bound (LUCB)
algorithm~\cite{kalyanakrishnan2012pac}, but opts for successive elimination
over the more aggressive LUCB sampling strategy for measurement allocation.

A multi-armed bandit approach to active exploration in Markov decision processes (MDPs) with transition costs is studied in \cite{Tarbouriech2019ActiveEI}, which details trade-offs between policy mixing and learning environment parameters. This work highlights the potential difficulties of applying a multi-armed bandit approach while simultaneously learning robot policies. In contrast, we show that decoupling the use of active learning during the sampling decisions from a fixed global movement path confers efficiency gains under reasonable environmental models.

%In addition, \cite{srinivas2012information} provide an analysis of bandit problems in Bayesian setting.

%Whereas the above works study bandits in a non-Bayesian setting, one can also establish an equivalence between Bayesian bandit problems and information

%A theoretical discussion of the connection of multi-armed bandits and GPs is presented in \cite{srinivas2012information}, where the
%GP-UCB algorithm is used to estimate the function uniformly by assigning each sampled
%point to its mean measured emission. \esther{@max take a look at whether this makes sense here.}

% \esther{!just moved this here; needs attention!} $\snakeucb$ extends beyond
% tradional pure exploration approaches \esther{(cite)} from the multi-armed
% bandits literature on two counts. First, it combines principled measurement
% allocation with continuous trajectory planning for an embodied agent. Second,
% we show how to develop actionable confidence intervals for localized signal
% estimation, even when observed measurements are additive, weighted mixtures of
% point emissions. $\snakeucb$ empirically outperforms both an information
% maximization and a uniform-sampling baseline on a simulated radiation
% detection task, and is also demonstrated in hardware (Sec.~\ref{sec:results}).

% \textbf{Other source-seeking methods.}
\subsection{Other source seeking methods}
Other notable extremum seeking methods include those that emulate gradient ascent in the physical domain \cite{biyik2008gradient, matveev2016extremum,
  porat1996localizing}, take into account specific environment signal characteristics~\cite{khodayi2019model}, or are specialized for particular vehical dynamics~\cite{mellucci2016source}.
Modeling emissions as a continuous
field, gradient-based approaches estimate and follow the gradient of the
measured signal
toward local maxima \cite{biyik2008gradient, matveev2016extremum,
  porat1996localizing}. One of the key drawbacks of gradient-based methods is
their susceptibility to finding local, rather than global, extrema. Moreover,
the error margin on the noise of gradient estimators for large-gain sensors measuring noisy signals can
be prohibitively large \cite{vasiljevic2008error}, as is the case in RSS.  %as is the
%case for Poisson measurement model used to describe gamma radiation detection in Sec.~\ref{sec:methods}.
Khodayi-mehr et al.~\cite{khodayi2019model} handle noisy measurements  by combining domain, model, and parameter reduction methods to actively identify sources in steady state advection-diffusion transport system problems such as chemical plume tracing. Their approach combines optimizing an information theoretic quantity based on these approximations with path planning in a feedback loop, specifically incorporating the physics of advection-diffusion problems. In comparison, we consider planning under specific sensor models, and plan motion path and optimal measurement allocation separately.

% =================== ALGORITHM ==================
%\input{algorithm_heirarchical}
% !TEX root = main_ieee.tex

\section{\snakeucbtext Planning Strategy}
\label{sec:algorithm_snake}

%\textbf{Problem statement.}
\subsection{Problem statement}
We
consider signals (e.g. radiation) which emanate from a finite set of environment points $\calS$.
Each point
$x \in \calS$ emits signals
$\{\signal_t(x)\}$ indexed by time $t$ with means $\mu(x)$, independent and identically distributed over time. Our aim is to
correctly and exactly discern the set of the
$k$ points in the environment that emit the maximal signals: %, \esther{TODO @max
                                %make this pttier}
\begin{align}
\label{eqn:s_star}
\Sstk =  \argmax_{\substack{S' \subseteq \calS, |S'| = k}} \sum_{x \in S'}{\mu(x)}% = \argmax_{\substack{S \subset \calS \\ |S| = k}} \sum_{x \in S}{\expect{\signal_t(x)}}.
\end{align}
for a pre-specified integer $ 1 \leq k \leq |\calS|$.
%a priori.
%Note that the maximum of the summation in \eqref{eqn:s_star} is satisfied when $S$ contains the $k$ strongest signal emitters in the enviroment.
Throughout, we assume that the set of maximal emitters $\Sstk$ is unique.

In order to decide which points are maximal emitters, the robot takes sensor
measurements along a fixed path $\trajpoints = (z_1,\dots,z_n)$ in the robot's configuration space.
Measurements are determined by a known \emph{sensor model}
$h(x,z)$ that describes the contribution of environment point
$x~\in~\calS$ to a sensor measurement collected from sensing configuration $z \in \config$.
We consider a linear sensing model in which the total observed measurement at
time $t$, $\signaly_t(z)$, taken from sensing configuration $z$, is the weighted sum of the contributions $\{\signal_t(x)\}$ from all environment points:
\begin{align}
\signaly_t(z) = \sum_{x \in \calS} h(x, z) \signal_{t}(x)
\label{eq:signal_model}
\end{align}
Note that while $h(x,z)$ is known, the $\{\signal_t\}$ are unknown and must be estimated via the observations $\{\signaly_t\}$.

The path of sensing configurations, $\config$, should be as short as possible,
while providing sufficient information about the entire environment.
%after taking measurements at each configuration $z \in \config$.
%should be designed so that after taking measurements at each
%configuration in $z \in \config$, we obtain sufficient information about each point in
%the environment.
\iftoggle{ieee}
{
}
{
  This may be expressed as a condition on the minimum aggregate
sensitivity $\alpha$ to any given environment point $x$ over the sensing path $\config$:
\begin{align}
\label{eqn:min_sensitity}
\sum_{z \in \config} h(x,z) \geq \alpha > 0 \quad\forall x \in \calS
\end{align}
}
Moreover, we need to disambiguate between contributions from different environment points $x,x' \in \calS$.
We define the matrix $H \in \RR^{|\calS| \times |\config|}$ that encodes the sensitivity of each sensing configuration $z_j \in \config$ to each point $x_i \in \calS$, so that
$H_{ij} := h(x_i,z_j)$.
Disambiguation then translates to a rank constraint $\textrm{rank}(H) \geq
|\calS|$, enforcing invertibility of $H H^T$. Sections~\ref{sec:ptwise_ci} and~\ref{sec:physical_ci} define two specific sensitivity functions that we consider in the context of the RSS % \emph{radioactive source-seeking} problem, or $\RSS$0
  problem.
\iftoggle{ieee}{}{
  In Section~\ref{sec:generalizations}, we discuss sensitivity functions that may arise in other application domains.
}

%\maxs{@note: other future things/experiment design put in section Generalizations}

%For our theoretical analysis, we will consider a simplified mathematical model for \emph{pointwise sensing}; that is, $\config = \calS$, each configuration $z$ corresponds to a position $x$ in the grid, and the sensitive function is given by $h(x,z) = \I(x = z)$. We further consider a simplified dynamics model, \maxs{explain further?} where units of time are normalized so that, at top speed, the drones can traverse one grid point in one time step.
%In our experiments, we will consider \maxs{@dfk/esther ...} \maxs{moreover acceleration and corners are things}

%\esther{for us it it pointgrid, distuss more complicated options/experiment design.}

%The goal is to plan a sequence of configurations $z \in \config$, along with timing patterns, which together specify a sensing trajectory which allows the agent to disambiguate $S^*(k)$ from the rest of the environment points.

%\esther{It is important here to note that the signals themselves are actually RVS, coming from some process e.g. poisson counts or mean + gaussian noise. Should really address this in the intro also @max can you take a crack at adding this?}

% !TEX root = main_ieee.tex

\begin{algorithm}[t]
  \textbf{Input} Candidate points of interest $\calS$;
  sensing path of configurations $\config$;
  number of points of interest $k$;
  minimum measurement duration $\tau_0$;
  procedure for constructing $[\LCB_i(x), \UCB_i(x)]$ %for all rounds $i \ge 0$ and $x \in \calS_i$
  (e.g., as in Sections~\ref{sec:ptwise_ci} and~\ref{sec:physical_ci});
  Confidence parameter $\deltot$. \\ %sensitivity function $h(\cdot, \cdot)$,
  \textbf{Initialize } $\Stop_0 = \emptyset, \calS_0 = \calS$ \\
  % \textbf{If} stopping criterion satisfied, \textbf{Break} \\
  \textbf{For} rounds $i = 0, 1, 2, \dots$\\
  \Indp
  % \textbf{If } $|\Stop_i| \ge k$: \label{alg:termination_criterion}\\
  % \Indp
  % \textbf{Return} any $k$ elements of $\Stop_i$\\
  % \Indm
  % \textbf{If } $|\Stop_i| + |\calS_i| \le k$: \label{alg:termination_criterion}\\
  \textbf{If } $\calS_i = \emptyset$, \textbf{Return} $\Stop_i$ \label{alg:termination_criterion}\\
  %\esther{@max I know this isnt right but its much clearer whats going on \\}
  % and its estimated emission rate $\muhat(x)$. \\
  \textbf{Choose} configuration subset $\config_i \subseteq \config$ that is informative about environment points $x \in \calS_i$. \label{algline:design}\\
  \textbf{Execute} a trajectory along path $\trajpoints$ that spends time $\tau_i = \tau_{0} \cdot 2^i$ at each $z \in \config_i$ and time $\tau_0$ at each $z \in \config \setminus \config_i$. Meanwhile, observe signal measurements according to \eqref{eq:signal_model}. \label{algline:trajectory} \\
  %\textbf{Collect} measurements by following the planned trajectory in
  %Line~\ref{algline:trajectory}.\label{alg:collection}\\
  \textbf{Update} $\left[ \LCB_{i}(x),\UCB_{i}(x) \right]$ for all $x \in \calS$. \label{alg:update_step} \\
  \textbf{Update} Augment $\Stop_i$ according to~\eqref{eq:top_elim}, and prune $\calS_i$ according to~\eqref{eq:bot_elim}.

  \caption{$\snakeucb$} \label{snake_lucb++}
\end{algorithm}

\begin{figure}[!t] %  figure placement: here, top, bottom, or page
   \centering
   \iftoggle{ieee}{
   \includegraphics[width=\linewidth]{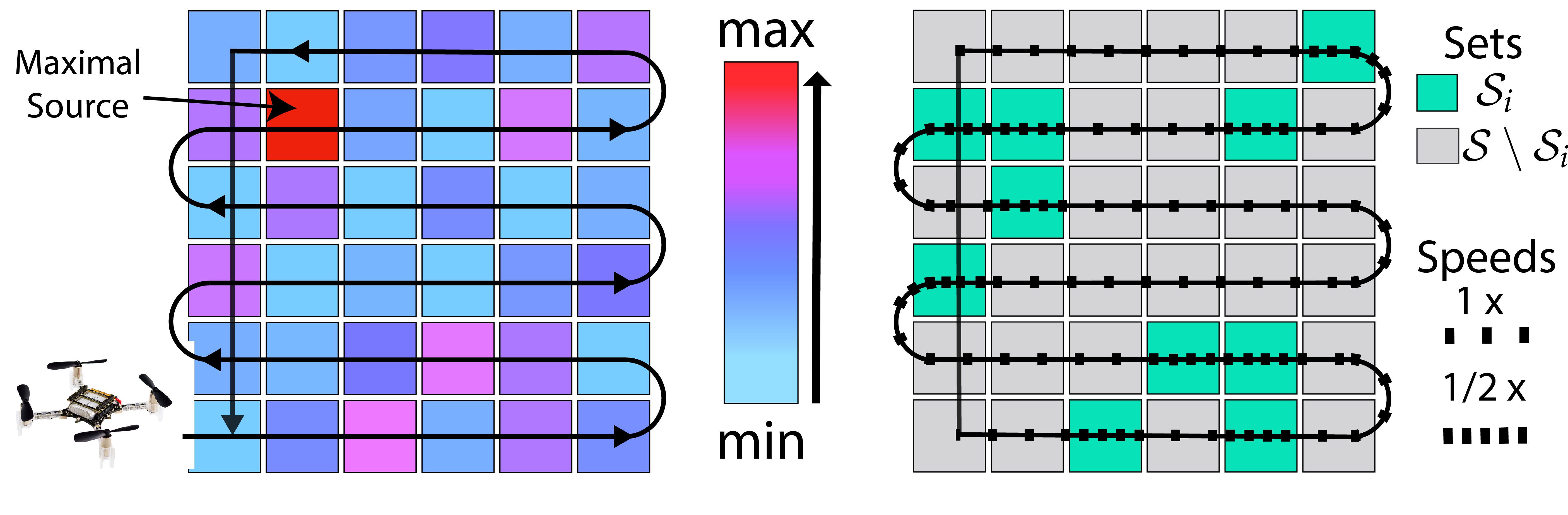}}
   {\includegraphics[width=.8\linewidth]{figures_tro/example_alg_run.png}}
  \caption{(left) Raster path $\config$ over an example grid environment of size $6 \times 6$. The path ensures that each point is sufficiently measured during each round. (right) Illustrative trajectory for round $i=1$.
    Dots indicate %approximate
     measurements. $\snakeucb$ slows to take twice as many measurements over points $x \in \calS_i$.}
    % compared with elsewhere.}
%\vspace{-1.5em}
\label{fig:snake_pattern}
\end{figure}
%\textbf{The AdaSearch algorithm.}
\subsection{The $\snakeucb$ algorithm}
$\snakeucb$ (Alg.~\ref{snake_lucb++})
concentrates measurements in regions of uncertainty until we are confident about which points belong to $S^*(k)$.
At each round $i$, we maintain a set of environment points $\Stop_i$ that we are confident
are among the top-$k$, and a set of candidate points $\calSi$ about which we are
still uncertain.
As the robot traverses the environment, new sensor measurements allow us to update
the lower and upper confidence bounds
\iftoggle{ieee}{$\left[\LCB_i(x) , \UCB_i(x) \right]$ for the mean signal $\mu(x)$ of each $x \in \calS_i$
}{
\begin{align*}
\left[\LCB_i(x) , \UCB_i(x) \right] \text{ for each } x \in \calS_i.
\end{align*}
} and prune the uncertainty set $\calS_i$.
The procedure for constructing these intervals from observations should ensure that for every $x \in \calS_i$,
$\LCB_i(x) \leq\mu(x)\leq\UCB_i(x)$ with high probability. Sections~\ref{sec:ptwise_ci} and~\ref{sec:physical_ci} detail the definition of these confidence intervals under different sensing models.
%The confidence intervals depend on samples collected over round $i$.

Using the updated confidence intervals, we expand the set $\Stop_{i+1}$ and prune the set $\calS_{i+1}$. We add to the top-set $\Stop$ all points $x \in \calS_i$ whose lower confidence bounds exceed the upper confidence bounds of all but $(k - |\Stop_i|)$ points in $\calS_i$; formally,
\iftoggle{ieee}
{
  \begin{multline}\label{eq:top_elim}
    \Stop_{i+1} \leftarrow \Stop_i \cup \{x \in \calS_i ~|~ \LCB_{i}(x) > \\
    (k - |\Stop_i| + 1)\text{-th largest } \UCB_{i}(x'), x' \in \calS_i \}.
  \end{multline}
}
{
  \begin{align}\label{eq:top_elim}
    \Stop_{i+1} &\leftarrow \Stop_i \cup \{x \in \calS_i ~|~ \LCB_{i}(x)
    > (k - |\Stop_i| + 1)\text{-th largest } \UCB_{i}(x'), x' \in \calS_i \}.
  \end{align}
}
In Eq.~\eqref{eq:top_elim}, we need not re-evaluate confidence intervals for points $x$ already in $\Stop_i$ when producing the set $\Stop_{i+1}$, and can only consider new points. This is explained in the proof of correctness (\Cref{lem:main_correctness} and \Cref{thm:snakeucb_runtime_topk_inmain}), where we show that with high probability, points $x \notin \Sstk$ are never incorrectly added to the estimate of the top set $\Stop_i$. 

Next, the points added to $\Stop_{i+1}$ are removed from $\calS_{i+1}$, since we are now certain about them.
Additionally, we remove all points in $\calS_i$ whose upper confidence bound is lower that than the lower confidence bounds of at least $k - |\Stop_{i+1}|$ points in $\calS_i$. The set $\calS_{i+1}$ is defined constructively as:
  \iftoggle{ieee}
  {
    \begin{multline}
    \label{eq:bot_elim}
    \calS_{i+1} \leftarrow \{x \in \calS_i  ~|~ x \notin \Stop_{i+1} \text{ and } \UCB_{i}(x) \ge \\ (k - |\Stop_{i+1}|)\text{-th largest }  \LCB_{i}(x'), x' \in \calS_i\}.
  \end{multline}
  }
  {
    \begin{align}
    \label{eq:bot_elim}
    \calS_{i+1} &\leftarrow \{x \in \calS_i  ~|~ x \notin \Stop_{i+1} \text{ and } \UCB_{i}(x) \ge (k - |\Stop_{i+1}|)\text{-th largest }  \LCB_{i}(x'), x' \in \calS_i\}.
  \end{align}
  }

  % \begin{align}
  %   \label{eq:bot_elim}
  %   \calS_{i+1} &\leftarrow \{x \in \widetilde{\calS}_i  ~|~  \UCB_{i+1}(x) < (k - |\Stop_{i+1}|)\text{-th largest }  \LCB_{i+1}(x'), x' \in \calS_i\},\\
  %    \text{where}&~\widetilde{\calS}_i := \calS_i \setminus \Stop_{i+1}.\nonumber.
  % \end{align}

%\textbf{Trajectory planning for $\snakeucb$.}
\subsection{Trajectory planning for $\snakeucb$}
The update rules~\eqref{eq:top_elim} and \eqref{eq:bot_elim} only depend on confidence intervals for points $x
\in \calS_i$. At each round, $\snakeucb$ chooses a subset of the sensing configurations
$\config_i \subseteq \config$ which are informative to disambiguating the points
remaining in $\calS_i$.
%The informative
%configuration subset $\config_i$ could be chosen, for example, to satisfy
%the constraint that $\textrm{rank}(H) \ge \cardS$ is full rank. %, replacing $\calS$ with $\calS_i$.
% For omni-directional sensors, choosing $\config$ and $\config_i$ is relatively straightforward (see Sec. \ref{sec:snake_path}). We discuss generalizing to more sophisticated sensors
% %as well as application-specific experiment design objectives
% in Sec.~\ref{sec:generalizations}.

$\snakeucb$ defines a trajectory by following the fixed path $\trajpoints$ over all configurations,
slowing down to spend time $2^i \tau_0$ at
 informative configurations in $\config_i$, and spending minimal time $\tau_0$ at all other configurations in $\config \setminus \config_i$.
% Specifically, at the start of round $i$, $\snakeucb$ plans its trajectory as follows:
% \begin{itemize}
% 	\item[(a)] $\snakeucb$ traverses the $z \in \config$ in the order specified by $\trajpoints$.
% 	\item[(b)] $\snakeucb$ spends time at least $\tau_i := 2^i \tau_0$ at each informative location $z \in \config_i$, and $\tau_0$ time steps at each uninformative location $z \in \config \setminus \config_i$.
% 	% \item[(b)] $\snakeucb$ starts at $\barz^{(1)} \in \trajpoints$, an only visits the $j$-th location $\barz^{(j)}$ in $\trajpoints$ after visiting the $j-1$-st location, $\barz^{(j-1)}$
% \end{itemize}
 %and chooses its locations $z_t.$
Doubling the time spent at each $z \in \config_i$ in each round
amortizes the time spent traversing the entire path $\config$. 
For omnidirectional sensors, a
simple raster pattern (Fig.~\ref{fig:snake_pattern}) suffices for $\trajpoints$ and choosing $\config_i$ is relatively straightforward (see Sec. \ref{sec:snake_path} for further discussion on choosing $\config$ and $\config_i$). Finally, we remark that one can set the time per configuration $z \in \config_i$ as $\tau_i = \tau_0 c^i$ for any constant $c > 1$; this yields similar theoretical guarantees, and constants other than $c=2$ may confer slight benefits in practice.
%More complex cases are discussed in Sec.~\ref{sec:generalizations}.

% \textbf{Visiting only $\config_i$.}
%\subsection{Optimizing informative paths}
We could also design a
trajectory that visits the $z \in \config_i$ and minimizes total travel
distance each round, e.g. by approximating a traveling salesman solution. In
practice, this would improve upon the runtime of the fixed raster path
suggested above. In this work, we use a raster pattern 
to emphasize the gains due to our main algorithmic components: global coverage and adaptive measurement
allocation.

%\textbf{Correctness.}
\subsection{Correctness}
Lemma~\ref{lem:main_correctness} establishes that the two update rules above guarantee the overall correctness of $\snakeucb$, whenever the confidence intervals $[\LCB_{j}(x),\UCB_{j}(x)]$ actually contain the correct mean $\mu(x)$:
\begin{restatable}[Sufficient Condition for Correctness]{lemma}{sufficient
    condition for correctness}\label{lem:main_correctness} For each round $i \ge 0$, $\Stop_{i} \cap \calS_i = \emptyset$.
    \iftoggle{ieee}
    {
    Moreover, whenever the confidence intervals satisfy the coverage property:
       \begin{align}\label{eq:correct_coverage}
      \forall j \le i, x\in\calS_j, \quad \mu(x) \in [\LCB_{j}(x),\UCB_{j}(x)],
      \end{align}
    }
    {
    Moreover, whenever the confidence intervals satisfy the coverage property:
      \begin{align}\label{eq:correct_coverage}
      \text{for all rounds }j \le i~ \text{ and all }x\in\calS_j, \quad \mu(x) \in [\LCB_{j}(x),\UCB_{j}(x)],
      \end{align}
    }
then $\Stop_{i+1} \subseteq \Sstk \subseteq \{\Stop_{i+1} \cup \calS_{i+1}\}$.
If~\eqref{eq:correct_coverage} holds for all rounds $i$, then $\snakeucb$ terminates
and correctly returns $\Sstk$.
\end{restatable}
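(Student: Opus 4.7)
The plan is to prove the statement by induction on the round index $i$, carrying the combined invariant that $\Stop_i \cap \calS_i = \emptyset$ together with $\Stop_i \subseteq \Sstk \subseteq \Stop_i \cup \calS_i$ under the coverage event \eqref{eq:correct_coverage}. The base case is immediate from $\Stop_0 = \emptyset$ and $\calS_0 = \calS$. Disjointness at step $i+1$ will fall out for free, since \eqref{eq:bot_elim} explicitly excises $\Stop_{i+1}$ from $\calS_{i+1}$; the substance of the inductive step lies in the two subset inclusions.

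For $\Stop_{i+1} \subseteq \Sstk$ (no spurious promotions), I would argue by contrapositive: fix $x \in \calS_i$ with $x \notin \Sstk$ and show that \eqref{eq:top_elim} cannot add $x$ to $\Stop_{i+1}$. Since $x \notin \Sstk$, there are at least $k$ points $y$ with $\mu(y) > \mu(x)$, all lying in $\Sstk \subseteq \Stop_i \cup \calS_i$ by the inductive hypothesis; at least $k - |\Stop_i|$ of them sit in $\calS_i$. Under coverage, each such $y$ satisfies $\UCB_i(y) \ge \mu(y) > \mu(x) \ge \LCB_i(x)$, so at least $k - |\Stop_i|$ points of $\calS_i \setminus \{x\}$ have $\UCB_i > \LCB_i(x)$. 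This forces $\LCB_i(x)$ to be no larger than the $(k - |\Stop_i| + 1)$-th largest UCB in $\calS_i$, so \eqref{eq:top_elim} cannot promote $x$.

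For $\Sstk \subseteq \Stop_{i+1} \cup \calS_{i+1}$ (no false eliminations), I would suppose for contradiction that some $x \in \Sstk \setminus \Stop_{i+1}$ is dropped. The inductive hypothesis places $x \in \calS_i$, and \eqref{eq:bot_elim} then supplies a set $T$ of at least $k - |\Stop_{i+1}|$ witnesses $x' \in \calS_i \setminus \Stop_{i+1}$ with $\LCB_i(x') > \UCB_i(x)$. Coverage yields $\mu(x') > \mu(x)$ for every $x' \in T$, which combined with $x \in \Sstk$ forces $T \subseteq \Sstk$. But then $\Stop_{i+1}$, $T$, and $\{x\}$ are pairwise disjoint subsets of $\Sstk$ with combined cardinality at least $|\Stop_{i+1}| + (k - |\Stop_{i+1}|) + 1 = k + 1$, contradicting $|\Sstk| = k$.

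Termination under perpetual coverage will follow from the fact that the per-visit sensing time $\tau_i = \tau_0 \, 2^i$ grows without bound, so the CI widths produced by the constructions of Secs.~\ref{sec:ptwise_ci}--\ref{sec:physical_ci} shrink to zero; combined with the assumed uniqueness of $\Sstk$, which guarantees a strictly positive gap between the $k$-th and $(k+1)$-th largest means, this forces $\calS_i = \emptyset$ within finitely many rounds and triggers the exit on line~\ref{alg:termination_criterion}. The main subtlety I anticipate is verifying the disjointness $T \cap \Stop_{i+1} = \emptyset$ invoked above: this relies on interpreting the $(k - |\Stop_{i+1}|)$-th largest LCB in \eqref{eq:bot_elim} as ranked over the surviving candidates $\calS_i \setminus \Stop_{i+1}$ rather than all of $\calS_i$, since otherwise a freshly promoted point could slip into $T$ and blunt the counting contradiction.
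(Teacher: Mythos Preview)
Your proposal is correct and follows essentially the same inductive scheme as the paper: carry $\Stop_i \cap \calS_i = \emptyset$ together with $\Stop_i \subseteq \Sstk \subseteq \Stop_i \cup \calS_i$ through the rounds, checking each inclusion from the update rules under coverage; your contrapositive and counting phrasings are equivalent to the paper's direct chains of inequalities.

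The subtlety you flag at the end is real and is in fact glossed over in the paper's own appendix argument. If rule~\eqref{eq:bot_elim} ranks LCBs over all of $\calS_i$ as literally written, a freshly promoted point's large $\LCB_i$ still participates in the ranking even though it has already been counted in $|\Stop_{i+1}|$, and this can drive the threshold high enough to eliminate a genuine member of $\Sstk$. (Concretely: $k=2$, $\Stop_i=\emptyset$, $\calS_i=\{a,b,c,d\}$ with $\mu=(10,5,4,1)$ and intervals $[9,11],[4.5,5.5],[3.5,4.5],[0.5,1.5]$; only $a$ is promoted, then the $(k-|\Stop_{i+1}|)=1$st largest LCB over $\calS_i$ is $\LCB_i(a)=9$, which eliminates $b\in\Sstk$.) Your proposed reading --- rank over $\calS_i \setminus \Stop_{i+1}$ --- is exactly what makes the lemma go through, and with it your disjoint-count contradiction is clean; the paper's appendix step ``$(k-|\Stop_{i+1}|)$-th largest $\mu$ in $\calS_i \le k$-th largest $\mu$ in $\calS_i \cup \Stop_{i+1}$'' tacitly assumes the same thing without justification.
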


\iftoggle{ieee}{
\begin{IEEEproof}(Sketch)
Non-intersection of $\Stop_i$ and $\calS_i$ follows inductively from update rule~\eqref{eq:bot_elim} and the initialization $\Stop_{0} = \emptyset, \calS_0 = \calS$.

The overlapping set property  $\Stop_{i+1} \subseteq \Sstk \subseteq \{ \Stop_{i+1} \cup \calS_{i+1} \}$ follows by induction on the round number $i$.
When $i=0$, $\Stop_{0} = \emptyset \subseteq \Sstk \subseteq \calS = \{ \Stop_0 \cup \calS_0\}$.
Now assume that $\Stop_{i} \subseteq \Sstk \subseteq \{ \Stop_{i} \cup \calS_{i}\}$ holds for round $i$.
Update rule~\eqref{eq:top_elim} moves a point $x$ from $\calS_i$ to $\Stop_{i+1}$ only if its LCB is above the $(k+1)$-th largest UCB of all points in $\calS_i \cup \Stop_i$.
By~\eqref{eq:correct_coverage}, $\LCB_i(x) \leq \mu(x) \leq \UCB_i(x)~\forall x \in \calS_i$, so that $\mu(x)$ must be greater than or equal to the $(k+1)$-st largest means of the points in $\calS_i \cup \Stop_i$. Therefore, this $x$ must belong to $\Sstk$, establishing that $\Stop_{i+1} \subseteq \Sstk$.
Similarly, by update rule~\eqref{eq:bot_elim}, a point $x'$ is only removed from $\Stop_{i} \cup \calS_{i}$ if its UCB is below the $k$ largest LCBs of points in $\calS$, such that $\mu(x')$ is less than or equal to at least $k$ other means. Thus, such a point $x'$ cannot be in $\Sstk$. This establishes that $\Sstk \subseteq \{\Stop_{i+1} \cup \calS_{i+1}\}$.

Finally, at termination we have $\calS_{\ifin} = \emptyset$, so that $\Stop_{\ifin} \subseteq \Sstk \subseteq \{\Stop_{\ifin} \cup \calS_{\ifin}\} = \Stop_{\ifin}$, so that $\Sstk = \Stop_{\ifin}$.
\end{IEEEproof}}
{ % arxiv version
  \begin{proof}(Sketch)
\footnote{For an extended version of the proofs presented in this paper, see the appendix.}
%}
Non-intersection of $\Stop_i$ and $\calS_i$ follows inductively from update rule~\eqref{eq:bot_elim} and the initialization $\Stop_{0} = \emptyset, \calS_0 = \calS$.

The overlapping set property  $\Stop_{i+1} \subseteq \Sstk \subseteq \{ \Stop_{i+1} \cup \calS_{i+1} \}$ follows by induction on the round number $i$.
When $i=0$, $\Stop_{0} = \emptyset \subseteq \Sstk \subseteq \calS = \{ \Stop_0 \cup \calS_0\}$.
Now assume that $\Stop_{i} \subseteq \Sstk \subseteq \{ \Stop_{i} \cup \calS_{i}\}$ holds for round $i$.
Update rule~\eqref{eq:top_elim} moves a point $x$ from $\calS_i$ to $\Stop_{i+1}$ only if its LCB is above the $(k+1)$-th largest UCB of all points in $\calS_i \cup \Stop_i$.
By~\eqref{eq:correct_coverage}, $\LCB_i(x) \leq \mu(x) \leq \UCB_i(x)~\forall x \in \calS_i$, so that $\mu(x)$ must be greater than or equal to the $(k+1)$-st largest means of the points in $\calS_i \cup \Stop_i$. Therefore, this $x$ must belong to $\Sstk$, establishing that $\Stop_{i+1} \subseteq \Sstk$.
Similarly, by update rule~\eqref{eq:bot_elim}, a point $x'$ is only removed from $\Stop_{i} \cup \calS_{i}$ if its UCB is below the $k$ largest LCBs of points in $\calS$, such that $\mu(x')$ is less than or equal to at least $k$ other means. Thus, such a point $x'$ cannot be in $\Sstk$. This establishes that $\Sstk \subseteq \{\Stop_{i+1} \cup \calS_{i+1}\}$.

Finally, at termination we have $\calS_{\ifin} = \emptyset$, so that $\Stop_{\ifin} \subseteq \Sstk \subseteq \{\Stop_{\ifin} \cup \calS_{\ifin}\} = \Stop_{\ifin}$, so that $\Sstk = \Stop_{\ifin}$.
\end{proof}

}

%arXiv proof is in appendix

Lemma \ref{lem:main_correctness} provides a backbone upon which we
construct a probabilistic correctness guarantee in Sec.~\ref{sec:analysis}.
If the event~\eqref{eq:correct_coverage} holds  over all rounds with some probability $1 -
\deltot$, then
$\snakeucb$ returns the correct set $\Sstk$ with the same probability $1 -
\deltot$.

% !TEX root = main_ieee.tex
\section{Radioactive Source-Seeking with Poisson Emissions}
\label{sec:methods}

While $\snakeucb$ applies to a range of adaptive sensing problems,
for concreteness we now refine our focus to the problem of radioactive
source-seeking ($\RSS$)  with an omnidirectional %, perfectly absorbing
sensor. % onboard a quadrotor helicopter.
The environment is defined by
potential emitter locations which lie on the ground plane, i.e. $x \in \calS
\subset \R^2 \times \{0\}$, and sensing configurations encode spatial position,
i.e. $z \in \config \subset \R^3$. Environment points
emit gamma rays according to a Poisson process, i.e. $\signal_t(x) \sim
\Poi\left(\mu(x)\right)$. Here, $\mu(x)$ corresponds to rate or intensity of emissions from point $x$.

Thus, the number of gamma rays observed over a time interval of
length $\tau$ from configuration $z$ has distribution
\begin{eqnarray}\label{eq:Poisson_sensing}
   \signaly_t(z) \sim \Poi\Big( \tau \cdot \sum_{x \in \calS} h(x, z) \mu(x)\Big)~,
\end{eqnarray}
%\vspace{-.25em}
\noindent where  $h(x, z)$ is specified by the sensing model.
In the following sections, we introduce two sensing models: a pointwise sensing
model amenable to theoretical analysis (Sec.~\ref{sec:ptwise_ci}), and a more
physically realistic sensing model for experiments (Sec.~\ref{sec:physical_ci}).

In both settings, we develop appropriate confidence intervals for use in the
$\snakeucb$ algorithm. We introduce the specific path used for global
trajectory planning in Sec.~\ref{sec:snake_path}.
Finally, we conclude with two benchmark algorithms to which we compare
$\snakeucb$ (Sec.~\ref{sec:baseline}).

\subsection{Pointwise sensing model\label{sec:ptwise_ci}}

First, we consider a simplified sensing model, where the set of sensing locations
$\config$ coincides with the set $\calS$ of all emitters, i.e. each $z \in
\config$ corresponds to exactly one $x \in \calS$ and vice versa. The sensitivity function is defined as
${
h(x,z) := \I(x = z)} = [1 \textrm{ if } x = z, 0 \textrm{ if } x \not= z]$.

Now we derive confidence intervals for Poisson counts observed according to this sensing model.
Define $\cnt(x)$ to be the total number
of gamma rays observed during the time interval of length $\tau$ spent at $x$.
The maximum
likelihood estimator (MLE) of the emission rate for point $x$ is $\muhat(x) =
\frac{\cnt(x)}{\tau}$. \iftoggle{ieee}{Using standard bounds for Poisson tails \cite{boucheron2013concentration}, we introduce}{In Appendix~\ref{sec:main_theory_results}, we introduce the}
\textit{bounding functions} $U_-(\cdot,\cdot)$ and $U_+(\cdot,\cdot)$:
 \begin{align*}
    U_+\left(\cnt,\delta\right)&:= 2 \log(1/\delta) + \cnt + \sqrt{2\cnt \log\left(1/\delta\right)} ~~\mathrm{and} \\
    U_-\left(\cnt,\delta\right) &:= \max\left\{0,\cnt  - \sqrt{2\cnt\log(1/\delta)}\right\}
  \end{align*}
Then for any $\lambda \ge 0$, $\cnt \sim \Poi(\lambda)$, and $\delta \in (0,1)$,
\begin{align*}
\Pr[U_-(\cnt,\delta) \le \lambda \le
  U_+(\cnt,\delta)] \ge 1 -2\delta.
 \end{align*}
Let $\cnt_i(x)$ denote the number of gammas rays observed from emitter $x$ during round $i$, so that  ${\cnt(x) \sim \Poi(\tau_i \mu(x))}$.
For any point $x \in \calS_i$, the corresponding duration of measurement would be $\tau_i$.
The bounding functions above provide the desired
confidence intervals for signals $\mu(x), \forall x \in \calS_i$:
\iftoggle{ieee}
{
%   \begin{align}
%   \LCB_i(x) &:= \frac{1}{\tau_i}U_-\left(\cnt_i(x), \delta_i\right)~ \nonumber\\
%   \UCB_i(x) &:=  \frac{1}{\tau_i} U_+\left(\cnt_i(x),\delta_i \right)~,
%   \label{eqn:lcb_ucb}
% \end{align}
$\left[\LCB_i(x),\UCB_i(x)\right] := \frac{1}{\tau_i}\left[U_-\left(\cnt_i(x), \delta_i\right),
   U_+\left(\cnt_i(x),\delta_i \right)\right]$.
}
{
  \begin{eqnarray}
  \LCB_i(x) := \frac{1}{\tau_i}U_-\left(\cnt_i(x), \delta_i\right)~,\quad~
  \UCB_i(x) :=  \tfrac{1}{\tau_i} U_+\left(\cnt_i(x),\delta_i \right)~,
  \label{eqn:lcb_ucb}
\end{eqnarray}
}
%where $\delta_i := \epsilon/{(4|\calS| i^2)}$.  T
This bound implies that the inequality ${\tau_i
  \LCB_i(x) \le \tau_i \mu(x) \le \tau_i \UCB_i(x)}$ holds with probability
$1 - 2\delta_i$.
% = {1~-~\delta/\left(2|\calS| i^2\right)}$.
Dividing by $\tau_i$, we see that
$\LCB_i(x)$ and $\UCB_i(x)$ are valid confidence bounds for
$\mu(x)$.

The term $\delta_i$ can be thought of as an ``effective confidence''
for each interval that we construct during round $i$.
In order to achieve the correctness in Lemma~\ref{lem:main_correctness} with
overall probability $1-\deltot$, we set the effective confidence $\delta_i$ at each round to be $\delta_i = \deltot/{(4|\calS| i^2)}$\iftoggle{ieee}{}{(see Appendix~\iftoggle{ieee}{\zref{sec:egood:proof}}{\ref{sec:egood:proof}})}.

\subsection{Physical sensing model\label{sec:physical_ci}}
A more physically accurate sensing model for RSS reflects that %, in general,
the gamma ray counts at each location are sensitivity-weighted combinations of emissions from each environment point.
Conservation of energy in free space allows us to approximate the sensitivity with an inverse-square law $h(x,z) := c/ \|x - z\|_2^2$, with $c$ a known, sensor-dependent constant.
More sophisticated approximations are also possible \cite{ristic2010information}.

Because multiple environment points $x$ contribute to the counts observed from
any  sensor position $z$, the MLE $\hat \mu$
for the emission rates at all $x \in \calS$ is difficult to compute efficiently. However, we can
approximate it in the limit: $ \frac{1}{\sqrt{\tau}}\Poi(\tau\mu)
\overset{d}{\rightarrow} \normal{\mu, \mu} $ as $\tau \rightarrow \infty$. Thus, we may compute $\hat{\mu}$ as the least squares solution:
\begin{equation}
  \label{eqn:lstsq}
  \hat{\mu} = \arg\min_{\vec{\mu}} \| \tilde H^T \vec{\mu} - \vec{\signaly} \|_2^2~,
\end{equation}
where $\vec{\mu} \in \RR^{|\calS|}$ is a vector representing the mean emissions from each $x \in \calS$,  $\vec{\signaly} \in \RR^m$ is a vector representing the observed number of counts at each of $m$ consecutive time intervals, and $\tilde H \in \RR^{|\calS| \times m}$ is a
rescaled sensitivity matrix such that $\tilde H_{ij}$ gives the measurement-adjusted sensitivity of the $i^{th}$ environment point to the sensor at the $j^{th}$ sensing position.\footnote{Specifically, we define
$\tilde H_{ij} = h(x_i, z_j) / (\signaly_j + b)$.
The rescaling term $\signaly_j + b$ is a plug-in
estimator for the variance of $\signaly_j$ (with small bias $b$ introduced
for numerical stability), which down weights  higher variance measurements.}
 The resulting confidence bounds are given by the
standard Gaussian confidence bounds:
\iftoggle{ieee}
{
  \begin{equation}
  \label{eqn:cis}
  [\LCB_i(x_k), \UCB_i(x_k)] := \hat{\mu}(x_k) \pm \alpha(\delta_i) \cdot\Sigma^{1/2}_{kk}~,
\end{equation}
where  $\Sigma := (\tilde H \tilde H^T)^{-1}$, and
}
{
  \begin{equation}
  \label{eqn:cis}
  [\LCB_i(x_k), \UCB_i(x_k)] := \hat{\mu}(x_k) \pm \alpha(\delta_i) \cdot\Sigma^{1/2}_{kk}~ \quad\textnormal{where\ } \Sigma := (\tilde H \tilde H^T)^{-1} \enspace
\end{equation}
}
$\alpha(\delta_i)$ controls the round-wise effective confidence widths in equation \eqref{eqn:cis} as a function of the desired threshold probability of overall error, $\deltot$.
%We remark that $\Sigma$ implicitly captures the effect of the plug-in mean estimate $y_i$ on the variance estimate.
We use a Kalman
filter to solve the least squares problem \eqref{eqn:lstsq}
and compute the confidence intervals \eqref{eqn:cis}.

% Concretely, we approximate each new sensor measurement (photon count) $y \in
% \mbb{N}$ as approximately Gaussian, i.e. $y \sim \normal{\expect{y},
% \expect{y} + b}$.\footnote{We use a plug-in estimator for $\expect{y}$, i.e.
% $\expect{y} \approx y$, and set the small bias term $b = 10$ to improve
% numerical stability.}

\subsection{Design and planning for $\snakeucb$ : choosing $\config$ and $\config_i$}\label{sec:snake_path}

\textbf{Pointwise sensing model.} In the pointwise sensing model, $\config = \calS$ and the most informative sensing locations  $\config_i$ at round $i$ are precisely $\calS_i$.
We therefore choose the path $\trajpoints$ to be a simple space filling curve
over a raster grid, %depicted in Fig.~\ref{fig:snake_pattern},
which provides coverage of all of $\calS$. We
adopt a simple dynamical model of the quadrotor in which it can fly at up to a
pre-specified top speed, and where acceleration and deceleration times are
negligible. This model is suitable for large outdoor environments where
travel times are dominated by movement at maximum speed. We denote this maximum speed as $\tau_0$.
%Thus, the $\snakeucb$ trajectory planned at round $i$ will spend time $\tau_i =
%\tau_0 2^i$ at each $z \in \config_i$, and spends time $\tau_0$ at each $z \in
%\config \setminus \config_i$.
Figure~\ref{fig:snake_pattern} shows
an example environment with raster path $\config$ overlaid (left) and trajectory during round $i=1$
with $\config_1$ shown in teal (right).

\textbf{Physical sensing model.}
Because the physical sensitivity follows an inverse-square law, the most informative measurements about $\mu(x)$
are those taken at locations near to $x$. % (this also $H$ is well-conditioned).
We take measurements at points $z \in \R^3$
two meters above points $x \in \calS$ on the ground plane.
Flying at relatively low height improves the conditioning of the sensitivity matrix~$H$.
%Flying only at very high altitudes may cause $H$ to become ill-conditioned.
We use the same design and planning
strategy as in the pointwise model, following the raster pattern depicted in
Fig.~\ref{fig:snake_pattern}. More generally, one should chose configurations
  $z_i$ from $\config_i$ so that environment point $x_i$ on the ground below
  each is still in
the set of environment points we are unsure about ($\calS_i$ in Eq.~\ref{eq:bot_elim}).

\subsection{Baselines\label{sec:baseline}}
We compare $\snakeucb$ to two baselines: a uniform-sampling based algorithm $\naivesnake$, and a spatially-greedy information maximization algorithm $\infomax$.

\textbf{NaiveSearch algorithm.} As a non-adaptive baseline, we consider a
uniform sampling scheme that follows the raster pattern in
Fig.~\ref{fig:snake_pattern} at constant speed. This global $\naivesnake$ trajectory
results in measurements uniformly spread over the grid, and avoids
redundant movements between sensing locations. The only difference between
$\naivesnake$ and $\snakeucb$ is that $\naivesnake$ flies at a constant speed,
while $\snakeucb$ varies its speed.
% in response to measurements it has
%collected so far.
Comparing to $\naivesnake$ thus separates the advantages of $\snakeucb$'s
adaptive measurement allocation from the effects of its global trajectory heuristic.
Theoretical analysis in Sec.~\ref{sec:analysis} considers a slight variant in which the sampling time is doubled at each round. This doubling has theoretical benefits, but for all experiments we implement the more practical fixed-speed baseline.

\textbf{InfoMax algorithm.} As discussed in Sec. \ref{sec:related_work}, one
of the most successful methods for active search in robotics is receding horizon informative
path planning, e.g. \cite{marchant2014bayesian, martinez2009bayesian}.
% Ideally, such planning schemes lead to trajectories that spend most of their
% time in regions where measurements reveal information about the parameters to
% be estimated. Because measurements collected in higher-emission regions have
% greater variance under the Poisson model, we expect $\infomax$ to spend the
% more of its time in these promising regions. However, the myopic planning
% horizon can cause the planner to be inefficient over long time scales.
% Comparing this class of methods to $\naivesnake$ and $\snakeucb$ helps to
% disambiguate between the effects of receding horizon control (vs. global
% heuristic planning) and those of global uncertainty reduction (vs. successive
% elimination).
We implement $\infomax$, a version of this approach based on
\cite{marchant2014bayesian} and specifically adapted for RSS. %and summarized in Alg. \ref{alg:info_max}.
Each planning invocation solves an information maximization
problem over the space of trajectories ${\xi : [t, t + T_{\text{plan}}] \To
  \mc{B}}$ mapping from time in the next $T_{\text{plan}}$ seconds to a box
$\mc{B} \subset \mbb{R}^3$.
% These trajectories are encoded by second-order Bezier curves, which are
% paramaterized by the current position $p(t)$ of the sensor at time $t$ (which
% is known) and by two control points $p_1, p_2 \in \mc{B}$ for a total of six
% parameters.\footnote{Since the radiation sensor is directionally invariant,
% there is no need to optimize for orientation.} For clarity, we will denote
% these parameters as subscripts, e.g. $\tau_{p(t), p_1, p_2}(\cdot)$. Bezier
% curves provide a relatively low-dimensional representation of smooth
% trajectories, which are easier for a quadrotor to track, and they also provide
% a natural mechanism for encoding a velocity profile along the trajectory.

We measure the information content of a candidate trajectory $\xi$ by
accumulating the sensitivity-weighted variance at each grid point $x \in \calS$
at $N$ evenly-spaced times along $\xi$, i.e.
\begin{equation}
  \label{eqn:info_max}
  \xi^*_t = \arg \max_{\xi}  \sum_{i = 1}^N  \sum_{j = 1}^{|\calS|} \Sigma_{jj} \cdot h\left(x_j,\xi(t + T_{\text{plan}} i / N)\right) \enspace .
\end{equation}
This objective favors taking measurements sensitive to regions with high
uncertainty. As a consequence of the Poisson emissions model, these regions will
also generally have high expected intensity $\mu$; therefore we expect this
algorithm to perform well for the RSS task. We parameterize
trajectories $\xi$ as Bezier curves in $\RR^3$, and use Bayesian
optimization (see \cite{martinez2014bayesopt}) to solve \eqref{eqn:info_max}.
% because of the high computational cost of evaluating the objective
%function.
Empirically, we found that Bayesian optimization outperformed
both naive random search and a finite difference gradient method. We set
$T_{\text{plan}}$ to 30 s and used second-order Bezier curves.
%Longer time
%horizons and higher order Bezier curves quickly become intractable in real time.
% In particular, populating the values of the measurement matrix $\mbf{h}$ at
% each time $\{ t + \frac{i}{N} T_{\text{plan}} \}_{i=1}^N$ requires evaluating
% the sensor's sensitivity to every pixel in the grid.

% \subsubsection{$\naivesnake$}
% \label{subsubsec:uniform_sampling}

% simple scheme for solving our problem is to sample the space uniformly until
% some stopping criterion is satisfied. Since we represent the space as a grid,
% one natural approach to achieving such a uniform sample is to fly in a raster
% (`snake') pattern over the grid at the minimum allowable height.

% \subsection{Stopping Criteria and Metrics}

\textbf{Stopping criteria and metrics.} All three algorithms
use the same stopping criterion, which is satisfied when the $k^{th}$ highest LCB exceeds
the $(k+1)^{th}$ highest UCB.
For $k=1$ emitter, this corresponds to the first round
$i$ in which
$\LCB_i(x) > \UCB_i(x'), \forall x' \in \calS \setminus \{x\}$ for some
environment point $x$.
%\footnote{Confidence bounds over the environment are computed for each algorithm at the same rate, according to the confidence intervals given \eqref{eqn:cis}.}
For sufficiently small probability of error $\deltot$, this
ensures that the top-$k$ sources are almost always correctly identified by all
algorithms.
%/; they are always correctly identified in all experiments in Sec. \ref{sec:results}.

% This gives a global measure of the accuracy of the predicted source emission
% rates in the grid.

% We expect the information gain approach to be superior with respect to this
% metric since it is based on a global information criterion rather, whereas our
% bandit-based algorithm takes measurements only in order to distinguish between
% the two most promising regions.
% \item Euclidean distance of estimated extremal point to true source point, in
%   Euclidean distance. This is a measure of how well we localize the source. We
%   expect both methods to perform well with respect to this metric since they
%   should both find the source eventually.

%   We expect our algorithm to outperform the baseline with respect to this
%   metric since the bandit logic should choose to sample the source pixel most
%   frequently, whereas the baseline information maximization approach will
%   cover the space more evenly.

%   We expect the bandit algorithm to run significantly more quickly than the
%   baseline.

%   \gencomment{We don't have $\alpha$ anymore...}

%!TEX root = main_ieee.tex
\section{Theoretical Runtime and Sampling Analysis}
\label{sec:analysis}
Separation of sample-based planning and a repeated global trajectory make $\snakeucb$ particularly amenable to runtime and sample complexity analysis. We analyze $\snakeucb$ and $\naivesnake$ under the pointwise sensing model from Sec.~\ref{sec:ptwise_ci}.
%, as they are more amenable
%to such analysis than $\infomax$.
Runtime and sample guarantees are given in Theorem~\ref{thm:snakeucb_runtime_topk_inmain},
with further analysis for a single source in Corollary~\ref{prop:k_equal_one_results} to complement experiments.
 \iftoggle{ieee}{}{Proofs, along with general results for arbitrary $k$, and complimentary lower bounds, are presented in Appendix~\ref{sec:main_theory_results}.}
%Both $\snakeucb$ and $\naivesnake$ correctly identify the top-$k$ sources
%with high probability.
% and Corollary~\ref{prop:k_equal_one_results}sharply quantifies the relative advantage of$\snakeucb$ over $\naivesnake$ in terms of the distribution of the background radiation.
Simulations (Sec.~\ref{sec:results}) show that our theoretical results are indeed predictive of the relative performance of $\snakeucb$ and $\naivesnake$.

We analyze $\snakeucb$ with the trajectory planning strategy outlined in
Sec.~\ref{sec:snake_path}. For $\naivesnake$, the robot spends time $\tau_i$ at
each point in each round $i$ until termination, which is determined by the same
confidence intervals and termination criterion for $\snakeucb$.

We will be concerned with the \emph{total runtime}.
Recall that $\tau_0$ is the time spent over any point when the robot is moving at maximum speed; $\tau_i$ is the time spent sampling candidate points at the slower speed of round $i$.
\begin{align}
\Truntime &= \begin{cases}\sum_{i = 0}^{\ifin} \left(\tau_i|\calS_i| + \tau_0|\calS \setminus \calS_i|\right)  & (\snakeucb)\\
\sum_{i = 0}^{\ifin} \tau_i|\calS| & (\naivesnake)
\end{cases},
\label{eq:runtime_defs}
\end{align}
where $\ifin$ is the round at which the algorithm terminates.
Bounds are stated in terms of divergences between emission rates $\mu_2 \geq \mu_1 > 0$:
\begin{align*}
d(\mu_1, \mu_2) = {(\mu_2 - \mu_1)^2}/{\mu_2}~.
\end{align*}
These divergences approximate the $\KL$-divergence between distributions $\Poi(\mu_1)$
and $\Poi(\mu_2)$\iftoggle{ieee}{}{(see Appendix~\ref{sec:main_theory_results})}, and hence the sample complexity of distinguishing between points emitting photons at rates $\mu_1, \mu_2$.
Analogous divergences are available for any exponential family, for example Gaussian distributions where the divergences are symmetric.% in $\mu_1, \mu_2$.

To achieve the termination criterion (when $\Sstk$ is determined with confidence $1-\deltot$), all points with emission rate below the lowest in $\Sstk$ must be distinguished from $\mu^{(k)}$, the lowest emission rate of points in $\Sstk$. Therefore, for points $x \not\in \Sstk$, we consider divergences $\divergbot$.
%, describing how close $\mu(x)$ is to the mean $\mu^{(k)}$ of the point in $\calS^*(k)$ from which it is hardest to distinguish.
Similarly, all points in $\Sstk$ must be distinguished from the highest background emitter corresponding to the divergences $\divergtop$,
% \begin{align*}
%   \divergtop &= \frac{(\mu(x) - \mu^{(k+1)})^2}{\mu(x)},~(\mu(x) > \mu^{(k+1)})  \quad \text{and } \quad\\%\\\text{ for}~ \mu(x) > \mu^{(k+1)} \quad \text{and}\\
%   \divergbot &= \frac{(\mu^{(k)} - \mu(x))^2}{\mu^{(k)}},\quad (\mu(x) < \mu^{(k)}).%,\text{ for}~ \mu(x) < \mu^{(k)}
% \end{align*}
describing how close  $\mu(x)$ is to
the mean rate of the highest background emitter.

\begin{theorem}(Sample and Runtime Guarantees).
 \label{thm:snakeucb_runtime_topk_inmain}
  Define the general adaptive and uniform sample complexity terms $\Hadaptk$ and $\Hunifk$:
  %for any integer $k \geq 1$:
\begin{align}
\label{eq:sample_complexity_k}
\Hadaptk &:= |S|\tau_0 + \sum_{x \in \calS^*(k)} \tfrac{1}{\divergtop} + \sum_{x
              \in \calS \setminus \calS^*(k)} \tfrac{1}{\divergbot} \nonumber\\
\Hunifk &:= |\calS| \tau_0 + |\calS| \frac{1}{\diverg(\mu^{(k+1)},\mu^{(k)})}
\end{align}

$\Hadaptk \geq \Hunifk$ for any integer number of sources $k \geq 1$ and any  distribution of emitters. For any $\deltot \in (0,1)$, the following hold each with probability at least $1-\deltot$:~\footnote{$\BigOhTil(\cdot)$ notation suppresses doubly-logarithmic factors.}  \\
(i)
$\snakeucb$ correctly returns $\Sstk$,
  with runtime at most
   \begin{align*}
    \Truntime(\snakeucb) \leq~& \Hadaptk \cdot
    %\Tsample + \cardS \log_+ \left(\frac{\log_+\tfrac{\cardS}{\delta}}{\diverg(\mu^{(k+1)},\mu^{(k)})} \right) ~=~
    \BigOhTil\left( \log ({\cardS}/ {\deltot})\right) \\
    &\quad+\BigOhTil\left(\cardS \log_+\left( {\Hunifk}/ {\cardS}\right)\right)~.
    %&=& \calO\left(\Hadapt\log(M/\delta) + \tau_0 M^2 \log(\Hunif)\right)~.
  \end{align*}
  (ii) $\naivesnake$ correctly returns $\Sstk$ with runtime bounded by
  \begin{eqnarray*}
   \Truntime(\naivesnake) \leq  \Hunifk \cdot \BigOhTil\left(\log(\cardS/\deltot)\right)~.
  \end{eqnarray*}
  \end{theorem}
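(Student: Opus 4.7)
The plan is to establish all three claims under a single high-probability ``good event'' on which every confidence interval constructed by either algorithm contains its corresponding true mean, and then to convert the shrinkage of those intervals into per-point elimination rounds. First I would define $\Egood$ to be the event that for every round $i \geq 1$ and every $x \in \calS$, $\mu(x) \in [\LCB_i(x), \UCB_i(x)]$. By the Poisson tail bounds defining $U_+$ and $U_-$, each interval fails with probability at most $2\delta_i$; with the choice $\delta_i = \deltot/(4|\calS| i^2)$ announced in Sec.~\ref{sec:ptwise_ci}, a union bound over $x$ and $i$ yields $\Pr[\Egood] \geq 1 - \deltot$. On $\Egood$, Lemma~\ref{lem:main_correctness} immediately gives that both algorithms return $\Sstk$ at termination, handling the correctness halves of (i) and (ii). The comparison between the two complexity terms (which I read as $\Hadaptk \le \Hunifk$, inverting the direction of the printed inequality since only this direction is consistent with the subsequent runtime bounds) follows termwise from the monotonicity of $d(\cdot,\cdot)$: for $x \in \Sstk$, $\mu(x) \ge \mu^{(k)}$ gives $d(\mu^{(k+1)},\mu(x)) \ge d(\mu^{(k+1)},\mu^{(k)})$, and for $x \notin \Sstk$, $\mu(x) \le \mu^{(k+1)}$ gives $d(\mu(x),\mu^{(k)}) \ge d(\mu^{(k+1)},\mu^{(k)})$.

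Next I would compute, under $\Egood$, an elimination round for each point. Using the explicit forms of $U_\pm$, the width of the round-$i$ interval for a point of rate $\mu$ is at most $O(\sqrt{\mu \log(1/\delta_i)/\tau_i} + \log(1/\delta_i)/\tau_i)$. The update rules \eqref{eq:top_elim} and \eqref{eq:bot_elim} fire for $x$ as soon as the sum of its interval width and the width at the relevant boundary point (the one of rate $\mu^{(k)}$ if $x \in \Sstk$, of rate $\mu^{(k+1)}$ otherwise) drops below the corresponding gap. A short calculation using $\tau_i = \tau_0 2^i$ then gives that the elimination round $i_x$ satisfies $\tau_{i_x} = O(\tau_0) + O(\log(1/\delta_{i_x})/d_x)$, where $d_x$ denotes $\divergtop$ or $\divergbot$ as appropriate.

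With these per-point rounds in hand, the runtime analysis becomes a bookkeeping exercise on \eqref{eq:runtime_defs}. For $\snakeucb$, the slow-sampling contribution $\sum_i \tau_i |\calSi|$ rewrites as $\sum_x \sum_{i \le i_x} \tau_i \le 2 \sum_x \tau_{i_x}$ by the geometric doubling of $\tau_i$; plugging in the bound on $\tau_{i_x}$ and replacing $\log(1/\delta_{i_x})$ by $O(\log(|\calS|/\deltot) + \log i_x)$ produces the first term $\BigOhTil(\Hadaptk \log(|\calS|/\deltot))$ once the $\log i_x$ residuals are absorbed into $\BigOhTil$. The fast-traversal contribution $\sum_i \tau_0 |\calS \setminus \calSi| \le \tau_0 |\calS| \ifin$ is handled by the crude bound $\ifin = O(\log_+(\Hunifk/|\calS|))$, which comes from observing that the algorithm cannot persist past the round at which even the worst pair $(\mu^{(k+1)},\mu^{(k)})$ is separated. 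For $\naivesnake$, every point is sampled for $\tau_i$ in each round, and termination forces $\tau_{\ifin}$ to meet the distinguishability threshold for that same worst pair, producing $\Truntime \le 2\tau_{\ifin}|\calS| = \BigOhTil(\Hunifk \log(|\calS|/\deltot))$.

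The main obstacles I anticipate are: (a) tracking the $\log(1/\delta_i)$ factors cleanly through the elimination-round calculation, since $\delta_i$ itself depends on $i$ and hence implicitly on $\tau_{i_x}$, so that the doubly-logarithmic residuals must be swept into $\BigOhTil$; and (b) securing the $\log_+$ (rather than plain $\log$) in the additive traversal term, which requires separately handling the trivial regime in which $\Hunifk \lesssim |\calS|\tau_0$, where the algorithm terminates after essentially one sampling round and the traversal overhead is already subsumed into the leading $|\calS|\tau_0$ sitting inside $\Hadaptk$.
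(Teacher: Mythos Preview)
Your proposal is correct and follows essentially the same route as the paper: define a good event via a union bound over rounds and points with $\delta_i = \deltot/(4|\calS|i^2)$, invoke Lemma~\ref{lem:main_correctness} for correctness, convert confidence-interval widths into per-point elimination rounds $i_x$, and then sum geometrically for $\snakeucb$ while taking the worst $i_x$ for $\naivesnake$. The one place the paper is slightly more explicit is that its good event bundles in \emph{deterministic outer intervals} $[\LCBbar_i(x),\UCBbar_i(x)]$ (depending only on $\mu(x)$, not $\cnt_i(x)$) that sandwich the random $[\LCB_i,\UCB_i]$; this comes at no extra probability cost and gives the width bound you need without first inverting the quadratic relation between $\cnt_i(x)$ and $\mu(x)$ that your coverage-only $\Egood$ would require---so when you carry out step two, you may find it cleanest to strengthen $\Egood$ in this way.
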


\iftoggle{ieee}{\begin{IEEEproof}}{\begin{proof}}
(Sketch) The runtimes~\eqref{eq:runtime_defs} of each algorithm depend on how quickly we can reduce the set $S_i$ in each round. For each point $x$, let $\ifin(x)$ denote the round at which $\snakeucb$ removes $x$ from $\calS_i$; at this point we are confident as to whether or not $x$ is in $\Sstk$, so we do not sample it on successive rounds. At round $i$, we spend time $\tau_i = 2^i$ sampling each point still in $\calS_i$, so that we spend $\sum_{x \in \calS} \sum_{i=0}^{\ifin(x)}\tau_i \leq \sum_{x \in \calS} 2^{1 + \ifin(x)}$ time sampling $x$ throughout the run of the algorithm. For $\naivesnake$, we sample all points in all rounds, so we spend time $\leq |\calS| \max_{x \in \calS} 2^{1 + \ifin(x)}$ sampling.

Now we bound $\ifin(x)$ for each algorithm. These quantities depend on the estimated means $\hat{\mu}_i(x)$. Using the concentration bounds that informed the bounding functions in Sec.~\ref{sec:ptwise_ci}, we can form deterministic bounds ${[\LCBbar_i, \UCBbar_i]}$ that depend only on the true means $\mu(x)$. We choose these to encompass the algorithm confidence intervals, so that: $\LCBbar_i(x) \leq \LCB_i(x) \leq \mu(x) \leq \UCB_i(x) \leq \UCBbar_i(x)$ with high probability. If each of these inequalities holds with probability $\deltot/(4|\calS|i^2)$, then a union bound gives that the probability of failure of any inequality over all rounds is at most $\deltot$. By Lemma~\ref{lem:main_correctness}, this ensures correctness with probability at least $ 1-\deltot$.

Because $\LCBbar_i(x)$ and $\UCBbar_i(x)$ are deterministic given $\mu(x)$ and are contracting to $\mu(x)$ nearly geometrically in $i$, we can bound $\ifin(x)$ by inverting the intervals to find the smallest integer $i$ such that
$\LCBbar_i(x^*) > \UCBbar_i(x)$ for all $x^* \in \Sstk$ and $x \in \calS \setminus \Sstk$.
This requires an inversion lemma from the best arm identification literature (Eq.~(110) in \cite{simchowitz2016best}). The specific forms of $\LCBbar_i$ and $\UCBbar_i$ yield the bounds on $\ifin(x)$ in terms of approximate KL divergences, which are added across all environment points to obtain the sample complexity terms for each algorithm in~\eqref{eq:sample_complexity_k}.

The form of $\Hunifk$ results from noting that the function $(a,b) \mapsto \frac{a}{(a - b)^2}$ is decreasing in $a$ and increasing in $b$ for $a > b$, and therefore
$
\max\left\{\max_{x \in \calS^*(k)} \frac{1}{\divergtop},    \max_{x \in \calS \setminus \calS^*(k)} \frac{1}{\divergbot}\right\} $ $= 1/\diverg(\mu^{(k+1)},\mu^{(k)}).
$
\iftoggle{ieee}{\end{IEEEproof}}{\end{proof}}

The $\BigOhTil\left(\log(|\calS|/\deltot)\right)$ term in the $\snakeucb$ runtime bounds accounts for travel times of transitioning between measurement configurations. The second term $|\calS| \log(\Hunif / |\calS|)$
%in the expression for $\Truntime(\snakeucb)$
accounts for the travel time of traversing the uninformative points in the global path $\trajpoints$ at a high speed.
This term is never larger than $\Truntime(\naivesnake)$ and is typically dominated by ${\Hadapt\cdot \BigOhTil\left(\log(|\calS|/\deltot)\right)}$.
With a uniform strategy, runtime scales with the \emph{largest value} of $1/d(\mu(x_1), \mu(x_2))$ over $x_1 \not\in
\Sstk, x_2 \in \Sstk$ because that quantity alone determines the number of rounds required. In
contrast, $\snakeucb$ scales with the \emph{average} of $1/d(\mu(x_1), \mu(x_2))$ because it dynamically chooses
which regions to sample more precisely.

In many scenarios, the number $k$ may
% rough
estimate the number of hotspots, or there may be more than $k$ sources with similarly high emissions. The extreme case is when emissions $\mu^{(k)}$ and $\mu^{(k+1)}$ are \emph{equal}, the divergences $\diverg(\mu^{(k+1)},\mu^{(k)})$ zero. Here the resultant sample complexities $\Hadapt$ and $\Hunif$ are infinite -- %The sample complexities correctly represent algorithm behavior:
because no statistical test can distinguish between these two emission $\mu^{(k)}$ and $\mu^{(k+1)}$, the algorithm continues to collect additional samples without terminating. Sec.~\ref{sec:multiple_source_close} resolves this issue by proposing a simple modification of the stopping condition which returns a set $\Shat$ of possibly greater than $k$ sources. This modification enjoys similar guarantees to our default termination rule (see Theorem~\ref{thm:snakeucb_runtime_topk_inmain_eps}).

\subsection{Sample complexity for heterogeneous sources}
Our sample complexity results qualitatively match standard bounds for
  active top-$k$ identification with sub-Gaussian rewards in the general
  multi-armed bandit setting (e.g. ~\cite{kalyanakrishnan2012pac}).
  The following corollary suggests that  when the
values of $\divergx$ are heterogeneous, $\snakeucb$ yields
significant speedups over $\naivesnake$.

\begin{corollary}(Performance under Heterogeneous Background Noise).
\label{prop:k_equal_one_results}
For a large environment with a single source $x^*$ with emission rate $\mu^*$ and background signals distributed as
$\mux \sim
\mathrm{Unif}[0,\mubar]$ for $x \ne x^*$, the ratio of the upper bounds on sample complexities of $\snakeucb$ to $\naivesnake$ scales with the ratio of $\mubar$ to $\mu^*$ as
$
1 - {\mubar}/{\mu^*}.
\label{eq:heterogenous signals}
$
\end{corollary}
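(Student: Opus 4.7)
The plan is to specialize the expressions for $\Hadapt^{(k)}$ and $\Hunif^{(k)}$ from Theorem~\ref{thm:snakeucb_runtime_topk_inmain} to $k=1$ under the stated distributional assumptions, approximate the finite sums by integrals in the large-environment limit, and then directly form the ratio. I would first write $\calS^*(1) = \{x^*\}$, $\mu^{(1)} = \mu^*$, and observe that with $|\calS|$ large and background emissions i.i.d.\ $\mathrm{Unif}[0,\mubar]$, the second-order statistic concentrates at the supremum: $\mu^{(2)} \to \mubar$ (a standard order-statistic fact). Assuming $\mubar < \mu^*$ so the source is strictly dominant, the divergence formula $d(\mu_1,\mu_2) = (\mu_2 - \mu_1)^2/\mu_2$ gives $d(\mu^{(2)},\mu^*) \approx (\mu^* - \mubar)^2/\mu^*$ and $d(\mu(x),\mu^*) = (\mu^* - \mu(x))^2/\mu^*$ for each background point $x \neq x^*$.

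Next, I would compute the adaptive sample complexity by replacing the empirical sum by its expectation, which is justified by the law of large numbers for large $|\calS|$:
\begin{align*}
\sum_{x \neq x^*} \frac{1}{d(\mu(x),\mu^*)} \;\approx\; |\calS|\cdot\mathbb{E}_{u \sim \mathrm{Unif}[0,\mubar]}\!\left[\frac{\mu^*}{(\mu^* - u)^2}\right]
= \frac{|\calS|\,\mu^*}{\mubar}\int_0^{\mubar}\frac{du}{(\mu^* - u)^2}.
\end{align*}
The elementary antiderivative gives $\int_0^{\mubar}(\mu^*-u)^{-2}\,du = \mubar/(\mu^*(\mu^*-\mubar))$, so the sum simplifies to $|\calS|/(\mu^* - \mubar)$. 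Dropping the lower-order $|\calS|\tau_0$ term and the single term $\mu^*/(\mu^*-\mubar)^2$ from $x^*$ (negligible compared to the $|\calS|$-scaled sum), we obtain $\Hadapt^{(1)} \approx |\calS|/(\mu^* - \mubar)$. For the uniform baseline, $\Hunif^{(1)} \approx |\calS|\,\mu^*/(\mu^*-\mubar)^2$, since $\mu^{(2)} \to \mubar$ forces the worst-case gap to be $(\mu^* - \mubar)^2/\mu^*$.

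Taking the ratio then yields
\begin{align*}
\frac{\Hadapt^{(1)}}{\Hunif^{(1)}} \;\approx\; \frac{|\calS|/(\mu^* - \mubar)}{|\calS|\,\mu^*/(\mu^* - \mubar)^2} \;=\; \frac{\mu^* - \mubar}{\mu^*} \;=\; 1 - \frac{\mubar}{\mu^*},
\end{align*}
which is exactly the scaling claimed. The main technical obstacle is justifying the integral approximation and the concentration $\mu^{(2)} \to \mubar$ in a sufficiently quantitative way; rigorously, one should argue that the error between the empirical sum and its expectation is of lower order than $|\calS|/(\mu^* - \mubar)$ (straightforward since the integrand is bounded on $[0,\mubar]$ away from $\mu^*$), and that the deviation of $\mu^{(2)}$ from $\mubar$ only perturbs the divergence $d(\mu^{(2)},\mu^*)$ by a vanishing factor. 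Both facts follow from standard concentration arguments under the assumption $\mubar < \mu^*$ is bounded away from $\mu^*$ relative to $1/|\calS|$; the rest is a short calculation as above.
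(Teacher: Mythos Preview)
Your proposal is correct and follows essentially the same approach as the paper: both arguments use order-statistic concentration to approximate $\mu^{(2)}\approx\mubar$ for $\Hunif$, replace the adaptive sum by its expectation via the law of large numbers and evaluate the resulting integral to obtain $\Hadapt\approx|\calS|/(\mu^*-\mubar)$, and then form the ratio. Your write-up is in fact slightly more careful than the paper's in explicitly noting the negligibility of the $|\calS|\tau_0$ and single-$x^*$ terms.
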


\iftoggle{ieee}{\begin{IEEEproof}}{\begin{proof}}
% Since $k =
% 1$, we shall replace $\Sstk$ with $x^*$, with $\mu^* = \mu(x^*)$. We then only need to consider the divergences of non-source points with the source signal:
% {
% \begin{align*}
% \divergx := \frac{(\mu^* -  \mu(x))^2}{\mu^*},
% \end{align*}
% }
% The resulting sample complexity terms are then
% \iftoggle{ieee}
% {
%   \begin{align}
%  \Hadapt^{(1)} &:= |\calS| \tau_0 + \sum_{x \in  \calS \setminus \{x^*\}}\left(
%     \tfrac{1}{\divergx}\right) \nonumber \\
%     \Hunif^{(1)}   &:= |\calS| \tau_0 + |\calS| \max_{x \in \calS}
%   \tfrac{1}{\divergx}\label{eqn:complexity}~.
%   \end{align}
% }
% {
%   \begin{eqnarray}
%  \Hadapt := \sum_{x \in  \calS \setminus \{x^*\}}\left( \tau_0 +
%     \frac{1}{\divergx}\right) & \text{ and } &
%     \Hunif  := |\calS| \left( \tau_0 + \max_{x \in \calS}
%   \frac{1}{\divergx} \right)  \label{eqn:complexity}~.
% \end{eqnarray}
% }
To control the
complexity of $\naivesnake$, note
\begin{align*}
\Hunif = \BigOhTil(\max_{x
    \ne x^*} {1}/{\divergx}) = \BigOhTil(
  {\mu^*}/{(\mu^* - \max_{x \ne x^*}\mux)^2}).
  \end{align*}
It is well known that
that the maximum of $N$ uniform random variables on $[0,1]$ is approximately $1
- \Theta(\frac{1}{N})$ with probability $1 - \Theta(\frac{1}{N})$, implying
that $\max_{x \ne x^*}\mux \approx (1 - \frac{1}{|\calS|})\mubar \approx \mubar$ with probability at least $1 - \Theta({1}/{|\calS|})$. Hence, the sample complexity of $\naivesnake$ scales as
$
\BigOhTil\left({|\calS|\mu_*}/{(\mubar - \mu_*)^2}\right)
$.
On the other hand, the sample complexity of $\snakeucb$ grows as
\begin{align*}
\Hadapt =  \BigOhTil(\sum_{x \ne x^*}{1}/{\divergx}) = \BigOhTil(\sum_{x \ne x^*}\mu^*(\mu^* - \mux)^{-2}) .
\end{align*}
When $\mux \sim \mathrm{Unif}[0,\mubar]$ are random and $\cardS$ is large, the law of large numbers implies that this  tends to $\BigOhTil\left(\mu^* |\calS| \cdot \mathbb{E}_{ \mux \sim \mathrm{Unif}[0,\mubar]}{(\mu^* - \mux)^{-2}}\right)= \BigOhTil\left(|\calS| (\mu^* - \mubar)^{-1}\right)$.
%We can then compute
% \begin{align*}
% \mathbb{E}_{ \mux \sim \mathrm{Unif}[0,\mubar]}{(\mu^* - \mux)^{-2}}  = \frac{1}{(\mu^* - \mubar)\mu^*},
% \end{align*}
% Hence, the total complexity scales as
% \begin{align*}
% \BigOhTil\left(\mu^* |\calS| \cdot \mathbb{E}_{ \mux \sim \mathrm{Unif}[0,\mubar]}{(\mu^* - \mux)^{-2}}\right) = \BigOhTil\left(|\calS| (\mu^* - \mubar)^{-1}\right)
% \end{align*}
Therefore, the ratio of sample bounds of $\snakeucb$ to $\naivesnake$ is
$
\left(|\calS| (\mu^*-\mubar)^{-1}\right)/ \left({|\calS| \mu^*}{(\mu^*-\mubar)^{-2}} \right)= 1 - {\mubar}/{\mu^*}
$.
\iftoggle{ieee}{\end{IEEEproof}}{\end{proof}}

\subsection{Extension: unknown number of high-emission sources \label{sec:multiple_source_close}}
\newcommand{\must}{\mu^{*}}
In many scenarios, there may be considerably more sources of radiation than anticipated. For example, suppose that $\snakeucb$ is specified with one target source ($k=1$), but in fact there exist two sources $x_1,x_2 \in \calS$ with $\mu(x_1) = \mu(x_2) = \must$. Then, $\snakeucb$ will \emph{not} terminate with high probability, because it cannot differentiate between these two sources.

To remedy this, we can introduce a slightly more aggressive stopping criterion which will terminate even if multiple sources have similar emissions.
The stopping criterion can be stated to terminate if the following holds for an error parameter $\epsilon > 0$:
\begin{definition}[$\epsilon$-Approximate Termination Rule]\label{def:approx_terminate} Under the $\epsilon$-approximate termination rule, $\snakeucb$ either (a) terminates when $\calS_i = \emptyset$ and returns $\Shat \leftarrow \Stop_i$, or (b) terminates when
\begin{align}
\min \{\LCB_i(x) : x \in \calS_i\} \ge \max \{\UCB_i(x) : x \in \calS_i \} - \epsilon. \label{eq:approx_terminate}
\end{align}
and returns the \emph{union} of the sets $\Shat = \calS_i \cup \Stop_i$.
\end{definition}
This criterion will ensure bounded runtime even when there are multiple sources whose mean emissions are close to that of $\mu^{(k)}$. For another possible approach to addressing unknown $k$, we direct the reader to Sec.~\ref{ssec:unknown_sources}.
Under the modification of Definition~\ref{def:approx_terminate}, we can modify Theorem~\ref{thm:snakeucb_runtime_topk_inmain} as follows. For $\mu_2 \ge \mu_1 > 0$, introduce
\begin{align*}
\diverg_{\epsilon}(\mu_1,\mu_2) := \frac{\max\{\mu_2-\mu_1,\epsilon\}^2}{\mu_2}.
\end{align*}
Observe that $\diverg_{\epsilon}(\mu_1,\mu_2) \ge \diverg(\mu_1,\mu_2)$, and is always at least $\epsilon^2/\mu_2$.
We define complexity terms analogous to Eq.~\eqref{eq:sample_complexity_k}:
\begin{align}
\label{eq:sample_complexity_k_eps}
\Hadaptk(\epsilon) &:= |S|\tau_0 + \sum_{x \in \calS^*(k)} \tfrac{1}{\divergtopeps} \nonumber\\
&\qquad+ \sum_{x
              \in \calS \setminus \calS^*(k)} \tfrac{1}{\divergboteps} \nonumber\\
\Hunifk(\epsilon) &:= |\calS| \tau_0 + |\calS| \frac{1}{\diverg_{\epsilon}(\mu^{(k+1)},\mu^{(k)})}
\end{align}
The above describe the adaptive and uniform complexities analogous to those applied in Theorem~\ref{thm:snakeucb_runtime_topk_inmain}. Essentially, these complexities prevent the runtime from suffering if there are many sources whose emissions are close to that of $\mu^{(k)}$.

Lastly, we introduce a relaxed definition of correctness, which requires that an estimate set $\Shat$ of the top emitters contains all top emitters $\Sstk$, and all remaining sources in the set have emissions close to $\mu^{(k)}$.
\begin{definition}\label{defn:approx_correct} A set $\Shat \subset \calS$ is said to be $(k,\epsilon)$-correct if $\Shat \supseteq \Sstk$, and for any $x \in \Shat$, $\mu(x) \ge \mu^{(k)} - \epsilon$.
\end{definition}
For the stopping rule in Definition~\ref{def:approx_terminate} and approximate notion of correctness in Definition~\ref{defn:approx_correct}, Theorem~\ref{thm:snakeucb_runtime_topk_inmain} generalizes as follows:
\begin{theorem}(Sample and Runtime Guarantees).
 \label{thm:snakeucb_runtime_topk_inmain_eps}
For any $\deltot \in (0,1)$ and $\epsilon \le \mu^{(k)}$, the following hold each with probability at least $1-\deltot$: \\
(i)
$\snakeucb$ with the termination rule in Definition~\ref{def:approx_terminate} returns a $(k,\epsilon)$-correct $\Shat$ with runtime at most
   \begin{align*}
    \Truntime(\snakeucb) \leq~& \Hadaptk(\epsilon) \cdot
    %\Tsample + \cardS \log_+ \left(\frac{\log_+\tfrac{\cardS}{\delta}}{\diverg(\mu^{(k+1)},\mu^{(k)})} \right) ~=~
    \BigOhTil\left( \log ({\cardS}/ {\deltot})\right) \\
    &\quad+\BigOhTil\left(\cardS \log_+\left( {\Hunifk(\epsilon) }/ {\cardS}\right)\right)~.
    %&=& \calO\left(\Hadapt\log(M/\delta) + \tau_0 M^2 \log(\Hunif)\right)~.
  \end{align*}
  (ii) $\naivesnake$  with the termination rule in Definition~\ref{def:approx_terminate} returns a $(k,\epsilon)$-correct $\Shat$ with runtime bounded by
  \begin{eqnarray*}
   \Truntime(\naivesnake) \leq  \Hunifk(\epsilon)  \cdot \BigOhTil\left(\log(\cardS/\deltot)\right)~.
  \end{eqnarray*}
  \end{theorem}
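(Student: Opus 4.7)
The plan is to mirror the sketched proof of Theorem~\ref{thm:snakeucb_runtime_topk_inmain}, modified in two ways: a correctness argument for the new clause (b) of Definition~\ref{def:approx_terminate}, and per-point round bounds phrased through $\diverg_\epsilon$ in place of $\diverg$. As before, I would condition on the coverage event \eqref{eq:correct_coverage}, which holds with probability at least $1-\deltot$ under $\delta_i = \deltot/(4\cardS i^2)$. Because the update rules \eqref{eq:top_elim} and \eqref{eq:bot_elim} are unchanged, Lemma~\ref{lem:main_correctness} still gives $\Stop_i \subseteq \Sstk \subseteq \Stop_i \cup \calSi$ at every round on this event.

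\textbf{Correctness.} Halting by clause (a) returns $\Shat = \Stop_i = \Sstk$, which is trivially $(k,\epsilon)$-correct. If clause (b) fires at round $\ifin$, then $\Shat = \Stop_\ifin \cup \calS_\ifin \supseteq \Sstk$, so only $\mu(x) \ge \mu^{(k)} - \epsilon$ must be established, and only for $x \in \calS_\ifin$ (points in $\Stop_\ifin$ already satisfy $\mu(x) \ge \mu^{(k)}$). I would argue that (b) firing forces $|\Stop_\ifin| < k$: otherwise the invariant yields $\Stop_\ifin = \Sstk$, and rule \eqref{eq:bot_elim} (with its natural convention that the $0$-th largest LCB is $+\infty$) empties $\calS_\ifin$, so clause (a) would have fired first. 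Hence some $x^\star \in \Sstk \cap \calS_\ifin$ exists, giving $\max_{x'\in\calS_\ifin} \UCB_\ifin(x') \ge \UCB_\ifin(x^\star) \ge \mu(x^\star) \ge \mu^{(k)}$, and \eqref{eq:approx_terminate} then yields $\mu(x) \ge \LCB_\ifin(x) \ge \mu^{(k)} - \epsilon$.

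\textbf{Runtime.} I would reuse the deterministic sandwich intervals $[\LCBbar_i, \UCBbar_i]$ built from Poisson tails (Sec.~\ref{sec:ptwise_ci}) to bound the last round $\ifin(x)$ that $x \in \calSi$. In the regime $\mu^{(k)}-\mu(x) > \epsilon$ (or $\mu(x)-\mu^{(k+1)} > \epsilon$ for $x \in \Sstk$), $\diverg_\epsilon = \diverg$ and the elimination argument of Theorem~\ref{thm:snakeucb_runtime_topk_inmain} delivers $\ifin(x) = \BigOhTil(\log(1/\divergboteps))$ verbatim. Summing $\tau_{\ifin(x)}$ over $x \in \calS$, adding the $\cardS \tau_0$ travel cost, and, for $\snakeucb$, separating the global travel contribution $\BigOhTil(\cardS \log_+(\Hunifk(\epsilon)/\cardS))$ reproduces the stated bounds on $\Truntime(\snakeucb)$ and $\Truntime(\naivesnake)$.

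The main obstacle is the complementary regime in which the gap is at most $\epsilon$ and standard elimination may never trigger on $x$; I must still show that clause (b) fires by round $i^\star = \BigOhTil(\log(\mu^{(k)}/\epsilon^2))$. The plan is to verify two properties simultaneously at $i^\star$: (i) every $y \in \calS$ with $|\mu(y) - \mu^{(k)}| > C\epsilon$ has already been moved into $\Stop_{i^\star}$ or removed from $\calS_{i^\star}$ by the ordinary $\divergtop$ or $\divergbot$-based elimination, so every survivor satisfies $|\mu(y)-\mu^{(k)}| \le C\epsilon$; and (ii) Poisson concentration at time $\tau_{i^\star}$ forces each sandwich interval on $\calS_{i^\star}$ to have width at most $\epsilon/C'$. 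Choosing $C, C'$ so that $2C\epsilon + 2\epsilon/C' \le \epsilon$ then gives $\max_{\calS_{i^\star}} \UCB_{i^\star} - \min_{\calS_{i^\star}} \LCB_{i^\star} \le \epsilon$, triggering \eqref{eq:approx_terminate}. The same two facts under constant-speed sampling yield the $\naivesnake$ bound $\Hunifk(\epsilon) \cdot \BigOhTil(\log(\cardS/\deltot))$.
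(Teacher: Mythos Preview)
Your proposal is correct and follows essentially the same route as the paper's proof sketch. Both arguments (i) establish $(k,\epsilon)$-correctness of clause~(b) by exhibiting a surviving top-$k$ point $x^\star\in\calS_{\ifin}\cap\Sstk$ and combining \eqref{eq:approx_terminate} with coverage, and (ii) bound runtime by splitting points into a ``far'' regime (gap exceeding a constant multiple of $\epsilon$, handled by the standard elimination analysis since $\diverg_\epsilon$ and $\diverg$ agree up to constants there) and a ``near'' regime (remaining survivors, for which one shows that once the deterministic interval widths shrink to $O(\epsilon)$ the approximate stopping rule fires). The paper instantiates your constants as $C=1/4$ and $1/C'=1/8$; your abstract constraint $2C+2/C'\le 1$ is exactly what is needed and is satisfied by those choices.
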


  \begin{proof}[Proof Sketch]

  \textbf{Correctness} To see that the modified termination rule returns a $(k,\epsilon)$-correct set, we consider the two possible termination criteria. If the default stopping rule $\calS_i = \emptyset$, is triggered, then the correctness follows from Theorem~\ref{thm:snakeucb_runtime_topk_inmain}.

 Suppose instead that the stopping rule in Eq.~\eqref{eq:approx_terminate} is triggered, so that $\Shat = \calS_i \cup \Stop_i$. By the original correctness analysis, with high probability, the top emitters are never eliminated from $\calS_i \cup \Stop_i$. Thus, $\Shat$ contains $\Sstk$. To prove $(k,\epsilon)$-correctness, let us now show that for any $x \in \Shat$, $\mu(x) \ge \mu^{(k)} - \epsilon$. With high probability, $\Stop_i \subset \Sstk$ for all rounds $i$, so we will verify this while restricting to $x \in \calS_i$.  We note that on the high probability event that top emitters are never eliminated from $\calS_i \cup \Stop_i$, and if $|\Stop_i| < k$, there exists some $x_0 \in \Sstk$ such that $x_0 \in \calS_i$; for this source, $\mu(x_0) \ge \mu^{(k)}$. By definition of the stopping rule, we then have
  \begin{align}
  \forall x  \in \calS_i, \ \ \LCB_i(x)  \ge \UCB_i(x_0) - \epsilon. \label{eq:approx_terminate}
  \end{align}

  Under the high-probability event that the confidence intervals are correct, this means that
  \begin{align}\label{eq:correct_conseq}
  \forall  x \in \calS_i, \ \ \mu(x) \ge \max \{\mu(x): x \in \calS_i \} - \epsilon.
  \end{align}
  Hence, Eq.~\eqref{eq:correct_conseq} implies that
\begin{align*}
  \forall  x \in \calS_i, \mu(x) \ge \mu^{(k)} - \epsilon,
\end{align*} as desired.

  \textbf{Sample Complexity} Let us now account for the improved sample complexity. In the proof of Theorem~\ref{thm:snakeucb_runtime_topk_inmain}, we considered an upper bound $\ifin(x)$ on the last round $i$ at which $x \in \calS$ remains in $\calS_i$. The total number of samples where then $2^{\ifin(x)}$, and up to logarithmic factors, this upper bound scaled as $\frac{1}{\diverg(\mu^{(k+1)},\mu(x))}$ for $x \in \Sstk$, and with $\frac{1}{\diverg(\mu(x),\mu^{(k)})}$ for $x \notin \Sstk$.

  For the new stopping rule, we have two cases: if $\mu(x) \notin [\mu^{(k)} - \epsilon/4,\mu^{(k)} + \epsilon/4]$, then the desired sample complexity analysis carries through, as is. This is is because, if $\mu(x) \ge \mu^{(k)} + \epsilon/4$  then $\diverg_{\epsilon}(\mu^{(k+1)},\mu(x)) = \Omega(\diverg(\mu^{(k+1)},\mu(x))$ (and analogously when $\mu(x) \le \mu^{(k)} - \epsilon/4$).

  Now, consider $i_0$ to be sufficiently large that $i_0 \ge \max\{\inf(x) : \mu(x) \notin [\mu^{(k)} - \epsilon/4,\mu^{(k)} + \epsilon/4]\}$. Then, for $i \ge i_0$, all that remain are means $\mu(x) \in [\mu^{(k)} - \epsilon/4,\mu^{(k)} + \epsilon/4]$. But once $i$ is sufficiently small that the confidence intervals are at most $\epsilon/8$ in width, all LCB's and UCB's will be within, say $\epsilon/4$ of one another, triggering the new stopping condition Eq.~\eqref{eq:approx_terminate}. Hence, the sample complexity for these means is governed by how many samples are required to shrink these intervals to a width of $\Omega(\epsilon)$, which is roughly bounded by $\frac{\mu^{(k)} + \epsilon/4}{\epsilon^2}$. Under the assumption that $\mu^{(k)} \le \epsilon$, this is at most $\frac{\mu^{(k)}}{\epsilon^2}$, which is bounded by $1/\diverg_{\epsilon}(\mu^{(k)},\mu(x))$ for $x \notin \Sstk$, and by $1/\diverg_{\epsilon}(\mu(x),\mu^{(k+1)})$ for $x \in \Sstk$.
  \end{proof}

% =================== METHODS ==================
%\input{methods}

% ====================== RESULTS =====================

% !TEX root = main_ieee.tex

\section{Experiments}
\label{sec:results}

\begin{figure*}[ht!]
  \centering
  \iftoggle{ieee}{
  \includegraphics[width=.9\textwidth]{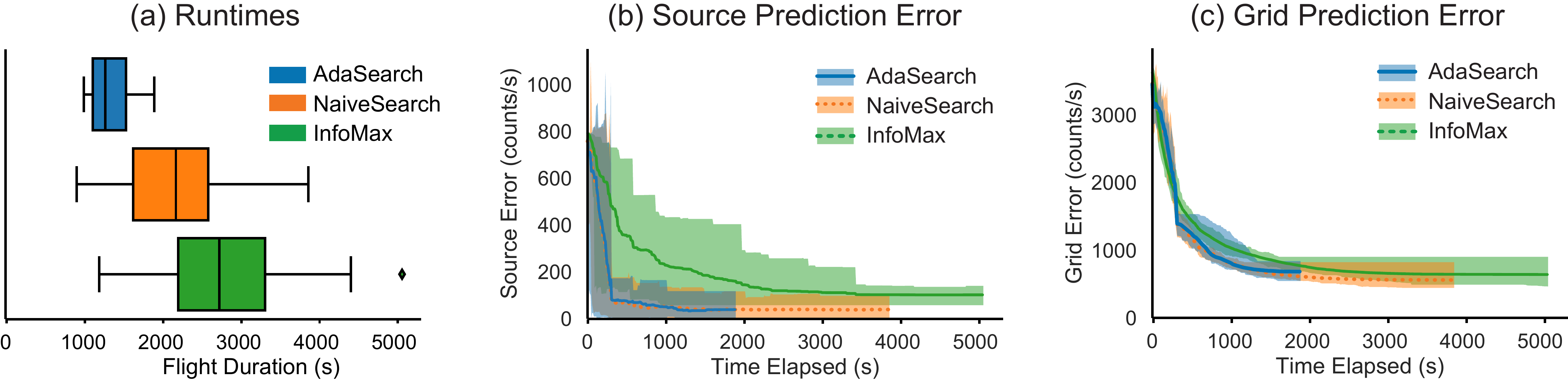}}
  {\includegraphics[width=1.0\textwidth]{figures_tro/fig1_panel_no_dots.pdf}}
  \caption{
       Simulation results of $\snakeucb$, $\naivesnake$ and $\infomax$ for 25 randomly instantiated environments with $\overline{\mu} = 400$ and $\mu^* =
    800\ \frac{\text{counts}}{\text{s}}$. Lighter shaded areas denote the range
    of values at each time over $25$ runs of each algorithm;
    dark lines show the mean. We include final errors for runs that have
    already finished in max, min, and mean computations. Each algorithm was given the same $25$
    randomized grids.
(a) Runtimes of $\snakeucb$, $\naivesnake$, and $\infomax$ over $25$ random trials.
(b) Absolute source error $|\hat{\mu}(x^*)(t)- \mu(x^*)|$ over time.
(c) Total grid error $\sqrt{\sum_{x \in \calS} \|\hat{\mu}(x)(t) - \mu(x)\|_2^2}$ over time. } %(measured as ratio of $\naivesnake$ to $\snakeucb$ runtime)
 \label{fig:comparison_results}
\end{figure*}

We compare the performance of $\snakeucb$
with the baselines defined in Sec.~\ref{sec:baseline} in simulation for the $\RSS$ problem with physical sensing model defined in~\ref{sec:physical_ci}, and
validate $\snakeucb$ in a hardware demonstration.

%\textbf{Simulation methodology.}
\subsection{Simulation methodology}
We evaluate $\snakeucb$, $\infomax$, and
$\naivesnake$ in simulation using the Robot Operating System (ROS) framework \cite{quigley2009ros}. Environment points $\calS$ lie in
a $16\times16$ planar grid, spread evenly over a  total area $64 \times 64$
$m^2$. Radioactive emissions are detected by a simulated sensor following the physical
sensing model given in Sec~\ref{sec:physical_ci} and constrained to fly above a minimum height of $2m$ at all times (see inset of Fig.~1 for simulation setup with $4 \times 4$ planar grid environment). For all experiments, we set confidence parameter $\alpha = 0.0001$.

For the first set of experiments (Figs.~\ref{fig:comparison_results},~\ref{fig:rel_speed}), we set $k = 1$, so that the set of sources $\calS^*(k) = \{x^*\}$ is a single point in the environment.
We set $\mu^* =
\mu(x^*) = 800$ photons/s.
In this setting, we investigate algorithm performance in the face of heterogeneous background signals by varying a maximum environment emission rate parameter  $\overline{\mu} \in
\{300,400,500,600\}$. For each setting of $\overline{\mu}$, we test all three algorithms on $25$ grids randomly generated with background emission rates drawn uniformly at random from the interval
$[0,\overline{\mu}]$.

We also examine the relative performance of all three algorithms as the number of sources increases  (Fig.~\ref{fig:k_greater_one}). For all experiments with $k>1$, we randomly assign $k$ unique environment points from the grid as the point sources, with emissions rates set to span evenly the range $[800,1000]$ photons/s. The signals of the remaining background emitters are drawn randomly as before, with $\overline{\mu}=400$.

Finally, we examine the relative performance of each algorithm for different environment sizes: square grids with  widths $2\times$ and $4 \times$ that of the previous experiments (\Cref{table:env_scale_factor}). To keep the number of sources that must be disambiguated consistent, we instantiate environment points $\calS$ in a $16\times 16$ grid, so that the size of each cell changes with the environment scale factor. Here we set $k=1$, $\mubar=400$, and $\mu^*=800$. For the $\infomax$ baseline, we adjust the planning horizon to scale with the width of the environment, setting $T_{\text{plan}} = 60$ for the doubled grid and $T_{\text{plan}} = 120$ for the largest grid.

\subsection{Results}
Figure~\ref{fig:comparison_results} shows performance across the three algorithms with respect to the following
metrics: %, which include measures of
% global mapping performance, source localization, and source intensity
% estimation:
(a) total runtime (time from takeoff until $x^*$ is located with confidence), (b) absolute difference between the predicted and actual
emission rate of $x^*$, and (c) aggregate difference between predicted and actual emission rates for all environment points $x \in \calS$, measured in Euclidean norm.
%Fig.~\ref{fig:comparison_results}(a) plots the
%runtimes for each algorithm for $\mubar = 400$ and $k=1$.
The uniform baseline
$\naivesnake$ terminates significantly earlier than $\infomax$, and $\snakeucb$
terminates even earlier, on average.
Of these $25$ runs, $\snakeucb$ finished faster than $\naivesnake$ in 21 runs, and finished faster than $\infomax$ in 24. The flight patterns for the first trial of each algorithm are shown in Fig.~\ref{fig:paths}.
%($\naivesnake$ finished faster than $\infomax$ in 18 of 25 runs).

To examine the variation in runtimes due to factors other than the environment instantiation, we also conducted $25$ runs of the same exact environment grid. Due to delays in timing and message passing in simulation (just like there would be in a physical system), measurements of the simulated emissions can still be thought of as random though the environment is fixed. Indeed, the variance in runtimes was comparable to the variance in runtimes in Fig.~\ref{fig:comparison_results}; over the $25$ trials of a fixed grid, the variance in runtimes were $265s$ ($\snakeucb$), $537s$ ($\naivesnake$), and $1028s$ ($\infomax$). Of these $25$ runs, $\snakeucb$ finished faster than $\naivesnake$ in 18 runs, and finished faster than $\infomax$ in all 25.
Fig.~\ref{fig:comparison_results}(b) plots the absolute
difference in the estimated emission rate $\hat \mu(x^*)$ and the true emission rate $\mu(x^*)$ at the one source.
$\snakeucb$ and $\naivesnake$ perform comparably over time, and
$\snakeucb$ terminates significantly earlier. %, likely due to having ruled out other high-emission background points quicker.
Fig.~\ref{fig:comparison_results}(c) plots the Euclidean error between the
estimated and the ground truth grids; in this metric the gaps in error between all three algorithms are smaller. $\snakeucb$ is fast at locating the highest-mean sources without sacrificing performance in total environment mapping.

Fig.~\ref{fig:rel_speed} shows performance of all three algorithms across different maximum background radiation thresholds $\mubar$. As $\mubar$ increases, all algorithms take longer to terminate because the source is harder to distinguish from increasing heterogeneous background signals (left). For high background radiation values (e.g. $\overline{\mu}=600$), the difference in runtimes between all three algorithms is larger; the runtime of $\snakeucb$ increases gradually under high background signals, whereas $\naivesnake$ and $\infomax$ are greatly affected. Fig.~\ref{fig:adap_vs_naive} shows that as $\overline{\mu}$ approaches $\mu^*$, the relative speedup of using adaptivity, $\Truntime(\naivesnake)/\Truntime(\snakeucb)$, increases. This is consistent with the
theoretical analysis in Sec.~\ref{sec:analysis}; the dashed line plots a fit curve with rule $0.7 \cdot \mu^*/(\mu^*
- \bar{\mu})$.

\begin{figure}[!h]
  \centering
  \iftoggle{ieee}{
  \includegraphics[width=.75\columnwidth]{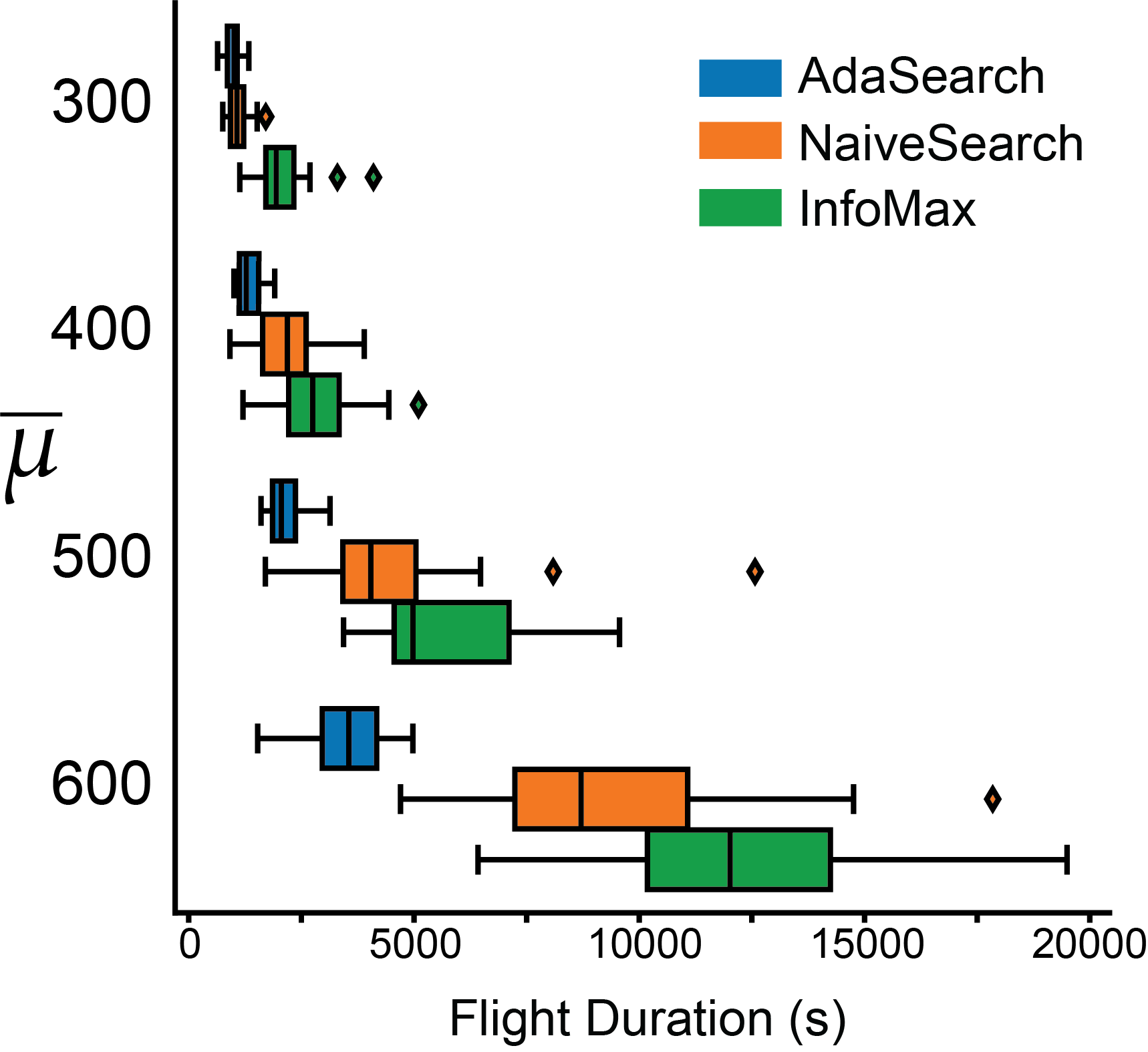}}
  {\includegraphics[width=.5\columnwidth]{figures_tro/background_barplot.png}}
  \caption{
  Performance %(measured as ratio of $\naivesnake$ to $\snakeucb$ runtime)
  of all three algorithms for grids with maximum background varying, and set as
    $\overline{\mu} \in \{300, 400, 500, 600\}$, $\mu^* = 800\
    \text{counts}/\text{s}$. For each value of $\overline{\mu}$, each algorithm was given the same 25
    randomized grids.} %Box plots show quartile values.}
  \label{fig:rel_speed}
\end{figure}
\begin{figure}[!h]
  \centering
  \iftoggle{ieee}{
  \includegraphics[width=.65\columnwidth]{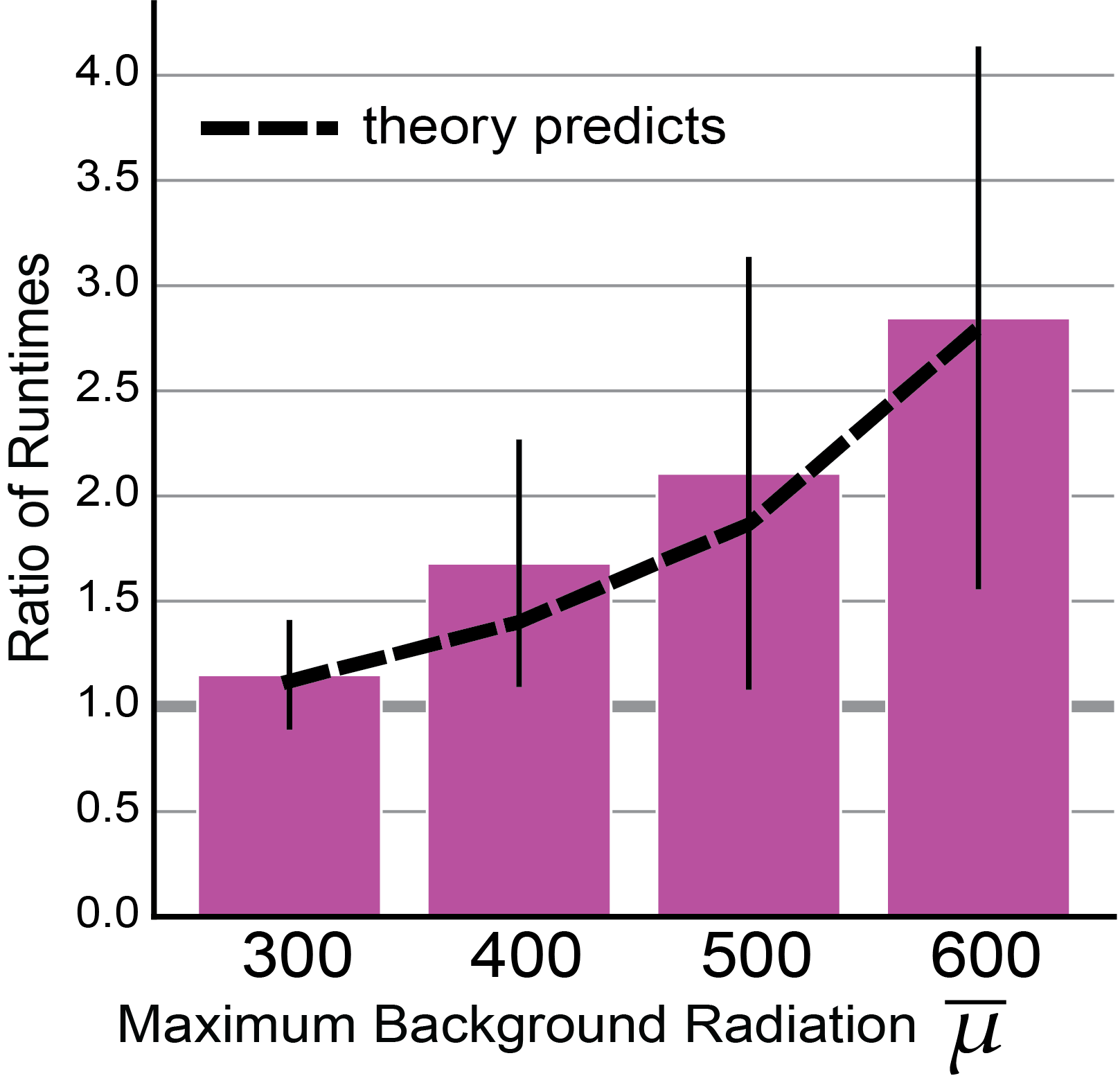}}
  {\includegraphics[width=.5\columnwidth]{figures_tro/background_boxplot.png}}
  \caption{
  Relative performance %(measured as ratio of $\naivesnake$ to $\snakeucb$ runtime)
  of $\snakeucb$ and $\naivesnake$ per random grid, measured as $\Truntime(\naivesnake)/\Truntime(\snakeucb)$; magenta bars indicate mean, and horizontal black lines denote one standard deviation from the mean in either direction. Dashed line shows an approximate fit according to \cref{prop:k_equal_one_results}. Data is obtained from the same randomized grids as in Fig.~\ref{fig:rel_speed}.} %Box plots show quartile values.}
  \label{fig:adap_vs_naive}
\end{figure}

Fig.~\ref{fig:k_greater_one} compares algorithm runtimes across different
numbers of sources, $k$. As suggested from
Corollary~\ref{thm:snakeucb_runtime_topk_inmain}, both absolute and relative
performance is
consistent across $k$ for all three algorithms.

\begin{figure}[!h]
  \centering
  \iftoggle{ieee}{
  \includegraphics[width=.8\columnwidth]{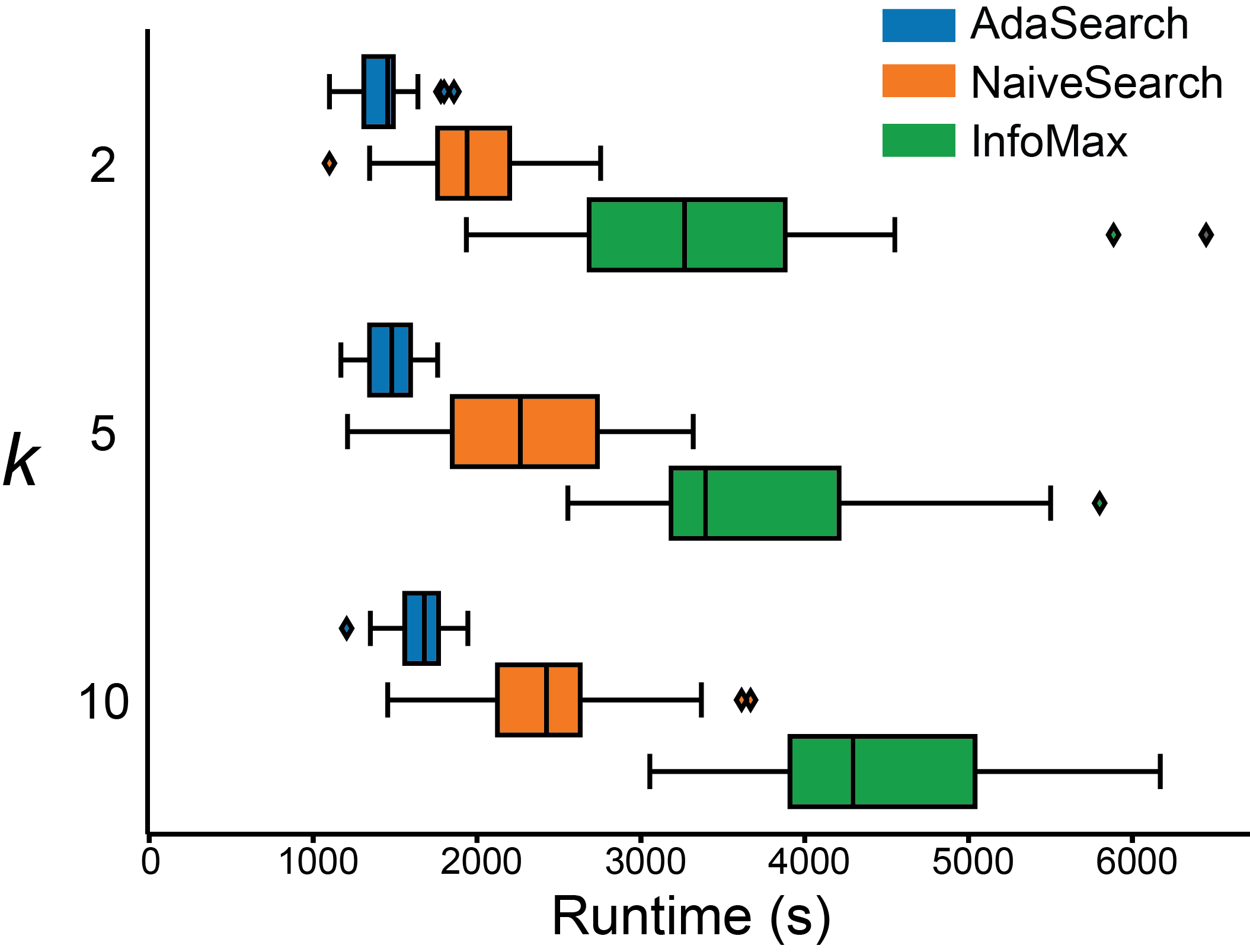}}
  {\includegraphics[width=.5\columnwidth]{figures_tro/k_greater_one.png}}
  \caption{Performance across number of sources, $k$ for $k \in [2,5,10]$. $\mu(x) \in [800,1000]$ for all $x \in \calS^*(k)$. For each value of $k$, Each algorithm was given the same 25
    randomized grids.} %Box plots show quartile values.}
  \label{fig:k_greater_one}
\end{figure}

  The runtimes and number of rounds executed by $\snakeucb$ and $\naivesnake$ for these experiments are summarized in \Cref{table:number_of_rounds}. In every trial, $\snakeucb$ takes no more rounds than $\naivesnake$ to reach the termination criterion, suggesting that slowing down over informative points saves the algorithm from having to do more entire passes over the environment. Note that each successive round of the $\snakeucb$ algorithm takes longer than a round of $\naivesnake$, since the robot slows down over informative regions, whereas $\naivesnake$ does not.

 $\snakeucb$ is inherently a probabilistic algorithm, returning the true sources with probability $1-\deltot$, as a function of the number of rounds and the confidence with parameter, $\alpha$. Of the 175 trials run throughout these experiments, $\snakeucb$ locates the correct source in 174 of them ($99.4 \%$). We set $\alpha = 0.0001$ in our experiments to facilitate fair comparison of algorithms while maintaining reasonable runtime of the slower methods ($\naivesnake$, $\infomax$). Given the speed with which $\snakeucb$ returns a source, in practice it would be feasible to reduce $\alpha$, and hence reduce the probability of a mistake, $\deltot$.
Due to the good performance of total grid mapping (Fig.~\ref{fig:comparison_results}(c)), even in the low-probability case that an incorrect source is returned, $\snakeucb$ still provides valuable information about the environment.

\begin{table}[ht]
\centering
{
% To place a caption above a table
\caption{  Round number at time at termination and runtime for $\snakeucb$ ($\mathtt{Ada}$) and $\naivesnake$ ($\mathtt{Naive}$). Averages and standard deviations taken over 25 trials.}
\label{table:number_of_rounds}
\begin{tabular}[t]{cccccc}
\toprule
% <<<<<<< HEAD
& & \multicolumn{2}{c}{round \#  at term.:  avg (std) }  & \multicolumn{2}{c}{runtime in seconds:  avg (std) }  \\
\cmidrule(lr){3-4}  \cmidrule(lr){5-6}
% =======
% & & \multicolumn{2}{c}{\# rounds:  avg (std) }  & \multicolumn{2}{c}{runtime in seconds:  avg (std) }  \\
% \cmidrule(lr){3-4}  \cmidrule(lr){5-6}
% >>>>>>> 1b3ed8084453997f02e234baf44932a26c01ac12
k &  $\overline{\mu}$ & $\mathtt{Ada}$  & $\mathtt{Naive}$  & $\mathtt{Ada}$  & $\mathtt{Naive}$  \\
\midrule
1 & 300 &  3.0 (0.4) & 3.8 (0.8) & 981 (1778)  & 1103 (237)  \\
1 & 400  & 3.8 (0.6) & 7.4 (2.4) & 1352 (255) & 2208 (694)  \\
1 & 500 & 5.0 (0.7) & 14.4 (6.8) & 2136 (391) & 4436 (2145) \\
1 & 600 & 6.6 (0.6) & 30.5 (9.5) & 3558 (767) & 9483 (3004) \\
2 & 400 & 4.0 (0.4) & 6.6 (1.2) & 1442 (200) & 1955 (386) \\
5 & 400 & 3.8 (0.4) & 7.8 (2.0) & 1465 (167) & 2295 (569)\\
10 & 400 & 4.2 (0.5) & 8.2 (1.7) & 1667 (168) & 2502 (532)\\
\bottomrule
\end{tabular}
}
\end{table}%

The performance of each algorithm for environments at larger scale factors is given in~\Cref{table:env_scale_factor}. Doubling the environment scale factor has two effects on the difficulty of the problem. First, it essentially doubles the flight time to fulfill a "snaking" path across the environment (see Fig.~\ref{fig:snake_pattern}).  Second, doubling the environment grid width distributes environment points further from each other in space, so that contributions from individual environment points are easier to disambiguate. The results in \Cref{table:env_scale_factor} show that for larger grid sizes, both $\snakeucb$ and $\naivesnake$ outperform $\infomax$ in terms of runtime. Additionally, the difference in average runtime between $\snakeucb$ and $\naivesnake$ is small for the larger grid sizes ($128 \times 128$ and $256 \times 256~m^2$), a consequence of the easier sampling problem (due to dispersed environment points), indicated by a reduced number of rounds needed for the larger grid environments, compared to the $64 \times 64~m^2$ grid.
  In all runs summarized in \Cref{table:env_scale_factor}, the algorithms locate the correct source.

\begin{table*}[!t]
\centering

% To place a caption above a table
\caption{Runtime for each algorithm with different environment sizes, as well as the round number at termination for $\snakeucb$ and $\naivesnake$. Each grid consists of $16 \times 16$ cells with $k=1$ and $\overline{\mu} = 400$. Averages and standard deviations taken over 10 trials.}
\label{table:env_scale_factor}
\begin{tabular}[t]{lccccc}
\toprule
& \multicolumn{2}{c}{round \# at termination:  avg (std)}
& \multicolumn{3}{c}{runtime in seconds:  avg (std)}   \\
\cmidrule(lr){2-3} \cmidrule(lr){4-6}
\multicolumn{1}{c}{grid size} &
 $\snakeucb$  & $\naivesnake$&
 $\snakeucb$  & $\naivesnake$ & $\infomax$
 \\
\cmidrule(lr){1-1} \cmidrule(lr){2-2} \cmidrule(lr){3-3} \cmidrule(lr){4-4} \cmidrule(lr){5-5} \cmidrule(lr){6-6}
$64 \times 64~m^2 $& 3.8 (0.4) & 7.6 (2.4) & 1345 (229) & 2222 (692) & 2460 (701) \\
$128 \times 128~m^2 $& 2.0 (0.0) & 2.4 (0.5) & 1356 (170) & 1322 (193) & 3301 (837) \\
$256 \times 256~m^2 $& 1.9 (0.3) & 1.9 (0.3) & 1916 (548) & 2025 (424) & 6792 (1949) \\
\bottomrule
\end{tabular}

\end{table*}%

\begin{figure*}[!ht]
  \centering
  \iftoggle{ieee}{\includegraphics[width=.75\textwidth]{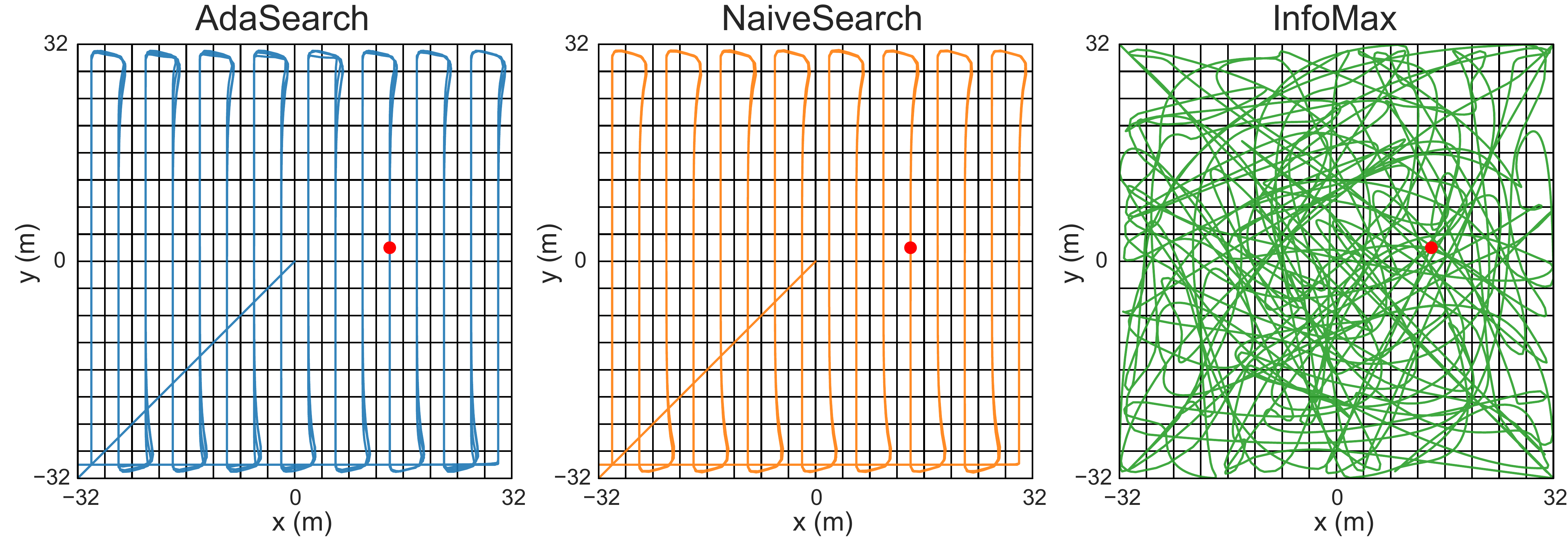}}
  {\includegraphics[width=1.0\textwidth]{figures_tro/paths.pdf}}
  \caption{Indicative flight paths for each algorithm.  The highest emitting source is denoted by the red dot.  $\snakeucb$ and $\naivesnake$ follow a fixed pattern over several rounds, whereas $\infomax$ does not.} %(measured as ratio of $\naivesnake$ to $\snakeucb$ runtime)
 \label{fig:paths}
\end{figure*}

\subsection{Discussion}
While all three methods eventually locate the correct source $x^*$ the vast majority of the time, the two
algorithms with global planning heuristics, $\snakeucb$ and $\naivesnake$,
terminate considerably earlier than $\infomax$, which uses a greedy, receding
horizon approach (Fig.~\ref{fig:comparison_results}). Moreover, the adaptive
algorithm $\snakeucb$ consistently terminates before its non-adaptive
counterpart, $\naivesnake$. These trends hold over differing background noise threshold $\mubar$ and number of sources, $k$ (Figs.~\ref{fig:adap_vs_naive} and~\ref{fig:k_greater_one}).

The $\snakeucb$ algorithm excels when it can quickly rule out points in early
rounds. From \eqref{eq:sample_complexity_k} we recall that the $\snakeucb$ sample complexity scales
with the average value of $\mu(x) / (\mu^* - \mu(x))^2$ (rather than the maximum,
for $\naivesnake$). Hence, $\snakeucb$ will outperform $\naivesnake$ when there
are varying levels of background radiation.

As $\mubar$ approaches $\mu^*$ and the gaps $\mu^* - \mu(x)$ become more
variable, adaptivity confers even greater advantages over uniform sampling. From
\cref{prop:k_equal_one_results}, we expect the ratio of $\naivesnake$  runtime to  $\snakeucb$ runtime to scale as ${\mu^*}/ ({\mu^* -
  \mubar})$, which is corroborated by the fit of the dashed line to the average runtime ratios in
Fig.~\ref{fig:k_greater_one}.
The stability of $\snakeucb$ in spite of increasing background noise is striking, especially in comparison to the two alternatives presented here; this suggests that in settings where background noise could be misleading to discerning the true signal, a confidence-bound based sampling scheme is likely preferable.

The performance differences between $\snakeucb$ and $\infomax$, and $\naivesnake$ hold as the number of sources increases, indicating that $\snakeucb$ is preferable for a range of different enviroments and \problemname\ instances.

$\infomax$'s strength lies in quickly reducing global uncertainty across the
entire emissions landscape. However, $\infomax$ takes considerably longer to
identify $x^*$ (Fig.~\ref{fig:comparison_results}(a)) and, surprisingly, $\snakeucb$
and $\naivesnake$ perform similarly to $\infomax$ in mapping the entire
emissions landscape on longer time scales (Fig.~\ref{fig:comparison_results}(c)). We
attribute this to the effects of greedy, receding horizon planning.
Initially, $\infomax$ has many locally-promising points to explore and reduces
the Euclidean error quickly. Later on, it becomes harder to find
informative trajectories that route the quadrotor near the few under-explored regions. The results in~\Cref{table:env_scale_factor} suggest that this problem remains for larger environments as well.
These results suggest that when a path $\config$ such as the
raster path used here is available, it is well worth considering.

High variation in all experiments is expected due to the noisyPoisson emissions signals. While this noise effects the runtime of all
algorithms, the range of runtimes for $\snakeucb$ is consistently tight compared to the other two methods, suggesting that
carefully allocated measurements are indeed increasing robustness under heterogeneous background signals.

%\textbf{Hardware results.}
\subsection{Hardware demonstration}

The previous results are based on a simulation of two
key physical processes: radiation sensing and vehicle dynamics.
We also test $\snakeucb$ on a Crazyflie 2.0
palm-sized quadrotor in a motion capture room with simulated radiation readings.
The motion
capture data (position and orientation) is acquired at roughly $235$ Hz and
processed in real time using precisely the same implementation of $\snakeucb$ used in our software simulations. Our supplementary video shows a more
detailed display of our system.\footnote{Video available at \url{https://people.eecs.berkeley.edu/~erolf/adasearch.m4v}.} %(TODO: reference video in the form the submission wants)}}
Fig.~\ref{fig:drone} visualizes
the confidence intervals and the absolute source point estimation error, as well
as the horizontal speed, during
a representative flight over a small $4 \times 4$ grid, roughly $3m$ on a side.

\begin{figure}[!ht]
  \centering
  \includegraphics[width=\columnwidth]{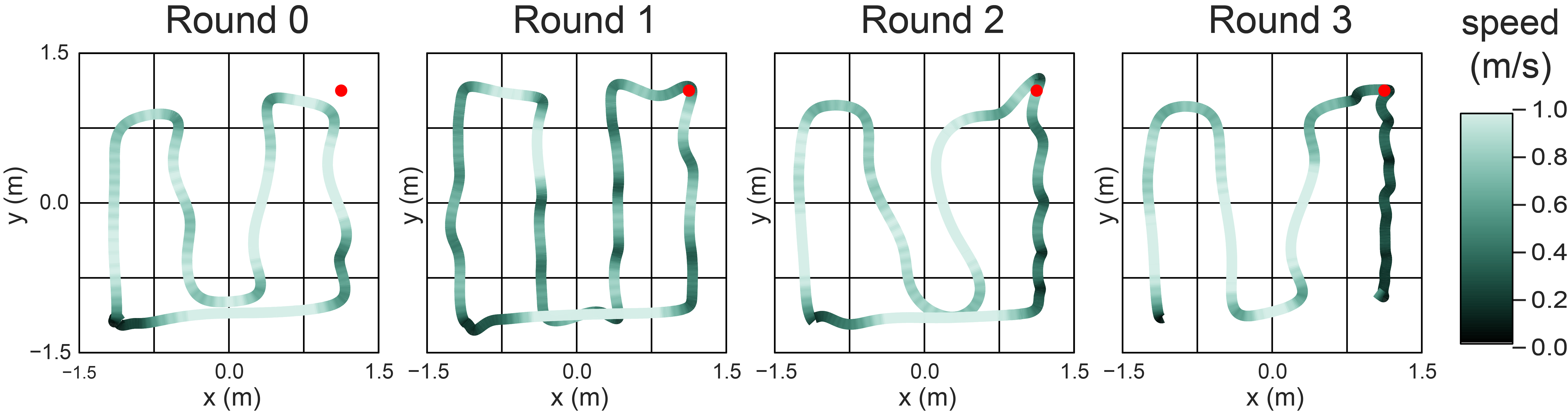}
  \caption{Hardware experiment trajectories for each round of $\snakeucb$, color coded by speed ($m/s$). The highest emitting source is denoted by the red dot. } %Box plots show quartile values.}
  \label{fig:hardware_speed_positions}
\end{figure}

Fig.~\ref{fig:hardware_speed_positions} shows the flight paths for each round, color coded by speed (darker is slower). Despite imperfections in following the snake path and velocity changes, the robot's trajectory successfully represents the algorithm.
After two rounds, $\snakeucb$ identifies the two highest emitting points as
the highlighted pixels in the top inset, and the
absolute error in estimating $\mu(x^*)$ is very small. $\snakeucb$ spends most of
its remaining runtime sensing these two points and avoids taking
redundant measurements elsewhere. The plot of horizontal speed over time
(lower inset of Fig.~\ref{fig:drone}){}
shows this reallocation of sensor
measurements; in the final two rounds, the quadrotor moves quickly at first, then slows down over the two candidate points.
This hardware demonstration gives preliminary validation that $\snakeucb$ is indeed safe and reasonable to use onboard a physical system.

%!TEX root = main_ieee.tex

\section{Generalizations and Extensions}
\label{sec:generalizations}
Before concluding, we briefly discuss several extensions and generalizations of $\snakeucb$.
%\textbf{Unknown number of sources.}
\subsection{Unknown number of sources \label{ssec:unknown_sources}}
In~\ref{sec:multiple_source_close}, we presented a modified termination criterion which accomodates the possibility of multiple high-emission sources (Defn.~\ref {def:approx_terminate}). 
This criterion is particularly suitable if there are multiple sources whose emissions are near that of the $k$-th largest source $\mu^{(k)}$. 
The modified algorithm correctly recovers all top-$k$ sources, as well as possibly returning \emph{some} additional sources for which emissions $\mu(x)$ are within $\epsilon$ of $\mu^{(k)}$.

In other scenarios, it may be more suitable to return \emph{all} sources for which emissions $\mu(x)$ are within $\epsilon$ of $\mu^{(k)}$ and \emph{no} sources whose emission are with a factor of $\epsilon' > \epsilon$. With more sophisticated termination criterion and candidate sets $\Stop_i$ and $\calS_i$, $\snakeucb$ can be modified to accomodate this alternate guarantee. More broadly, the $\snakeucb$ design principle - combining  confidence-interval based elimination with simple raster movement planning - is ammenable to other approximate-search criteria which may arise in given application domains. 

We also note that, if one runs $\snakeucb$ with a small $k$, the algorithm will still collect  measurements from other high-emission locations that can re-used if the practicioner wishes to consider a greater number of $k' > k$ on a subsequent run.

%If the number of sources is initially
%unknown, then running $\snakeucb$ with small $k$ will result in measurements
%that are still informative about all true sources, since the additional unknown sources must be
%distinguished from the top $k$ sources. This could result in sufficient measurement
%coverage, or it could as a first pass which would inform the choice of a larger $k$ in a subsequent run of the algorithm.

%\textbf{Oriented sensor.}
\subsection{Oriented sensor}
A natural extension of the radioactive source-seeking example is to consider a sensing model
with a sensitivity function which depends upon orientation. The additional challenge
lies in identifying informative sensing configuration sets $\config_i$ and a
reasonably efficient equivalent fixed global path $\trajpoints$.
More broadly, the sensing configurations $z \in \config$
could be taken to represent generalized configurations of the robot and sensor,
e.g., they could
encode the position and angular orientation of a directional sensor or joint
angles of a manipulator arm.

%\textbf{Pointwise sensing model.}
\subsection{Pointwise sensing model}
We motivated  the pointwise sensing model where sensitivity function is $h(x,z) = \I\{x = z\}$ as a model conducive to theoretical analysis.
Though it is only a coarse (yet still predictive) approximation of the physical process of radiation
sensing, this sensitivity model is a more precise descriptor of other
sensing processes. For example, the pointwise model is appropriate for
\iftoggle{ieee}{finding the most crowded waiting rooms in a hospital on average
  during a day, and for surveying remote populations to locate the highest incidence
  rate of a disease.}{survey design. As a concrete example, suppose an aid group with enough funding to set
up $k$ medical clinics sought to identify which $k$ towns had the highest rates of
disease.
It is reasonable to think that the data collected about town $i$ is mostly informative about only the rate of disease in that town, so that the pointwise sensing model may be quite appropriate.}

\iftoggle{ieee}{}{
  % \textbf{Surveying.}
  \subsection{Surveying}
Although we demonstrate $\snakeucb$ operating onboard a UAV in the context of
$\RSS$, the core algorithm applies more broadly, even to non-robotic embodied sensing problems.
% Consider the problem of in-person surveying of remote areas
% to find the highest populations of a rare species.
Consider the problem of planning $k$ clinic locations. Because surveys are conducted in person, the aid group is resource limited in terms of using human surveyors, both in terms of the time it takes to survey a single person or clinic within a town, and in terms of travel time between towns.
A survey planner could use $\snakeucb$ to guide the decisions of how long to spend in
each town counting new cases of the disease before moving on to the next, and to trade-off the travel time of returning to collect more data from a certain town with spending extra time at the town in the first place.

While $\snakeucb$ provides a
good starting point for solving such problems, the high cost of transportation
would likely make it worthwhile to further optimize the surveying trajectory at
each round, e.g. by (approximately) solving a traveling salesman problem.
}

% ===================== CONCLUSIONS ===================
%!TEX root = main_ieee.tex

\section{Conclusion}
\label{sec:conclusion}
\iftoggle{ieee}{We}{In summary, we}
 have shown that %with a careful formulation of the problem,
statistical
methods from pure exploration active learning offer a promising, under-explored toolkit for
robotic \problemname. %We have characterized and evaluated a
%theoretical framework for planning in a variety of dynamic sensing problems,
% and have shown success in a case study on radiation-detection.
Specifically, we have shown that motion
constraints need not impede %existing
active learning strategies.%, and highlighted incorporating realistic
                           %measurement models as fertile ground for future research.

Our main contribution, $\snakeucb$, outperforms a greedy
information-maximization baseline in a radioactive source-seeking task. Its success can be understood as a consequence of two
structural phenomena: planning horizon and implicit design objective. The
information-maximization baseline operates on a receding horizon and seeks to
reduce global uncertainty, which means
that even if its planned trajectories are individually highly informative, they
may lead to suboptimal performance over a long time scale.
In contrast,
$\snakeucb$ uses an application-dependent global path that efficiently covers the entire search space and allocates
measurements using principled, statistical confidence intervals.

While our results for the problem of RSS are encouraging, it is likely that in may applications, performance could be limited by the range, field of view, or orientation of the sensors. In some cases (e.g. oriented sensors), such limitations could be addressed by the extensions suggested in Sec.~\ref{sec:generalizations}, and in others, might necessitate new innovations. We are hopeful that the abstraction of sensing models, statistical measurement, and path planning as separate but integrated components of source seeking can guide such future innovations.

% Currently, $\snakeucb$ requires \textit{a priori} knowledge of the possible
% locations of a discrete set of points of interest. In problems where these
% locations are unknown or belong to a continuum, information maximization methods
% may be more suitable.
%Future work will address such problems.
$\snakeucb$  excels in situations with a heterogeneous
distribution of the signal of interest; it would be interesting to make a direct
comparison with Gaussian process (GP)-based methods in a domain where the smooth GP priors are more appropriate.
 We also plan to explore active sensing in more complex environments and with
dynamic signal sources and more sophisticated sensors (e.g., directional
sensors). Furthermore, as $\snakeucb$ is explicitly designed for general
embodied sensing problems, it would be exciting to test it in a wider variety of application domains.

% ===================== CONCLUSIONS ===================

\section*{Acknowledgments}
%We'd like to thank ... \\
The authors would like to thank the anonymous reviewers of the journal publication of this paper for comments and suggestions. Additionally, we thank Andrew Haefner for thoughtful insights on the experimental setup and sensing model.
This material is based upon work supported by the National Science Foundation Graduate Research Fellowship
under Grant No. DGE 1752814.

%% Use plainnat to work nicely with natbib.
%\pagebreak
%\bibliographystyle{plain}
%\bibliographystyle{IEEEtran}
\bibliographystyle{unsrt}
\bibliography{references}

\pagebreak
\appendix

{\noindent \Huge \bf Appendix }

\vspace{1em}
{Note: this appendix was not peer reviewed as part of the journal publication of this paper. We include it in this report for completeness.

%!TEX root = main_arxiv.tex
\section{Proof of Lemma~\ref{lem:main_correctness} }
\label{app:main_correctness}
We verify Lemma~\ref{lem:main_correctness} given in
Section~\ref{sec:algorithm_snake}. The proof of this lemma holds \emph{for any} instantiation of Algorithm~\ref{snake_lucb++}, regardless of the sensing model or the planning strategy.

First, we verify that for each round $i \ge 0$, $\Stop_{i} \cap \calS_i = \emptyset$. Indeed, at round $i = 0$, $\Stop_i = \emptyset$, so the bound holds immediately. Suppose by an inductive hypothesis that $\Stop_{i} \cap \calS_i = \emptyset$ for some $i \ge 0$. Then, for any $x \in \Stop_{i+1}$, we have two cases:
\begin{itemize}
	\item[(a)] $x \in \Stop_i$. Then, $x \notin \calS_{i}$ by the inductive hypothesis, and $\calS_i \supset \calS_{i+1}$ by~\eqref{eq:bot_elim}.
	\item[(b)]  $x$ is added to $\Stop_{i+1}$ via~\eqref{eq:top_elim}. Then $x \notin \calS_{i+1}$ by~\eqref{eq:bot_elim}.
\end{itemize}
Next, we verify that if the confidence intervals are correct in all rounds leading up to round $i$, i.e.
\begin{align}\label{eq:correct_coverage2}
\text{for all rounds }j \le i \text{ and all }x\in\calS_j, \quad \mu(x) \in (\LCB_{j}(x),\UCB_{j}(x)),
\end{align}
then $\Stop_{i+1} \subset \Sstk$, and $\Sstk \subset \Stop_{i+1} \cup \calS_{i+1}$. We again use induction. Initially, we have $\Stop_0 = \emptyset \subset \Sstk \subset \calS = \calS_0$. Now, suppose that at round $i$, one has that  $\Stop_{i} \subset \Sstk$, and $\Sstk \subset \Stop_{i} \cup \calS_{i}$.

To show that $\Stop_{i+1} \subset \Sstk$, it suffices to show that if $x$ is added to $\Stop_{i+1}$, then $x \in \Sstk$. By the inductive hypothesis there exists $k - |\Stop_i|$ elements of $\Sstk$ in $\calS_i$. Hence, if $x$ is added to $\Stop_{i+1}$, and if~\eqref{eq:correct_coverage2} holds, then
\begin{align*}
\mu(x) &\ge \LCB_i(x) \\
&> (k-|\Stop_i|+1)\text{-th largest value of } \UCB_i(x),~x \in \calS_i \tag*{by~\eqref{eq:top_elim}}\\
&\ge (k-|\Stop_i|+1)\text{-th largest value of } \mu(x),~x \in \calS_i \tag*{by~\eqref{eq:correct_coverage2}}\\
&\ge (k+1)\text{-th largest value of } \mu(x), x \in \calS_i \cup \Stop_i. 
% \\
% &\ge (k+1)\text{-th largest value of } \mu(x),~x \in \calS_i \cup \Stop_i.
\end{align*}
Hence, $\mu(x)$ is among the $k$ largest values of $\mu(x)$ for $x \in \calS_i \cup \Stop_i$. Since $\Sstk \subset \calS_i \cup \Stop_i$, we therefore have that $x \in \Sstk$.

Similarly, to show $ \Sstk \subset \calS_{i+1} \cup \Stop_{i+1}$, it suffices to show that if $x \in \calS_{i} \setminus \calS_{i+1}$, and $x \notin \Stop_{i+1}$, then $x \notin \Sstk$. For $x$ such that $x \in \calS_{i} \setminus \calS_{i+1}$, and $x \notin \Stop_{i+1}$, it follows that
\begin{align*}
\mu(x) &\le \UCB_i(x) \\
&< (k-|\Stop_{i+1}|)\text{-th largest value of } \LCB_i(x),~x \in \calS_i \tag*{by~\eqref{eq:bot_elim}}\\
&\le (k-|\Stop_{i+1}|)\text{-th largest value of } \mu(x),~x \in \calS_i \tag*{by~\eqref{eq:correct_coverage2}}\\
&\le k\text{-th largest value of } \mu(x), x \in \calS_i \cup \Stop_{i+1}\\
&\le k\text{-th largest value of } \mu(x), x \in \calS,
\end{align*}
hence $\mu(x) \notin \Sstk$.

Finally, we verify that if~\eqref{eq:correct_coverage2} holds at each round, then at the termination round $\ifin$, $\calS_{\ifin} = \emptyset$, so that $\Stop_{\ifin} \subset \Sstk \subset \Stop_{\ifin} \cup \calS_{\ifin} = \Stop_{\ifin}$, so that $\Sstk = \Stop_{\ifin}$.

%!TEX root = main_ieee.tex
\section{Theoretical Results for Pointwise Sensing\label{sec:main_theory_results}}
In this appendix, we present formal statements of the measurement complexities
provided in Sec.~\ref{sec:analysis} in the main text, and generalize them
to the full top-$k$ problem presented in Algorithm~\ref{snake_lucb++} of the
main text. We also provide specialized bounds for the
randomly generated grids considered in our simulations. 

\textbf{Notation:} Throughout, we shall use the notation $f \lesssim g$ to denote that there exists a universal constant $C$, independent of problem parameters, for which $f \le C \cdot g$.  We also define $\log_{+}(x) := \max\{1,\log x\}$.

\textbf{Formal Setup.} Throughout, we consider a rectangular grid
$\calS$ of $|\calS|$ points, and let $\mu(x)$
denote the mean emission rate of each point $x \in \calS$ in
counts/second. We let $\mu^{(k)}$ denote the $k$-th largest mean $\mu(x)$. In the case that $k = 1$, we denote $ \mu^* := \mu^{(1)}$, and let $x^* := \arg\max_{x \in \calS} \mux$ denote the
highest-mean point, with emission rate $\mu^* := \mu(x^*)$. For identifiability, we assume $\mu^{(k)} > \mu^{(k+1)}$.

\textbf{Measurements.} As described in the main text, we assume a point-wise
sensing model in which $\snakeucb$
and $\naivesnake$ can measure each point directly. Recall that , at each round $i$, $\snakeucb$ takes $\tau_i := 2^{i} \tau_0$ measurements at each point $x \in \calS_i$, and $\naivesnake$ takes $\tau_i$ measurements at each $x \in \calS$. We let $\cnt_i(x)$ denote the total number of counts collected at position $x$ at round $i$.
We further assume that $\mu(x)$ are
standardized according to the same time units as $\tau_0$, measuring a source of mean $\mu(x)$
for time interval of length $\tau_i$ yields counts
distributed according to $\cnt_i \sim \Poi(2^i \cdot \mu(x))$.  Finally, we shall let $\ifin$ denote the (random) round at which a given algorithm - either $\snakeucb$ or $\naivesnake$ - terminates. 

\textbf{Confidence Intervals} %\maxnote{Flesh out the description of the confidence intervals} 
At the core of our analysis are rigorous $1-\delta$ upper and lower confidence
intervals for Poisson random variables, proved in
Sec.~\ref{sec:poisson_conc1}:
\begin{proposition}\label{prop:poisson_conc1}
  Fix any $\mu \ge 0$ and let $\cnt \sim \Poi(\mu)$. Define
  \begin{eqnarray}
    U_+\left(\cnt,\delta\right):= 2 \log(1/\delta) + \cnt + \sqrt{2\cnt \log\left(1/\delta\right)} ~~\mathrm{and}~~
    U_-\left(\cnt,\delta\right) := \max\left\{0,\cnt  - \sqrt{2\cnt\log(1/\delta)}\right\}
  \end{eqnarray}
  Then, it holds that $\Pr[\mu > U_+(\cnt,\delta)] \le \delta$ and $\Pr[\mu <
  U_-(\cnt,\delta)] \le \delta $.
\end{proposition}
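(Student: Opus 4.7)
Both inequalities will be derived from the Chernoff bound applied to the Poisson moment generating function, $\mathbb{E}[e^{\lambda \cnt}] = \exp(\mu(e^{\lambda}-1))$. Optimizing over $\lambda$ yields, for $a \ge \mu$, $\Pr[\cnt \ge a] \le e^{-\phi(a,\mu)}$ and, for $a \le \mu$, $\Pr[\cnt \le a] \le e^{-\phi(a,\mu)}$, where $\phi(a,\mu) := a\log(a/\mu) - a + \mu$. Write $L = \log(1/\delta)$ throughout.

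\textbf{Upper bound $U_{+}$.} This is the easier direction and I would handle it first. From the Taylor expansion $\phi(a,\mu) \ge (a-\mu)^2/(2\mu)$ for $a \le \mu$, setting $t = \sqrt{2\mu L}$ gives $\Pr[\cnt \le \mu - \sqrt{2\mu L}] \le \delta$. On the complementary event, $\mu - \cnt < \sqrt{2\mu L}$, which can be trivially discarded when $\mu \le \cnt \le U_{+}(\cnt,\delta)$. Otherwise, squaring yields the quadratic $\mu^2 - 2\mu(\cnt + L) + \cnt^2 < 0$ in $\mu$, whose positive root gives $\mu \le \cnt + L + \sqrt{L^2 + 2\cnt L}$. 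Subadditivity of the square root, $\sqrt{L^2 + 2\cnt L} \le L + \sqrt{2\cnt L}$, completes the bound $\mu \le \cnt + 2L + \sqrt{2\cnt L} = U_{+}(\cnt,\delta)$.

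\textbf{Lower bound $U_{-}$.} This requires the more delicate, count-centered analysis. The standard Bennett/Bernstein inequality gives a bound on $\Pr[\cnt \ge \mu + t]$ in terms of $\mu$, but we want the threshold stated in terms of $\cnt$ itself. The trick is to analyze the Chernoff exponent directly at the boundary of the event. Fix integer $a$ and set $\mu = a - \sqrt{2aL}$; writing $x := \sqrt{2L/a} \in [0,1)$, one computes
\begin{align*}
\phi(a,\mu) \;=\; -a \log(1 - x) - \sqrt{2aL} \;\ge\; a\bigl(x + x^2/2\bigr) - \sqrt{2aL} \;=\; L,
\end{align*}
using the inequality $-\log(1-x) \ge x + x^2/2$. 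Since $\phi(a,\mu)$ is monotone decreasing in $\mu$ for $\mu < a$, the same lower bound $\phi(a,\mu) \ge L$ holds for every $\mu \le a - \sqrt{2aL}$. The next step is to observe that $f(n) := n - \mu - \sqrt{2nL}$ is monotone increasing in $n$ for $n \ge L/2$, so the event $\{\mu < \cnt - \sqrt{2\cnt L}\}$ coincides with $\{\cnt \ge n_0\}$ for the smallest integer $n_0$ satisfying $\mu \le n_0 - \sqrt{2n_0 L}$. Applying Chernoff at $n_0$ then gives $\Pr[\cnt \ge n_0] \le e^{-\phi(n_0,\mu)} \le e^{-L} = \delta$.

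\textbf{Anticipated obstacle.} The main technical difficulty is the self-bounded form of $U_{-}$: classical tail inequalities produce a mean-centered bound $\Pr[\cnt - \mu > \sqrt{2\mu L} + cL] \le \delta$, and naively replacing $\mu$ with $\cnt$ inside the square root is not justified and would leave an undesirable additive $L$ term. The monotonicity argument that reduces the event to a single threshold $\{\cnt \ge n_0\}$ and the identity $\phi(a,\,a-\sqrt{2aL}) \ge L$ are what allow one to absorb that lower-order correction into the boundary of the event itself, yielding a clean count-centered bound with no extra linear term.
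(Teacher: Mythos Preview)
Your proof is correct. The treatment of $U_{+}$ is essentially identical to the paper's: both start from the sub-Gaussian lower tail $\Pr[\cnt \le \mu - \sqrt{2\mu L}] \le \delta$ and invert. You solve the resulting quadratic in $\mu$ and finish with subadditivity of the square root; the paper instead solves the quadratic in $\sqrt{\mu}$ and squares. These are interchangeable.

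For $U_{-}$ your route genuinely differs. The paper invokes Bennett's inequality to obtain $\Pr[\cnt \ge \mu + \sqrt{2\mu L} + \tfrac{2}{3}L] \le \delta$, and then inverts that $\mu$-centered bound through a quadratic in $\sqrt{\mu}$, which requires a two-case split on whether $\cnt \le \tfrac{2}{3}L$. You instead bound the raw Chernoff exponent $\phi(a,\mu)=a\log(a/\mu)-a+\mu$ directly at the boundary $\mu = a-\sqrt{2aL}$ via $-\log(1-x)\ge x+x^2/2$, and then use monotonicity of $\phi(a,\cdot)$ together with monotonicity of $n\mapsto n-\sqrt{2nL}$ to reduce the event to a single tail $\{\cnt \ge n_0\}$. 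This sidesteps the inversion and the case analysis entirely and yields the clean count-centered bound in one stroke; the only minor point worth making explicit is that the event $\{\mu < \cnt - \sqrt{2\cnt L}\}$ forces $\cnt > 2L$, which both guarantees $x=\sqrt{2L/\cnt}<1$ and places $\cnt$ safely in the region where $n\mapsto n-\sqrt{2nL}$ is increasing. The paper's approach has the advantage of reusing its $\mu$-centered high-probability event later to control the deterministic envelopes $\overline{U}_{\pm}$, but as a proof of the proposition itself your argument is tighter and more transparent.
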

At each round $i$ and $x \in \calS_i$ ($\snakeucb$) or $x \in \calS$ ($\naivesnake$), recall that we use upper and lower confidence intervals
\begin{eqnarray*}
  \LCB_i(x) := \frac{1}{\tau_i}U_-\left(\cnt_i(x), \delta_i\right)~,\quad~
  \UCB_i(x) :=  \frac{1}{\tau_i} U_+\left(\cnt_i(x),\delta_i \right)~,\quad\text{where } \delta_i := \delta/{(4|\calS| i^2)}. 
  %\label{eqn:lcb_ucb}
\end{eqnarray*}

\textbf{Trajectory for $\snakeucb$.} $\snakeucb$ follows a
trajectory where, at each round $i$, $\snakeucb$ spends time $\tau_i
= 2^i \cdot \tau_0$ measuring each $x \in \calS_i$, and spends $\tau_0$ travel
time traveling over each $x \notin \calS_i$. For the radioactive sensing problem, this is achieved by following the ``snaking
pattern'' depicted in Fig.~\ref{fig:snake_pattern} in the main text, in which the quadrotor speeds up or slows down over each point to match the specified measurement times. 
 We will define the \emph{total sample complexity} and \emph{total run time} respectively as
\begin{align*}
\Tsample := \sum_{i = 0}^{\ifin}\tau_i |\calS_i|  ~\text{ and }~  \Truntime := \Tsample +  \sum_{i=0}^{\ifin}\tau_0|\calS \setminus \calS_i|
\end{align*}
 The first quantity above captures the total number of measurements taken at points we still wish to measure, and the second captures the total flight time of the algorithm. For simplicity, we will normalize our units of time so that $\tau_0 = 1$. 

\textbf{Trajectory for $\naivesnake$.} Whereas we implement $\naivesnake$ to
travel at a constant speed at each point for each round, our analysis will
consider a variant where $\naivesnake$ halves its speed each round - that is,
takes $2^i$ measurements at each point for each round; this doubling yields
slightly better bounds on sample complexity, and makes $\naivesnake$ compare
even more favorably compared to $\snakeucb$ in theory.\footnote{In practice, we keep the speed constant between trials because, for uniform sampling, this is more efficient; that is, in both theory and practical evaluations, we choose the variant of $\naivesnake$ perforsm the best} 
This results in a total of $2^i
\tau_0 |\calS|$ measurements per round. For $\naivesnake$, the total sample complexity and total run time are equal, and given by
\begin{align*}
\Truntime = \Tsample = |\calS|\sum_{i = 0}^{\ifin} \tau_i .
\end{align*}
\textbf{Termination Criterion for $\naivesnake$:} 
For an arbitrary number of $k$ emitters, $\naivesnake$ terminates at the first round
$i$ in which the $k$-th largest lower confidence bound of all points $x \in \calS$ is higher than the $(k+1)$-th largest upper confidence bound of all points $x \in \calS$.

\subsection{Main Results for $k = 1$ Emitters\label{sec:app_kone}}

We are now ready to state our main theorems for $k=1$ emitters. Recall the divergence terms 
\begin{align}
\diverg(\mu_1,\mu_2) := \frac{(\mu_2 - \mu_1)^2}{\mu_2},~(\mu_1 < \mu_2)
\end{align}
and, in particular, 
\begin{align*}
\divergx := \frac{(\mu^* -  \mu(x))^2}{\mu^*}, ~(\mu(x) < \mu^*)
\end{align*}
from Section~\ref{sec:analysis}. When term $\divergx$ is small, it is difficult to
distinguish between $x^*$ and $x$. 
The following lemma shows that $\divergx$ approximates the the $\KL$-divergence between the distribution $\Poi(\mu(x))$ and $\Poi(\mu^*)$:
\begin{lemma}\label{lem:KL_lem} There exists universal constants $c_1$ and $c_2$
  such that, for any $\mu_2 \ge \mu_1 > 0$,
  \begin{eqnarray}
    c_1\cdot\mathrm{d}(\mu_1,\mu_2) \le \KL(\Poi(\mu_1),\Poi(\mu_2)) \le c_2\cdot\mathrm{d}(\mu_1,\mu_2),
  \end{eqnarray}
  where $\mathrm{d}(\mu_1,\mu_2) =  (\mu_2 - \mu_1)^{2}/\mu_2$.
\end{lemma}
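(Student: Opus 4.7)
The plan is to exploit the closed-form expression for the KL divergence between Poisson distributions and reduce the two-sided inequality to a one-variable calculation on the ratio $r := \mu_1/\mu_2 \in (0,1]$.

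First, I would invoke the standard identity
$$\KL(\Poi(\mu_1), \Poi(\mu_2)) = \mu_2 - \mu_1 - \mu_1 \log(\mu_2/\mu_1),$$
which follows from expanding $\mathbb{E}_{X \sim \Poi(\mu_1)}[\log(p_{\mu_1}(X)/p_{\mu_2}(X))]$ and using $\mathbb{E}[X] = \mu_1$. Setting $r = \mu_1/\mu_2$, this rearranges to $\KL(\Poi(\mu_1),\Poi(\mu_2)) = \mu_2 \cdot g(r)$ with $g(r) := 1 - r + r \log r$. Since $d(\mu_1,\mu_2) = \mu_2(1-r)^2$, the claim reduces to showing that there exist universal $c_1, c_2 > 0$ with
$$c_1 \;\le\; \frac{g(r)}{(1-r)^2} \;\le\; c_2 \quad \text{for all } r \in (0,1).$$

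Next, I would record that the ratio is strictly positive on $(0,1)$: since $g'(r) = \log r < 0$ on $(0,1)$ and $g(1) = 0$, the function $g$ is strictly decreasing to zero on $(0,1)$, hence $g(r) > 0$ there. Combined with $(1-r)^2 > 0$, this establishes positivity of the ratio.

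The main obstacle is that both $g(r)$ and $(1-r)^2$ vanish at $r = 1$, so the ratio is a $0/0$ indeterminate form at that endpoint, and I need to show the limit is a strictly positive finite number rather than $0$ or $\infty$. I would resolve this with a second-order Taylor expansion: writing $\epsilon = 1 - r$, one computes
$$g(1-\epsilon) = \epsilon + (1-\epsilon)\log(1-\epsilon) = \epsilon - (1-\epsilon)\left(\epsilon + \tfrac{\epsilon^2}{2} + O(\epsilon^3)\right) = \tfrac{\epsilon^2}{2} + O(\epsilon^3),$$
so $g(r)/(1-r)^2 \to 1/2$ as $r \to 1^-$. At the other endpoint, $g(0^+) = 1$ and $(1-r)^2 \to 1$, so the ratio tends to $1$ as $r \to 0^+$.

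Finally, extending $r \mapsto g(r)/(1-r)^2$ continuously to the closed interval $[0,1]$ by these two limiting values yields a strictly positive continuous function on a compact set. By compactness it attains a positive minimum $c_1$ and a finite maximum $c_2$, which are the desired universal constants. The case $\mu_1 = \mu_2$ (i.e.\ $r=1$) is trivial since both sides vanish.
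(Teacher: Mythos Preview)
Your proof is correct and follows essentially the same route as the paper: both reduce to a one-variable problem via the substitution $r=\mu_1/\mu_2$ (equivalently the paper's $\alpha=1-r$), use the closed-form Poisson KL, and handle the $r\to 1$ indeterminacy by a second-order Taylor expansion showing the ratio tends to $1/2$. The only cosmetic difference is that you conclude via continuity and compactness of $[0,1]$, whereas the paper writes out an explicit case split $\alpha\le\alpha_0$ versus $\alpha\ge\alpha_0$ to exhibit the constants by hand; your compactness argument is a bit cleaner but yields non-explicit constants, while the paper's split in principle lets one read off concrete values of $c_1,c_2$.
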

Up to log factors, the
sample complexities for $\snakeucb$ and $\naivesnake$ in the $k = 1$ case are given by $\Hadapt$ and
$\Hunif$, respectively, below:
\begin{eqnarray}
 \Hadapt := \sum_{x \in  \calS \setminus \{x^*\}}\left( \tau_0 +
    \frac{1}{\divergx}\right) & \text{ and } &
    \Hunif  = |\calS| \left( \tau_0 + \max_{x \in S}
  \frac{1}{\divergx} \right)  \label{eqn:complexity}~,
\end{eqnarray}

Similarly to the definitions in Theorem~\ref{thm:snakeucb_runtime_topk_inmain},
$\Hadapt$ and $\Hunif$ differ in that $\Hadapt$ considers the sum over all these
point-wise complexities, whereas $\Hunif$ replaces this sum with the number of
points multiplied by the worst per-point complexity. $\Hadapt$ can be thought of as
the complexity of sampling each point $x \ne x^*$ the exact number of times to
distinguish it from $x^*$, whereas $\Hunif$ is the complexity of sampling
each point the exact number of times to distinguish the best point from
\emph{every other} point. Note that we always have that $\Hadapt \le \Hunif$, and in fact
$\Hunif $ can be as large as $\Hadapt \cdot |\calS|$.

Our first theorem bounds the \emph{sample complexity} of $\snakeucb$ for the $k
= 1$ case presented in Sec.~\ref{sec:analysis} in the main text.
We recall that the sample complexity is the total time spent at all $x \in \calS_i$ until termination:
\begin{theorem}\label{thm:snakeucb} For any $\delta \in (0,1)$, the following holds with probability at least $1-\delta$:  $\snakeucb$ correctly returns $x^*$, the total sample complexity is bounded by bounded above by
  \begin{eqnarray*}
   \Tsample \lesssim \cardS +   \sum_{x \ne x^*} \frac{\log_+\left(\cardS \log_+\left(\tfrac{1}{\divergx} \right)/\delta\right)}{\divergx} ~=~ \widetilde{O}\left(\cardS + \sum_{x \ne x^*} \frac{\log(\cardS/\delta)}{\divergx}\right),
  \end{eqnarray*}
  and the runtime is bounded above by 
  \begin{align*}
  \Truntime &\lesssim \Tsample +  |\calS|\log_+\left( \log_+\tfrac{\cardS}{\delta} \cdot \max_{x \ne x^*} \tfrac{1}{\divergx}\right)~=~ \widetilde{\calO}\left(\Hadapt\log(\cardS/\delta) + \tau_0 |\calS| \log \left(\tfrac{\Hunif}{|\calS|}\right) \right)~,
  \end{align*}
  where $\widetilde{O}(\cdot)$ hides the doubly logarithmic factors in $1/\divergx$.
\end{theorem}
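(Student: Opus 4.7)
The plan is to follow the general strategy sketched for Theorem~\ref{thm:snakeucb_runtime_topk_inmain}, specialized to $k=1$. The first step is to define the ``good event'' $\Egood$ on which all confidence intervals contain the true means throughout the run of the algorithm: for every round $i \ge 0$ and every $x \in \calS_i$, $\mu(x) \in [\LCB_i(x), \UCB_i(x)]$. By Proposition~\ref{prop:poisson_conc1}, each of $\LCB_i(x)$ and $\UCB_i(x)$ fails to bound $\mu(x)$ with probability at most $\delta_i = \delta/(4|\calS| i^2)$, so a union bound over rounds $i \ge 1$ and all $x \in \calS$, using $\sum_{i \ge 1} i^{-2} < \pi^2/6$, gives $\Pr[\Egood] \ge 1 - \delta$. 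On $\Egood$, Lemma~\ref{lem:main_correctness} immediately yields correctness and termination at some finite round $\ifin$.

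Next, for each $x \ne x^*$, I would bound $\ifin(x)$, the last round in which $x$ remains in $\calS_i$, by constructing deterministic envelopes for the random intervals. Using the explicit form of $U_\pm$ from Proposition~\ref{prop:poisson_conc1} together with the fact that $\cnt_i(x) \sim \Poi(\tau_i \mu(x))$ concentrates around $\tau_i \mu(x)$, one can show that on $\Egood$,
\begin{align*}
\UCB_i(x) - \LCB_i(x) \;\lesssim\; \sqrt{\frac{\mu(x)\, \log(1/\delta_i)}{\tau_i}} + \frac{\log(1/\delta_i)}{\tau_i},
\end{align*}
with an analogous envelope for $x^*$. For $x$ to remain in $\calS_i$, the pruning rule~\eqref{eq:bot_elim} requires $\UCB_i(x) \ge \LCB_i(x^*)$, so under the good event the gap $\mu^* - \mu(x)$ must be no larger than the sum of the two envelope widths. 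Solving this inequality for $\tau_i$ gives $\tau_i \gtrsim \log(1/\delta_i)/\divergx$, which, using $\tau_i = 2^i$, yields the bound $2^{\ifin(x)} \lesssim \log(1/\delta_{\ifin(x)})/\divergx$. This inversion step is the main obstacle I expect: one needs to invoke an inversion lemma (such as Eq.~(110) of \cite{simchowitz2016best}) that handles the self-referential dependence of $\delta_{\ifin(x)}$ on $\ifin(x)$, and to confirm that $\log(1/\delta_{\ifin(x)}) \lesssim \log(|\calS|/\delta) + \log\log_+(1/\divergx)$.

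With the per-point bound on $\ifin(x)$ in hand, the sample complexity follows by rewriting the double sum by point rather than by round:
\begin{align*}
\Tsample \;=\; \sum_{i=0}^{\ifin} \tau_i |\calS_i| \;=\; \sum_{x \in \calS}\, \sum_{i=0}^{\ifin(x)} \tau_i \;\le\; 2 \sum_{x \in \calS} 2^{\ifin(x)},
\end{align*}
using the geometric doubling $\tau_i = 2^i$. The initial round $i=0$ and the contribution from $x^*$ (which is never eliminated, but is only sampled for the duration the algorithm runs) together yield the additive $\cardS$ term; summing the per-point bounds over $x \ne x^*$ produces the claimed $\sum_{x \ne x^*} \log_+\!\left(\cardS \log_+(1/\divergx)/\delta\right)/\divergx$, which reduces to the stated $\widetilde O$ bound after absorbing the doubly logarithmic factor.

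Finally, for the runtime I would add the travel time $\tau_0 |\calS \setminus \calS_i|$ accumulated each round: since $\tau_0 = 1$ and the total number of rounds is $\ifin = \max_{x \ne x^*} \ifin(x)$, the extra cost is at most $|\calS| \cdot \ifin$. Taking $\log_2$ of $\max_x 2^{\ifin(x)} \lesssim \max_x \log(|\calS|/\delta)/\divergx$ gives $\ifin \lesssim \log_+\!\left(\log_+(\cardS/\delta) \cdot \max_{x \ne x^*} 1/\divergx\right)$, matching the second term in the claimed runtime bound. Assembling the sample complexity bound, the runtime addendum, and the high-probability correctness from $\Egood$ completes the proof.
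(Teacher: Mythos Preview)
Your proposal is correct and follows essentially the same route as the paper, which proves the general top-$k$ bound (Theorem~\ref{thm:snakeucb_runtime_topk}) via the deterministic envelopes $\LCBbar_i,\UCBbar_i$ of Proposition~\ref{prop:poisson_conc2} and the inversion lemma, then specializes to $k=1$. One small correction: the contribution from $x^*$ is not what produces the additive $\cardS$ term; rather, $x^*$ leaves $\calS_i$ once it is added to $\Stop$, and its sampling cost $2^{\ifin(x^*)}$ is controlled by $\diverg(\mu^{(2)},\mu^*)$, which coincides with the $x^{(2)}$ summand and is absorbed into the sum over $x\ne x^*$ up to a constant factor.
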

% Note that in the The logarithmic terms correct for the fact that we want (a)
% an error probability of $\delta$, (b) that we have $|\calS|$ possible best points,
% and (c) that our algorithms have random stopping times, and hence need to be
% correct at all stages over time.
%Due to the doubling measurements and snake trajectory, the following proposition
%shows that total travel time is not significantly larger:
\iffalse
\begin{proposition}\label{thm:snakeucb_runtime}
  For any $\delta \in (0,1)$, the following holds with probability at least $1-\delta$:  $\snakeucb$ correctly returns $x^*$, and the total runtime is bounded by bounded above by
  \begin{align*}
    \Truntime &\lesssim  \left\{\sum_{x \ne x^*} 1 + \frac{\log_+\left(\cardS \log_+\left(\tfrac{1}{\divergx} \right)/\delta\right)}{\divergx}\right\} +    |\calS|\log_+\left( \max_{x \ne x^*} \tfrac{1}{\divergx}\right)\\
    &= \widetilde{\calO}\left(\sum_{x \ne x^*}\left\{\frac{\log(\cardS/\delta)}{\divergx}\right\} + |\calS|\log_+\left( \max_{x \ne x^*} \tfrac{1}{\divergx}\right) \right) \\
    &= \widetilde{\calO}\left(\Hadapt\log(\cardS/\delta) + \tau_0 |\calS| \log(\Hunif)\right)~.
  \end{align*}
\end{proposition}
\fi
Theorem~\ref{thm:snakeucb} is a direct consequence of our more general bound for $k \ge 1$ emitters, given by Theorem~\ref{thm:snakeucb_runtime_topk}, which is proved in Sec.~\ref{sec:snakeucbproof}. The next proposition, proved in
Sec.~\ref{sec:naive_proof}, controls the sample complexity of $\naivesnake$:
\begin{theorem}\label{thm:naivesnake} For any $\delta \in (0,1)$, the following holds with probability at least $1-\delta$:  $\naivesnake$ correctly returns $x^*$, and the total runtime is bounded by bounded above by
  \begin{eqnarray*}
   \Truntime \lesssim |\calS| \cdot \left(\max_{x \ne x^*}  \frac{ \log_+\left(\cardS \log_+\left(\tfrac{1}{\divergx}\right)/\delta\right)}{\divergx}\right) &=& \widetilde{O}\left(\Hunif\log(\cardS/\delta) \right)~.
  \end{eqnarray*}

\end{theorem}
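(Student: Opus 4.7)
The plan is to parallel the analysis of $\snakeucb$, but now for the uniform variant where every point is sampled in every round. First I would establish the ``good event'' $\Egood$ that for every round $i \ge 0$ and every point $x \in \calS$, the coverage property $\mu(x) \in [\LCB_i(x), \UCB_i(x)]$ holds. By Proposition~\ref{prop:poisson_conc1}, each individual bound fails with probability at most $2\delta_i = \delta/(2|\calS| i^2)$; a union bound over all $x \in \calS$ and all rounds $i \ge 1$ (together with a careful treatment of round $i=0$) gives $\Pr[\Egood] \ge 1 - \delta \sum_i \pi^2/(24 i^2) \ge 1-\delta$. On $\Egood$, Lemma~\ref{lem:main_correctness} immediately implies that if $\naivesnake$ terminates it returns $x^*$, establishing correctness.

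Next I would derive a deterministic, worst-case upper bound on the termination round $\ifin$ on the event $\Egood$. For $\cnt_i(x) \sim \Poi(2^i \mu(x))$, the confidence-interval half-width (after dividing by $\tau_i = 2^i$) behaves like $\sqrt{\mu(x)\log(1/\delta_i)/2^i}$ up to lower-order terms. Setting $\LCB_i(x^*) > \UCB_i(x)$ and solving, I would show that separation between $x^*$ and a given $x \ne x^*$ is guaranteed once
\begin{align*}
2^i \gtrsim \frac{\log(1/\delta_i)}{\divergx}.
\end{align*}
The algorithm must separate $x^*$ from \emph{every} $x \ne x^*$, so the termination round $\ifin$ satisfies $2^{\ifin} \lesssim \max_{x \ne x^*}\log(1/\delta_{\ifin})/\divergx$. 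Since $\delta_i = \delta/(4|\calS| i^2)$, a standard inversion argument (identical to the one used in the best-arm identification literature and cited in the proof sketch of Theorem~\ref{thm:snakeucb_runtime_topk_inmain}) lets me solve for $\ifin$ in closed form, yielding
\begin{align*}
2^{\ifin} \lesssim \max_{x \ne x^*}\frac{\log_+\!\left(|\calS|\log_+(1/\divergx)/\delta\right)}{\divergx},
\end{align*}
where the doubly-logarithmic factor absorbs the $\log i$ contribution from $\delta_i$.

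Finally, since $\naivesnake$ spends $\tau_i |\calS| = 2^i|\calS|$ time each round, the geometric sum gives $\Truntime = |\calS|\sum_{i=0}^{\ifin} 2^i \le 2 |\calS| \cdot 2^{\ifin}$, which combined with the bound above yields the claim.

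The main obstacle is the self-referential inversion: the threshold for $\ifin$ involves $\delta_{\ifin}$, which itself depends on $\ifin$ through $\log i$. I expect this to be handled by the same inversion lemma (Eq.~(110) of \cite{simchowitz2016best}) invoked in the $\snakeucb$ proof, and the resulting extra factor is only doubly logarithmic in $1/\divergx$, matching the $\log_+\log_+$ term inside the stated bound. The rest is routine: a union bound for correctness, pointwise interval inversion, and a geometric series.
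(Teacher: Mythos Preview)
Your plan is correct and mirrors the paper's proof almost step for step: define $\Egood$ over all $x\in\calS$ and all rounds via a union bound, argue correctness on $\Egood$, bound the termination round deterministically using the confidence-interval widths together with the inversion lemma (the paper's Lemma~\ref{lem:inv}/Proposition~\ref{prop:sample_comp}), and finish with the geometric sum $\Truntime \le 2|\calS|\cdot 2^{\ifin}$.

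Two small points to clean up. First, Lemma~\ref{lem:main_correctness} is phrased for the $\snakeucb$ elimination sets $\Stop_i,\calS_i$, whereas $\naivesnake$ uses the direct stopping rule (the $k$-th largest LCB exceeds the $(k{+}1)$-th largest UCB); the paper therefore argues correctness directly: on $\Egood$, for any $x\ne x^*$ one has $\LCB_i(x)\le\mu(x)<\mu(x^*)\le\UCB_i(x^*)$, so only $x^*$ can trigger termination. Second, the coverage property $\mu(x)\in[\LCB_i(x),\UCB_i(x)]$ alone does not control the interval \emph{widths}; to make your half-width estimate $\sqrt{\mu(x)\log(1/\delta_i)/2^i}$ rigorous you need the deterministic envelopes $\LCBbar_i,\UCBbar_i$ from Proposition~\ref{prop:poisson_conc2} (packaged in Lemma~\ref{lem:egood}), which is exactly what the paper uses and what you are implicitly invoking.
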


Lastly, we show that our adaptive and uniform sample complexities are near
optimal. We prove the following proposition lower bounding the number of samples any adaptive algorithm must take, in
Sec.~\ref{sec:lower_bound_proof}:
\begin{proposition}\label{prop:lower_bound} There exists a universal constant
  $c$ such that, for any $\delta \in (0,1/4)$, any adaptive sampling which
  correctly identifies the top emitting point $x^*$ with probability at
  least $1-\delta$ must collect at least
  \begin{eqnarray*}
    c \log (1/\delta) \cdot \Hadapt
  \end{eqnarray*}
  samples in expectation. Moreover, any uniform sampling allocation which
  identifies the top emitting point $x^*$ with probability at least
  $1-\delta$ must take at least
  \begin{eqnarray*}
    c \log (1/\delta) \cdot \Hunif
  \end{eqnarray*}
  samples in expectation.
\end{proposition}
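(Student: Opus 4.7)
The plan is to apply the standard change-of-measure lower bound for best arm identification (in the style of Kaufmann, Cappé and Garivier) to the Poisson observation model, converting Poisson KL divergences into the quantities $\divergx$ via Lemma~\ref{lem:KL_lem}. Fix the instance $\mu$ with unique top emitter $x^*$, let $N_y$ denote the (random) total number of samples any $(1-\delta)$-correct algorithm collects at $y \in \calS$, and for each $x \ne x^*$ construct an alternative instance $\mu^{(x)}$ which agrees with $\mu$ everywhere except at $x$, where we set $\mu^{(x)}(x) = \mu^* + \eta$ for small $\eta > 0$. Under $\mu^{(x)}$ the unique top emitter is $x$, so any $(1-\delta)$-correct algorithm must with probability at least $1-\delta$ return $x^*$ under $\mu$ and $x$ under $\mu^{(x)}$; the standard transportation inequality then yields
\[
\sum_{y \in \calS} \mathbb{E}_\mu[N_y]\cdot \KL\bigl(\Poi(\mu(y)),\Poi(\mu^{(x)}(y))\bigr) \;\ge\; \log\tfrac{1}{2.4\,\delta}.
\]

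Because $\mu$ and $\mu^{(x)}$ differ only at coordinate $x$, the sum collapses to a single term and Lemma~\ref{lem:KL_lem} bounds the resulting Poisson KL by $c_2\cdot \diverg(\mu(x),\mu^*+\eta)$. Letting $\eta \to 0^+$ and using continuity of the divergence gives
\[
\mathbb{E}_\mu[N_x] \;\gtrsim\; \frac{\log(1/\delta)}{\divergx} \qquad \text{for every } x \ne x^*.
\]
Summing over $x \ne x^*$ produces $\sum_{y} \mathbb{E}_\mu[N_y] \gtrsim \log(1/\delta)\cdot \sum_{x\ne x^*} 1/\divergx$. The $\tau_0\cdot\cardS$ contribution in $\Hadapt$ is recovered from the trivial remark that every point must be visited at least once (an unvisited point could be the source), so $\sum_y \mathbb{E}_\mu[N_y] \ge \cardS$ unconditionally, and combining the two inequalities gives the claimed adaptive bound $c\log(1/\delta)\cdot \Hadapt$. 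For the uniform statement, the same per-arm inequality $\mathbb{E}_\mu[N_x] \gtrsim \log(1/\delta)/\divergx$ must hold for every $x \ne x^*$, so a uniform allocation $\mathbb{E}[N_y] = N$ forces $N \gtrsim \log(1/\delta)\cdot \max_{x \ne x^*} 1/\divergx$, and multiplying by $\cardS$ yields $c\log(1/\delta)\cdot \Hunif$.

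The main obstacle is the precise invocation of the transportation lemma at the correct generality for Poisson product measures (together with some care so that the alternative $\mu^{(x)}$ produces a well-defined $(1-\delta)$-correct discrimination as $\eta \to 0^+$), and the bookkeeping needed to absorb the constants from Lemma~\ref{lem:KL_lem} and the transportation inequality into a single universal $c$. A minor subtlety is that the product-measure likelihood ratio must be decomposed so that only the $y=x$ term contributes, which is immediate here because the measurement at each sensing location is independent and only coordinate $x$ changes between $\mu$ and $\mu^{(x)}$.
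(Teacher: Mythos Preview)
Your proposal is correct and follows essentially the same approach as the paper: both invoke the Kaufmann--Capp\'e--Garivier change-of-measure inequality with a single-coordinate perturbation $\mu^{(x)}(x)=\mu^*+\eta$, collapse the KL sum to one term, apply Lemma~\ref{lem:KL_lem}, let $\eta\to 0$, and then sum (adaptive) or take the max times $|\calS|$ (uniform). You additionally address the $\tau_0\,|\calS|$ contribution in $\Hadapt$ via the ``every point must be visited'' observation, a detail the paper's proof omits.
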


%!TEX root = main_arxiv.tex
\subsection{Analysis for Top-$k$ Poisson Emitters \label{sec:app_genk}}
In this section, we continue our analysis of $\snakeucb$, addressing the full
problem of identifying the $k$ Poisson emitters with the highest emission rates. Our goal
is to identify the unique set
\begin{eqnarray}
\calS^*(k) := \{x \in \calS: \mu(x) \ge\mu^{(k)}\}
  \end{eqnarray}
 To ensure the top-$k$ emitters are unique, we assume that
${\mu^{(k)} > \mu^{(k+1)}}$ (recall that $\mu^{(k)}$ denotes the $k$-th largest
value of $\mux$ among all $x \in \calS$). The complexity of identifying the top-$k$ emitters can then described in terms of the gaps of the divergence terms
\begin{align*}
	\divergtop &= \frac{(\mu(x) - \mu^{(k+1)})^2}{\mu(x)},~(\mu(x) > \mu^{(k+1)})  \quad \text{and } \quad\\%\\\text{ for}~ \mu(x) > \mu^{(k+1)} \quad \text{and}\\
  \divergbot &= \frac{(\mu^{(k)} - \mu(x))^2}{\mu^{(k)}},\quad (\mu(x) < \mu^{(k)}).%,\text{ for}~ \mu(x) < \mu^{(k)}
\end{align*}
For $x \in \calS^*(k)$, $\divergtop$ describes how close the emission rate $\mu(x)$ is to the ``best'' alternative in $\calS \setminus \calS^*(k)$. For $x \in \calS \setminus \calS^*(k)$, $\divergbot$ describes how close $\mu(x)$ is to the mean $\mu^{(k)}$ of the emitter in $\calS^*(k)$ from which it is hardest to distinguish. The analogues of $\Hadapt$ and $\Hunif$ are then
\begin{eqnarray}
\Hadaptk &:=& \sum_{x \in \calS^*(k)} \frac{1}{\divergtop} + \sum_{x
              \in \calS \setminus \calS^*(k)} \frac{1}{\divergbot} \quad \text{and} \\
\Hunifk &:=& |\calS| \cdot \max\left\{\max_{x \in \calS^*(k)} \frac{1}{\divergtop},    \max_{x \in \calS \setminus \calS^*(k)} \frac{1}{\divergbot}\right\} = |\calS| \diverg(\mu^{(k+1)}),\mu^{(k)})
\end{eqnarray}
where the equality follows by noting that the function $(x,a) \mapsto \frac{x}{(x - a)^2}$ is decreasing in $x$ and increasing in $a$ for $x > a$. The following theorem, proved in Sec.~\ref{sec:snakeucbproof},  provides an upper bound on the sample complexity for top-$k$ identification:
	\begin{theorem}\label{thm:snakeucb_runtime_topk}
  For any $\delta \in (0,1)$, the following holds with probability at least $1-\delta$:  $\naivesnake$ correctly returns $\Sstk$, and the total sample complexity is bounded above by
  \begin{align*}
  \Tsample &\lesssim \cardS + \sum_{x \in \calS^*(k)} \frac{\log_+\left(\cardS \log_+\left(\frac{1}{\divergtop}\right)/\delta\right)}{\divergtop}  + \sum_{x \in \calS \setminus \calS^*(k)} \frac{\mu^{(k)} \log_+\left(\cardS \log_+\left(\frac{1}{\divergbot}\right)/\delta\right)}{\divergbot} \\
  &=\widetilde{\calO}\left(\Hadaptk \cdot \log(\cardS/\delta)\right),
  \end{align*}
  and the total runtime is bounded by 
  \begin{align*}
    \Truntime &\lesssim \Tsample + \cardS \log_+ \left(\frac{\log_+\tfrac{\cardS}{\delta}}{\diverg(\mu^{(k+1)},\mu^{(k)})} \right) ~=~ \widetilde{\calO}\left(\Hadaptk \cdot \log \tfrac{\cardS}{\delta} +\cardS \log_+\left( \tfrac{\Hunifk}{\cardS}\right)\right)
    %&=& \calO\left(\Hadapt\log(M/\delta) + \tau_0 M^2 \log(\Hunif)\right)~.
  \end{align*}
	\end{theorem}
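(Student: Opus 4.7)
The plan is to follow the standard best-arm-identification template adapted to the Poisson setting: (i) define a high-probability coverage event on which every confidence interval contains the true mean, (ii) on this event invoke Lemma~\ref{lem:main_correctness} to conclude correctness, and (iii) bound the last round at which each point $x$ remains in $\calS_i$ in terms of the divergence gaps $\divergtop$ or $\divergbot$, then sum the per-point sampling costs and add the travel cost to get $\Truntime$.

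First I would define the coverage event $\Egood$ that $\mu(x) \in [\LCB_i(x),\UCB_i(x)]$ for all rounds $i \ge 1$ and all $x \in \calS_i$. By Proposition~\ref{prop:poisson_conc1} applied to $\cnt_i(x) \sim \Poi(\tau_i \mu(x))$ with parameter $\delta_i = \delta/(4\cardS i^2)$, a single interval fails with probability $\le 2\delta_i$; a union bound over $x \in \calS$ and $i \ge 1$ gives $\Pr[\Egood^c] \le \sum_{i\ge 1} 2\cardS \delta_i \le \delta$. On $\Egood$, Lemma~\ref{lem:main_correctness} immediately yields $\Stop_{i+1} \subseteq \Sstk \subseteq \Stop_{i+1} \cup \calS_{i+1}$ at every round, so at termination $\snakeucb$ returns exactly $\Sstk$.

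Next, to control the elimination round $\ifin(x)$ of each point, I would replace the random confidence intervals with deterministic envelopes $[\LCBbar_i(x),\UCBbar_i(x)]$ that contain $[\LCB_i(x),\UCB_i(x)]$ on $\Egood$. Plugging the Poisson concentration in Proposition~\ref{prop:poisson_conc1} back into itself yields an envelope width of order $\sqrt{\mu(x)\log(1/\delta_i)/\tau_i} + \log(1/\delta_i)/\tau_i$, which shrinks geometrically with $i$ since $\tau_i = 2^i \tau_0$. The update rule~\eqref{eq:top_elim} removes $x \in \Sstk$ as soon as its envelope LCB exceeds the $(k+1)$-th largest envelope UCB of $\calS_i \cup \Stop_i$, and~\eqref{eq:bot_elim} removes $x \notin \Sstk$ once its envelope UCB falls below the $k$-th largest envelope LCB. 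Using Lemma~\ref{lem:KL_lem} to tie our $\diverg$ to the Poisson KL divergence and inverting the envelope inequality via the scalar inversion lemma (e.g.\ Eq.~(110) in \cite{simchowitz2016best}) gives
\[
\tau_{\ifin(x)} \lesssim \begin{cases} \dfrac{1}{\divergtop}\,\log_+\!\left(\tfrac{\cardS}{\delta}\log_+\tfrac{1}{\divergtop}\right), & x \in \Sstk,\\[4pt] \dfrac{\mu^{(k)}}{(\mu^{(k)}-\mu(x))^2}\,\log_+\!\left(\tfrac{\cardS}{\delta}\log_+\tfrac{1}{\divergbot}\right), & x \notin \Sstk.\end{cases}
\]
Summing the sampling cost as $\Tsample = \sum_{i\le\ifin}\tau_i|\calS_i| = \sum_x\sum_{i\le\ifin(x)}\tau_i \le 2\sum_x \tau_{\ifin(x)}$ yields the claimed $\widetilde{\calO}(\Hadaptk\log(\cardS/\delta))$ bound. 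For the total runtime I then add the travel contribution $\tau_0 \sum_{i\le\ifin} |\calS\setminus\calS_i| \le \tau_0 \cardS\cdot\ifin$; the overall $\ifin$ is governed by the worst-case gap $\diverg(\mu^{(k+1)},\mu^{(k)})$, which by definition of $\Hunifk$ contributes exactly the extra term $\cardS\log_+(\Hunifk/\cardS)$ in the statement.

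The main obstacle is the inversion step in the second paragraph. The Poisson intervals are asymmetric and the radius $\sqrt{\cnt_i(x)\log(1/\delta_i)} + \log(1/\delta_i)$ couples the empirical mean to the envelope width, so translating the envelope separation $\UCBbar_i(x) < \LCBbar_i(x')$ into a clean condition $\tau_i \ge c/\diverg(\cdot,\cdot)$ requires splitting into two regimes according to whether the square-root term or the additive $\log(1/\delta_i)$ term dominates, and handling the double-logarithmic $\log_+\log_+(1/\diverg)$ slack that arises from the self-bounded inversion. Lemma~\ref{lem:KL_lem} is what lets the resulting $(\mu_2-\mu_1)^2/\mu_2$ quantities be identified with (a constant multiple of) the Poisson KL divergence, closing the gap to the complexity terms $\Hadaptk$, $\Hunifk$; the remaining algebra is bookkeeping.
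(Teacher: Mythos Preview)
Your proposal is correct and follows essentially the same route as the paper: define $\Egood$ via a union bound over Proposition~\ref{prop:poisson_conc1}, invoke Lemma~\ref{lem:main_correctness} for correctness, replace the random intervals by deterministic envelopes (the paper packages this as a separate Proposition~\ref{prop:poisson_conc2} rather than ``plugging Proposition~\ref{prop:poisson_conc1} back into itself''), bound $\ifin(x)$ by inverting the envelope-separation condition via the scalar inversion lemma, and then sum $\sum_x 2^{\ifin(x)+1}$ for $\Tsample$ and add $\tau_0\cardS\cdot\max_x\ifin(x)$ for $\Truntime$. One small correction: Lemma~\ref{lem:KL_lem} plays no role in the upper bound---the inversion step is purely algebraic and lands directly on $(\mu_2-\mu_1)^2/\mu_2 = \diverg(\mu_1,\mu_2)$, which is already the quantity defining $\Hadaptk$ and $\Hunifk$; the KL approximation is used only in the lower-bound Proposition~\ref{prop:lower_bound}.
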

	We remark that our sample complexity qualitatively matches standard bounds for
  active top-$k$ identification with sub-Gaussian rewards in the non-embodied setting (see, e.g.~\cite{kalyanakrishnan2012pac}). Our results differ by considering the appropriate modifications for Poisson emissions, as well as accounting for total travel time. Lastly, we have the bound for uniform sampling.
  \begin{theorem}\label{thm:naivesnake_top_k}
  For any $\delta \in (0,1)$, the following holds with probability at least $1-\delta$:  $\naivesnake$ correctly returns $\Sstk$, and the total runtime is bounded by bounded above by
  \begin{eqnarray*}
   \Truntime \lesssim  \widetilde{\calO}\left(\Hunif\right)\cdot \log(\cardS/\delta)~.
  \end{eqnarray*}
  \end{theorem}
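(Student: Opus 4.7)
The plan is to mirror the strategy used for the top-$k$ analysis of $\snakeucb$ in Theorem~\ref{thm:snakeucb_runtime_topk}, specialized to the simpler setting in which $\naivesnake$ samples every point $x \in \calS$ in every round. First I would define the ``good event'' $\Egood$ that for every round $i \ge 0$ and every $x \in \calS$, the true mean satisfies $\LCB_i(x) \le \mu(x) \le \UCB_i(x)$. Because the round-$i$ confidence width is set using $\delta_i = \delta/(4\cardS i^2)$, Proposition~\ref{prop:poisson_conc1} together with a union bound over $x \in \calS$ and $i \ge 1$ gives $\Pr[\Egood] \ge 1-\delta$, using $\sum_{i \ge 1} i^{-2} = \pi^2/6 < 2$. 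Conditioned on $\Egood$, Lemma~\ref{lem:main_correctness} already yields correctness: once $\naivesnake$'s termination criterion (the $k$-th largest $\LCB$ exceeds the $(k{+}1)$-th largest $\UCB$ over all of $\calS$) is met, the returned set must equal $\Sstk$.

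The remaining task is to upper bound the round $\ifin$ at which termination fires. On $\Egood$, one can further sandwich the empirical LCB/UCB inside deterministic envelopes $[\LCBbar_i(x),\UCBbar_i(x)]$ that depend only on $\mu(x)$, $\tau_i$, and $\delta_i$, obtained by substituting the $1-\delta_i$ upper tail bound for $\cnt_i(x)$ into $U_\pm$. These envelopes shrink around $\mu(x)$ at a rate that behaves like $\sqrt{\mu(x)\log(1/\delta_i)/\tau_i} + \log(1/\delta_i)/\tau_i$, which is the same Poisson-concentration form that underlies the $\snakeucb$ analysis.

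I would then invoke an ``inversion'' argument in the spirit of Eq.~(110) of~\cite{simchowitz2016best}: the termination inequality $\LCBbar_i$(at the $k$-th largest mean) $>$ $\UCBbar_i$(at the $(k{+}1)$-th largest mean) is guaranteed once $\tau_i \gtrsim \log(\cardS/\delta)/\diverg(\mu^{(k+1)},\mu^{(k)})$ (up to doubly-logarithmic factors in $1/\diverg(\mu^{(k+1)},\mu^{(k)})$). Because $\tau_i = \tau_0 \cdot 2^i$ doubles each round, the smallest such $i$ satisfies $2^{\ifin} \lesssim \tau_0 + \widetilde{\calO}\bigl(\log(\cardS/\delta)/\diverg(\mu^{(k+1)},\mu^{(k)})\bigr)$. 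Finally, $\naivesnake$ spends time $\tau_i |\calS|$ per round, so
\[
\Truntime = |\calS|\sum_{i=0}^{\ifin}\tau_i \;\le\; 2\cdot|\calS|\cdot \tau_{\ifin} \;\lesssim\; \Hunifk \cdot \widetilde{\calO}(\log(\cardS/\delta)),
\]
using the definition of $\Hunifk$ in \eqref{eq:sample_complexity_k} (which already absorbs the $|\calS|\tau_0$ traversal term). The equality $\max\{\max_{x\in\Sstk}1/\divergtop,\max_{x\notin\Sstk}1/\divergbot\}=1/\diverg(\mu^{(k+1)},\mu^{(k)})$, noted just below the definition of $\Hunifk$, is what identifies the worst-case pair driving $\ifin$.

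The main obstacle is the inversion step: carefully showing that the deterministic envelopes $[\LCBbar_i,\UCBbar_i]$ separate in exactly $\widetilde{\calO}(1/\diverg(\mu^{(k+1)},\mu^{(k)}))$ samples. The Poisson tails mix a sub-Gaussian term $\sqrt{\mu\log(1/\delta_i)/\tau_i}$ with a sub-exponential term $\log(1/\delta_i)/\tau_i$, so one must track both regimes and verify that they both lead to the same $\diverg$ denominator; the doubly-logarithmic slack arises from $\delta_i$ depending on $i$, which couples $\ifin$ to itself and must be resolved by a standard bootstrap argument (pick $i$ such that $i^2 \lesssim$ the desired bound, and check consistency). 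Everything else is bookkeeping: the union bound for $\Egood$, the geometric sum over rounds, and the identification of the worst gap in $\Hunifk$.
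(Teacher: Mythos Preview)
Your proposal is correct and follows essentially the same route as the paper's proof in Sec.~\ref{sec:naive_proof}: define $\Egood$ over all $x\in\calS$ and all rounds, establish correctness on $\Egood$, sandwich the empirical intervals in deterministic envelopes $[\LCBbar_i,\UCBbar_i]$, and apply the inversion lemma (Lemma~\ref{lem:inv}/Proposition~\ref{prop:sample_comp}) to bound $2^{\ifin}$ in terms of $1/\diverg(\mu^{(k+1)},\mu^{(k)})$, then sum the geometric series. One small imprecision: Lemma~\ref{lem:main_correctness} is phrased for $\snakeucb$'s elimination rules~\eqref{eq:top_elim}--\eqref{eq:bot_elim}, not for $\naivesnake$'s single-shot termination criterion, so you should give the (even simpler) direct argument the paper gives---on $\Egood$, any $x'\notin\Sstk$ has $\LCB_i(x')\le\mu(x')<\mu(x)\le\UCB_i(x)$ for every $x\in\Sstk$, so the $k$ largest LCBs can only exceed the $(k{+}1)$-st largest UCB when they belong to $\Sstk$.
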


%!TEX root = main_arxiv.tex

\subsection{Predictions for Simulations\label{sec:discussion_unif}}
We now return to the case $k=1$. We justify the estimates of the complexity terms $\Hadapt$ and $\Hunif$
provided in Sec.~\ref{sec:analysis} of the main text, where $\mux \sim
\mathrm{Unif}[0,\mubar]$ for $x \ne x^*$, and $|\calS| = 256$. To control the
complexity of $\naivesnake$, we observe that
\begin{align*}
\Hunif = \BigOhTil\left(\max_{x
    \ne x^*} \frac{1}{\divergx}\right) = \BigOhTil\left(
  \frac{\mu^*}{(\mu^* - \max_{x \ne x^*}\mux)^2}\right).
  \end{align*}
It is well known that
that the maximum of $N$ uniform random variables on $[0,1]$ is approximately $1
- \Theta(\frac{1}{N})$ with probability $1 - \Theta(\frac{1}{N})$, which implies
that $\max_{x \ne x^*}\mux \approx (1 - \frac{1}{|\calS|})\mubar \approx \mubar$ with probability at least $1 - \Theta(\frac{1}{|\calS|})$. Hence, the sample complexity of $\naivesnake$ should scale as
\begin{align*}
\BigOhTil\left(\frac{|\calS|\mu_*}{(\mubar - \mu_*)^2}\right)
\end{align*}

On the other hand, the sample complexity of $\snakeucb$ grows as
\begin{align*}
\BigOhTil\left(\sum_{x \ne x^*}\frac{1}{\divergx}\right) = \BigOhTil\left(\sum_{x \ne x^*}\mu^*(\mu^* - \mux)^{-2}\right)
\end{align*}
When $\mux \sim \mathrm{Unif}[0,\mubar]$ are random and $\cardS$ is large, the law of large numbers implies that this term tends to $\BigOhTil\left(\mu^* |\calS| \cdot \mathbb{E}_{ \mux \sim \mathrm{Unif}[0,\mubar]}{(\mu^* - \mux)^{-2}}\right)$. We can then compute
\begin{align*}
\mathbb{E}_{ \mux \sim \mathrm{Unif}[0,\mubar]}{(\mu^* - \mux)^{-2}} &= \frac{1}{\mubar} \int_{0}^{\mubar} \frac{1}{(\mu^* - t)^2}dt~=~ \frac{1}{\mubar} \int_{\mu^* - \mubar}^{\mu^*} \frac{1}{u^2}du\\
&= \frac{1}{\mubar} \frac{1}{\mu^* - \mubar} - \frac{1}{\mu^*}~=~ \frac{1}{\mubar} \frac{\mu^* - (\mu^* - \mubar)}{(\mu^* - \mubar)\mu^*}\\
&= \frac{1}{\mubar} \frac{\mubar}{(\mu^* - \mubar)\mu^*} = \frac{1}{(\mu^* - \mubar)\mu^*},
\end{align*}
Hence, the total complexity scales as
\begin{align*}
\BigOhTil\left(\mu^* |\calS| \cdot \mathbb{E}_{ \mux \sim \mathrm{Unif}[0,\mubar]}{(\mu^* - \mux)^{-2}}\right) = \BigOhTil\left(|\calS| (\mu^* - \mubar)^{-1}\right)
\end{align*}
Therefore, the ratio of the runtimes of $\snakeucb$ to $\naivesnake$ are
\begin{align*}
\frac{\mu^* - \mubar}{\mu^*} = 1 - \frac{\mubar}{\mu^*}.
\end{align*}

%\input{conf_interval}
%!TEX root = main_ieee.tex
\section{Analyzing $\snakeucb$: Proof of Theorem~\ref{thm:snakeucb_runtime_topk}\label{sec:snakeucbproof}}
\subsection{Analysis Roadmap}
%In this section, we prove the $k = 1$ complexity results in
%Theorem~\ref{thm:snakeucb}. Note that, in this case, it is unnecessary to
%maintain the set $\Stop$, since it would be empty until the final round.
To simplify the analysis, we assume  that at round $i$, we take a fresh $\tau_i = 2^i$ samples
(recall we have normalized $\tau_0 = 1$) from each remaining $x \in \calS_i$.\footnote{The analysis is nearly the same
  as if we used the total $2^{i+1} - 1$ samples collected throughout.} 
For $x
\in \calS_i$, $\cnt_i(x)$ denotes the number of counts observed from point $x$
over the interval of length $\tau_i$, and $\muhat_i(x)$ denotes the
empirical average emissions; that is, $\muhat_i(x) = \cnt_i(x)/\tau_i$. With
this notation, our confidence intervals take the following form:
\begin{eqnarray}
\LCB_i(x) := \frac{1}{2^i}U_-\left(\cnt_i(x),\frac{\delta}{4
  |\calS| i^2}\right) ~ \text{and} ~
\UCB_i(x) :=  \frac{1}{2^i} U_+\left(\cnt_i(x),\frac{\delta}{4 |\calS|i^2} \right)
\end{eqnarray}

We first argue that there exists a good event, $\Egood(\delta)$, occuring with
probability at least $1-\delta$, on which the true mean $\mux$ of each pixel $x$
lies between $\LCB_i(x)$ and $\UCB_i(x)$ for all $i$. Moreover, $\LCB_i(x)$
and $\UCB_i(x)$ are contained within the interval defined by $[\UCBbar_i(x),
\LCBbar_i(x)]$, which depends explicitly upon $\mu(x)$, but \textit{not} on $\muhat_i(x)$. To derive $\UCBbar_i(x)$ and $\LCBbar_i(x)$, we begin by deriving high probability upper and lower bounds $\overline{U}_+(\mu,\delta)$ and $\overline{U}_-(\mu,\delta)$  for the functions $U_+(\cnt,\delta)$ and $U_-(\cnt,\delta)$ that hold for Poisson random variables. Formally, we have the following
\begin{proposition}\label{prop:poisson_conc2}
  Let $\mu \ge 0$ and let and $\cnt \sim \Poi(\mu)$. Define
  \begin{eqnarray*}
    \overline{U}_+(\mu,\delta) &:=&  \mu + \frac{14}{3} \log\left(1/\delta\right) + 2\sqrt{2\mu\log\left(1/\delta\right)}  ~~\mathrm{and}\\
    \overline{U}_-\left(\mu,\delta\right) &:=& \max\left\{0,\mu - 2\sqrt{2 \mu \log\left(1/\delta\right)} \right\}~.
  \end{eqnarray*}
  Then, it holds that
  \begin{eqnarray}
    \Pr\left[ \overline{U}_-(\mu,\delta) \le U_-(\cnt,\delta) \le \mu \le U_+(\cnt,\delta) \le \overline{U}_+(\mu,\delta)\right] \ge 1 - 2\delta~.
  \end{eqnarray}
\end{proposition}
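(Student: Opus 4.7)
The plan is to reduce this proposition to Proposition~\ref{prop:poisson_conc1} together with a two-sided Bernstein-type concentration bound for Poisson random variables. The middle chain $U_-(\cnt,\delta) \le \mu \le U_+(\cnt,\delta)$ is exactly the conclusion of Proposition~\ref{prop:poisson_conc1} and already holds with probability $\ge 1-2\delta$. What remains is to verify the outer inequalities $\overline{U}_-(\mu,\delta) \le U_-(\cnt,\delta)$ and $U_+(\cnt,\delta) \le \overline{U}_+(\mu,\delta)$ \emph{deterministically} on a suitable high-probability event for $\cnt$; I would arrange the argument so that this event is contained in the event of Proposition~\ref{prop:poisson_conc1}, so no additional union-bound cost is incurred.

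Specifically, setting $L := \log(1/\delta)$, I would invoke (or re-derive via a Chernoff bound on the Poisson MGF) the Bennett-type tail bounds
\begin{align*}
\Pr\!\left[\cnt \ge \mu + \sqrt{2\mu L} + \tfrac{2L}{3}\right] \le \delta, \qquad
\Pr\!\left[\cnt \le \mu - \sqrt{2\mu L}\right] \le \delta.
\end{align*}
These are essentially the same one-sided $\delta$-events used to establish Proposition~\ref{prop:poisson_conc1}, so together they yield a $1-2\delta$ event on which $\cnt$ is controlled in both directions simultaneously; the outer inequalities will then be verified pointwise on this event.

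The key deterministic lemma converting a bound on $\cnt$ into a bound on $U_\pm(\cnt,\delta)$ is: if $\cnt \le \mu + \sqrt{2\mu L} + \tfrac{2L}{3}$, then $\sqrt{\cnt} \le \sqrt{\mu} + \sqrt{L}$. This follows by squaring the right-hand side to $\mu + 2\sqrt{\mu L} + L$ and using $2 \ge \sqrt{2}$ together with $1 \ge \tfrac{2}{3}$. Consequently $\sqrt{2\cnt L} \le \sqrt{2\mu L} + \sqrt{2}\,L$, and substituting into $U_+(\cnt,\delta) = 2L + \cnt + \sqrt{2\cnt L}$ yields
\begin{align*}
U_+(\cnt,\delta) \le \mu + 2\sqrt{2\mu L} + \bigl(2 + \tfrac{2}{3} + \sqrt{2}\bigr)L \le \mu + 2\sqrt{2\mu L} + \tfrac{14}{3}L = \overline{U}_+(\mu,\delta),
\end{align*}
since $2+\tfrac{2}{3}+\sqrt{2} \approx 4.08 < \tfrac{14}{3}$, which recovers exactly the constant in the stated bound.

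The main obstacle I anticipate is the lower bound, which requires a careful case analysis rather than a single algebraic substitution. If $\mu \le 8L$ then $\mu - 2\sqrt{2\mu L} \le 0$ and $\overline{U}_-(\mu,\delta) = 0 \le U_-(\cnt,\delta)$ trivially. Otherwise, on the event $\cnt \ge \mu - \sqrt{2\mu L} > 0$, I would split on whether $\cnt \le \mu$ (where monotonicity of $\sqrt{\cdot}$ gives $\sqrt{2\cnt L} \le \sqrt{2\mu L}$ directly, so $U_-(\cnt,\delta) \ge \mu - 2\sqrt{2\mu L}$) or $\cnt > \mu$ (where the same $\sqrt{\cnt} \le \sqrt{\mu}+\sqrt{L}$ inequality plus the upper-tail bound controls $\sqrt{2\cnt L}$ by $\sqrt{2\mu L}+\sqrt{2}\,L$, and $L \le \mu$ lets the extra $\sqrt{2}\,L$ term be absorbed into the second $\sqrt{2\mu L}$). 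Balancing these additive slacks against the stated constant $2$ in front of $\sqrt{2\mu L}$ in $\overline{U}_-(\mu,\delta)$ is the step that requires the most care; once the $\sqrt{\cnt} \le \sqrt{\mu}+\sqrt{L}$ inequality is in hand, the upper-bound chain is essentially mechanical.
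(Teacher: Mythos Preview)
Your proposal is correct and follows essentially the same route as the paper: both arguments work on the same pair of one-sided Poisson tail events $\{\cnt \ge \mu - \sqrt{2\mu L}\}$ and $\{\cnt \le \mu + \sqrt{2\mu L} + \tfrac{2}{3}L\}$, observe that these are precisely the events underlying Proposition~\ref{prop:poisson_conc1} (so the $1-2\delta$ budget suffices), and for the upper outer inequality derive $\sqrt{\cnt} \le \sqrt{\mu} + c\sqrt{L}$ to control $\sqrt{2\cnt L}$ and recover the $\tfrac{14}{3}$ constant. The one place you diverge is the lower outer inequality: the paper simply notes that $\cnt \mapsto U_-(\cnt,\delta)=\max\{0,\cnt-\sqrt{2\cnt L}\}$ is nondecreasing and substitutes the lower bound $\cnt \ge \mu - \sqrt{2\mu L}$ directly, obtaining $U_-(\cnt,\delta) \ge \mu - \sqrt{2\mu L} - \sqrt{2(\mu-\sqrt{2\mu L})L} \ge \mu - 2\sqrt{2\mu L}$ in one line, whereas your case split on $\cnt \lessgtr \mu$ (and the auxiliary threshold $\mu \le 8L$) reaches the same conclusion but with more bookkeeping and, in the $\cnt>\mu$ branch, an unnecessary appeal to the upper-tail event. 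The monotonicity observation is worth adopting; it makes the lower bound as mechanical as the upper one.
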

As a consequence of Proposition~\ref{prop:poisson_conc2}, we can show that
$\LCBbar_i(x)$ and $\UCBbar_i(x)$ are probabilistic lower and upper bounds on $\LCB_i(x)$ and $\UCB_i(x)$:
\begin{lemma}\label{lem:egood} Introduce the confidence intervals
\begin{eqnarray*}
\LCBbar_i(x) := \frac{1}{\tau_i}\overline{U}_-\left(\tau_i\mu(x),\frac{\delta}{4|\calS| i^2}\right) ~ \text{and} ~
\UCBbar_i(x) :=  \frac{|S|}{\tau_i} \overline{U}_+\left(\tau_i\mu(x),\frac{\delta}{4|\calS| i^2} \right)~.
\end{eqnarray*}
Then, there exists an event $\Egood$ for which $\Pr[\Egood] \ge 1 - \delta$, and
\begin{eqnarray}
\forall i \ge 1, x \in \calS_i : \LCBbar_i(x) \le \LCB_i(x) \le \mu(x) \le \UCB_i(x) \le \UCBbar_i(x)~.
\end{eqnarray}
\end{lemma}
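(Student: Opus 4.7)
The strategy is to invoke Proposition~\ref{prop:poisson_conc2} in a pointwise fashion, then conclude via a union bound over rounds and environment points, exploiting the particular choice $\delta_i := \delta/(4|\calS|i^2)$.

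First, I would fix any round $i \ge 1$ and any point $x \in \calS$, and observe that the count collected at round $i$ satisfies $\cnt_i(x) \sim \Poi(\tau_i \mu(x))$ by the Poisson measurement model introduced in Section~\ref{sec:ptwise_ci}. Applying Proposition~\ref{prop:poisson_conc2} with parameter $\tau_i \mu(x)$ in place of $\mu$ and confidence $\delta_i$ in place of $\delta$, we obtain that with probability at least $1 - 2\delta_i$ the chain
\begin{align*}
\overline{U}_-(\tau_i\mu(x),\delta_i) \le U_-(\cnt_i(x),\delta_i) \le \tau_i\mu(x) \le U_+(\cnt_i(x),\delta_i) \le \overline{U}_+(\tau_i\mu(x),\delta_i)
\end{align*}
holds. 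Dividing throughout by $\tau_i > 0$ recovers precisely the desired chain $\LCBbar_i(x) \le \LCB_i(x) \le \mu(x) \le \UCB_i(x) \le \UCBbar_i(x)$.

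Next, let $\Egood$ be the intersection of these per-$(i,x)$ events over all $x \in \calS$ and all rounds $i \ge 1$. Since $\calS_i \subseteq \calS$ holds deterministically for every $i$, the event $\Egood$ already implies the claimed property restricted to $x \in \calS_i$. A union bound then gives
\begin{align*}
\Pr[\Egood^{c}] \;\le\; \sum_{i \ge 1}\sum_{x \in \calS} 2\delta_i \;=\; \sum_{i \ge 1} 2|\calS|\cdot \frac{\delta}{4|\calS| i^2} \;=\; \frac{\delta}{2}\sum_{i \ge 1}\frac{1}{i^2} \;=\; \frac{\delta \pi^2}{12} \;<\; \delta,
\end{align*}
so $\Pr[\Egood] \ge 1 - \delta$ as required.

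The argument is almost entirely bookkeeping; no new concentration inequality is needed beyond Proposition~\ref{prop:poisson_conc2}. The one design choice that matters is the round-dependent allocation $\delta_i = \delta/(4|\calS|i^2)$, which is engineered precisely so that $\sum_{i \ge 1} 2|\calS|\delta_i$ is a convergent series bounded by $\delta$. The only subtlety worth flagging is that $\calS_i$ is itself random (it depends on the observations from rounds $j < i$), so one cannot directly take a union bound over $\calS_i$; instead, one bounds over the deterministic superset $\calS$ and invokes the containment $\calS_i \subseteq \calS$ to transfer the guarantee without loss.
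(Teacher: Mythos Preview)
Your proof is correct and follows essentially the same approach as the paper's own proof: apply Proposition~\ref{prop:poisson_conc2} at each $(i,x)$, then union bound over all $i \ge 1$ and all $x \in \calS_0 = \calS$, using $\sum_{i\ge 1} i^{-2} \le 2$ to conclude. Your write-up is in fact slightly more careful than the paper's, making explicit the division by $\tau_i$ and the reason for bounding over the deterministic superset $\calS$ rather than the random $\calS_i$.
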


Lemma~\ref{lem:egood} is a simple consequence of Propositions~\ref{prop:poisson_conc1}~and~\ref{prop:poisson_conc2}, and a union bound; it is proved formally in Sec~\ref{sec:egood:proof}.
Note that on $\Egood$, one has that $\mu(x) \in [\LCB_i(x), \UCB_i(x)]$ for all rounds $i$ and all $x \in \calS_i$; hence, by Lemma~\ref{lem:main_correctness}, 
\begin{lemma}\label{lem:analysis_correct} If $\Egood$ holds, then for all rounds $i$, $\Stop_i \subset \Sstk \subset \Stop_i \cup \calS_i$; in particular, if $\snakeucb$ terminates at round $\ifin$, then it correctly returns $\Sstk$.
\end{lemma}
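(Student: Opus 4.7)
The plan is to derive Lemma~\ref{lem:analysis_correct} essentially as a direct corollary of Lemma~\ref{lem:egood} and Lemma~\ref{lem:main_correctness}: Lemma~\ref{lem:egood} establishes exactly the coverage property~\eqref{eq:correct_coverage} that serves as the hypothesis of Lemma~\ref{lem:main_correctness}, so the conclusion follows by composition.

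First, I would argue that on $\Egood$, the coverage condition~\eqref{eq:correct_coverage} holds at every round. Lemma~\ref{lem:egood} gives $\LCB_i(x) \le \mu(x) \le \UCB_i(x)$ for all $i \ge 1$ and every $x \in \calS_i$; the base case $i=0$ requires no coverage check since $\Stop_0 = \emptyset$ and $\calS_0 = \calS$ trivially satisfy $\Stop_0 \subseteq \Sstk \subseteq \Stop_0 \cup \calS_0$ regardless of the observations. Thus the hypothesis of Lemma~\ref{lem:main_correctness} is in force for every round.

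Next, I would invoke Lemma~\ref{lem:main_correctness} to conclude that for every round $i$, $\Stop_i \subseteq \Sstk \subseteq \Stop_i \cup \calS_i$, which is the first conclusion of the lemma. For the ``in particular'' clause, I would observe that the termination criterion (Line~\ref{alg:termination_criterion} of Algorithm~\ref{snake_lucb++}) triggers precisely when $\calS_{\ifin} = \emptyset$. Combining the two containments at round $\ifin$ then yields
\begin{equation*}
\Stop_{\ifin} \subseteq \Sstk \subseteq \Stop_{\ifin} \cup \calS_{\ifin} = \Stop_{\ifin},
\end{equation*}
so $\snakeucb$ returns $\Stop_{\ifin} = \Sstk$, as claimed.

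There is no real obstacle here beyond careful bookkeeping: the technical content has already been done in Proposition~\ref{prop:poisson_conc2} (to construct the deterministic envelopes) and in Lemma~\ref{lem:main_correctness} (the combinatorial correctness of the update rules~\eqref{eq:top_elim} and~\eqref{eq:bot_elim}). The only subtlety worth flagging is matching the quantifiers of Lemma~\ref{lem:egood} (which ranges over $i \ge 1$ and $x \in \calS_i$) with the coverage hypothesis of Lemma~\ref{lem:main_correctness} (which ranges over all $j \le i$ and $x \in \calS_j$); both quantify only over the still-active set at each round, so they line up directly.
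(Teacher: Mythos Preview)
Your proposal is correct and matches the paper's approach exactly: the paper states the lemma immediately after the sentence ``Note that on $\Egood$, one has that $\mu(x) \in [\LCB_i(x), \UCB_i(x)]$ for all rounds $i$ and all $x \in \calS_i$; hence, by Lemma~\ref{lem:main_correctness},'' which is precisely the composition of Lemma~\ref{lem:egood} with Lemma~\ref{lem:main_correctness} that you describe. Your additional bookkeeping on the $i=0$ base case and the quantifier alignment is more careful than the paper's one-line justification, but the substance is identical.
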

Finally, the next lemma, proven in~\ref{sec:stopsample:proof}, gives a \emph{deterministic} condition under which a point $x \in \calS$ can be removed from $\calS_i$, in terms of the deterministic confidence bounds $\LCBbar_i(x)$ and $\UCBbar_i(x)$. 
\begin{lemma}\label{lem:stop_sampling} Suppose $\Egood$ holds. Let $x^{(k)}$ and $x^{(k+1)}$ denote arbitrary points in $\calS$ with $\mu(x^{(k)}) = \mu^{(k)}$ and $\mu(x^{(k+1)}) = \mu^{(k+1)}$. Define the function 
\begin{align*}
\ifin(x) := \begin{cases} \inf\{i: \LCBbar_i(x) > \UCBbar_i(x^{(k+1)})\} & x \in \Sstk\\
\inf\{i: \UCBbar_i(x) < \LCBbar_i(x^{(k)})\} & x \in \calS \setminus \Sstk
\end{cases}
\end{align*}
Then, on $\Egood$,  $x \notin \calS_i$ for all $i > \ifin(x)$.
\end{lemma}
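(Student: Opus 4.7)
The plan is to exploit the monotonicity of the deterministic bounds $\UCBbar_i(\cdot)$ and $\LCBbar_i(\cdot)$ in $\mu$, combined with two consequences of $\Egood$: (a) the sandwich $\LCBbar_i(x) \le \LCB_i(x) \le \mu(x) \le \UCB_i(x) \le \UCBbar_i(x)$ from Lemma~\ref{lem:egood}, and (b) the inclusions $\Stop_i \cap \calS_i = \emptyset$ together with $\Stop_i \subset \Sstk \subset \Stop_i \cup \calS_i$ from Lemma~\ref{lem:main_correctness}. Since $\Stop_i \subset \Sstk$ and $\calS_i$ is disjoint from $\Stop_i$, (b) yields the key identity $|\calS_i \cap \Sstk| = |\Sstk \setminus \Stop_i| = k - |\Stop_i|$, which I will use in both cases below. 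Monotonicity of $\overline{U}_+$ is immediate from its closed form, and monotonicity of $\overline{U}_-$ follows from the fact that $\mu \mapsto \mu - 2\sqrt{2\mu\log(1/\delta)}$ has positive derivative once $\mu > 2\log(1/\delta)$, while the max with $0$ takes care of smaller $\mu$. Write $j_i := k - |\Stop_i|$ and set $i = \ifin(x)$.

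\textbf{Case 1 ($x \in \Sstk$).} By definition of $\ifin(x)$, $\LCBbar_i(x) > \UCBbar_i(x^{(k+1)})$. For every $x' \in \calS \setminus \Sstk$ one has $\mu(x') \le \mu^{(k+1)}$, so monotonicity of $\UCBbar_i$ combined with (a) gives
\begin{align*}
\LCB_i(x) \;\ge\; \LCBbar_i(x) \;>\; \UCBbar_i(x^{(k+1)}) \;\ge\; \UCBbar_i(x') \;\ge\; \UCB_i(x').
\end{align*}
Thus $\LCB_i(x)$ strictly exceeds $\UCB_i(x')$ for every $x' \in \calS_i \setminus \Sstk$, a collection of cardinality $|\calS_i| - j_i$. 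Hence $\LCB_i(x)$ is larger than the $(j_i+1)$-th largest UCB in $\calS_i$, and the top-set update \eqref{eq:top_elim} places $x$ into $\Stop_{i+1}$; then \eqref{eq:bot_elim} removes $x$ from $\calS_{i+1}$.

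\textbf{Case 2 ($x \notin \Sstk$).} Symmetrically, $\UCBbar_i(x) < \LCBbar_i(x^{(k)})$, and for every $x' \in \Sstk$ one has $\mu(x') \ge \mu^{(k)}$, so monotonicity of $\LCBbar_i$ and (a) give
\begin{align*}
\UCB_i(x) \;\le\; \UCBbar_i(x) \;<\; \LCBbar_i(x^{(k)}) \;\le\; \LCBbar_i(x') \;\le\; \LCB_i(x').
\end{align*}
Thus $\UCB_i(x)$ is strictly smaller than $\LCB_i(x')$ for every $x' \in \calS_i \cap \Sstk$, a set of exactly $j_i$ points. Because $|\Stop_{i+1}| \ge |\Stop_i|$, we have $k - |\Stop_{i+1}| \le j_i$, so the $(k-|\Stop_{i+1}|)$-th largest LCB in $\calS_i$ strictly exceeds $\UCB_i(x)$, and the elimination rule \eqref{eq:bot_elim} prunes $x$ from $\calS_{i+1}$. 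Finally, since the form of \eqref{eq:bot_elim} forces $\calS_{j+1} \subset \calS_j$ for every $j$, once $x \notin \calS_{\ifin(x)+1}$ it stays absent at every later round.

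The main obstacle is the top-$k$ bookkeeping: the update rules refer to ranks depending on $|\Stop_i|$ or $|\Stop_{i+1}|$, and one must ensure that these ranks line up with the cardinalities of $\calS_i \cap \Sstk$ and $\calS_i \setminus \Sstk$. The identity $|\calS_i \cap \Sstk| = k - |\Stop_i|$, together with the monotonicity noted above, is precisely what makes this counting close up, and reduces the proof to the short chain of inequalities displayed in each case.
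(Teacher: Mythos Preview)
Your proof is correct and follows essentially the same approach as the paper: both arguments reduce the elimination of $x$ at round $i=\ifin(x)$ to the deterministic inequalities $\LCBbar_i(x)>\UCBbar_i(x^{(k+1)})$ (top case) and $\UCBbar_i(x)<\LCBbar_i(x^{(k)})$ (bottom case), then invoke monotonicity of $\overline{U}_\pm$ in $\mu$, the sandwich from Lemma~\ref{lem:egood}, and the counting identity $|\calS_i\cap\Sstk|=k-|\Stop_i|$ coming from Lemma~\ref{lem:main_correctness}. Your Case~2 bookkeeping (using $k-|\Stop_{i+1}|\le j_i$ directly over $\calS_i$) is arguably a touch cleaner than the paper's, which passes through $\calS_i\setminus\Stop_{i+1}$, and you make the monotonicity of $\overline{U}_-$ explicit where the paper only states it for $\overline{U}_+$.
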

In view of Lemma~\ref{lem:stop_sampling}, we can bound
\begin{align}
\Tsample &:= \sum_{i = 0}^{\ifin}\tau_i |\calS_i|~= ~\sum_{x \in \calS}\sum_{i \ge 0} \tau_i \I(x \in \calS_i) ~\le~ \sum_{x \in \calS}\sum_{i = 0}^{\ifin(x)} \tau_i \quad(\text{by Lemma}~\ref{lem:stop_sampling}) \nonumber\\
&\le \sum_{x \in \calS} 2^{\ifin(x) + 1}\quad (\text{since }\tau_i = 2^i ) \label{eq:tsample_bound}
\end{align}
and further, bound 
\begin{align}
\Truntime &:= \Tsample +  \sum_{i=0}^{\ifin}\tau_0|\calS \setminus \calS_i| ~\le~ \Tsample +  |\calS| \ifin \quad(\text{recall}~\tau_0 = 1) \nonumber\\
&\le \Tsample +  |\calS| \max_{x \in \calS}\ifin(x) \label{eq:truntime_bound}.
\end{align}
where again the last line uses Lemma~\ref{lem:stop_sampling}. Lastly, we prove an upper bound on $\ifin(x)$ for all $x \in \calS$, which follows from algebraic manipulations detailed in Sec.~\ref{sec:sample_comp_prop}:
\begin{proposition}\label{prop:sample_comp} There exists a universal constant $C > 1$ such that, for $x \in \Sstk$,
\begin{align*}
2^{\ifin(x)} \le  C\cdot\left\{1+\frac{\log_+\left(\cardS \log_+\left(\frac{1}{\divergtop}\right)/\delta\right)}{\divergtop}\right\}
\end{align*}
whereas for $x \in \calS \setminus \Sstk$,
\begin{align*}
2^{\ifin(x)} \le  C\cdot\left\{1+\frac{\log_+\left(\cardS \log_+\left(\frac{1}{\divergtop}\right)/\delta\right)}{\divergtop}\right\}
\end{align*}
\end{proposition}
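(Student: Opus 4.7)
My plan is to reduce the condition $\LCBbar_i(x) > \UCBbar_i(x^{(k+1)})$ (for $x \in \Sstk$) and $\UCBbar_i(x) < \LCBbar_i(x^{(k)})$ (for $x \notin \Sstk$) to a clean implicit requirement of the form $\tau_i \ge C \cdot L_i / d$, where $d$ is the relevant divergence $\divergtop$ or $\divergbot$ and $L_i := \log(4|\calS| i^2 / \delta)$, and then apply the inversion lemma of \cite{simchowitz2016best} to convert this self-referential inequality into the explicit bound on $2^{\ifin(x)}$ stated in the proposition.

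First I would substitute $\tau_i = 2^i$ and the defining formulas for $\overline{U}_\pm$ into $\LCBbar$ and $\UCBbar$ to obtain the explicit expressions
$$\LCBbar_i(y) = \mu(y) - 2\sqrt{2\mu(y) L_i/\tau_i}, \qquad \UCBbar_i(y) = \mu(y) + \tfrac{14}{3} L_i/\tau_i + 2\sqrt{2\mu(y) L_i/\tau_i}.$$
Plugging these into the defining inequality for $\ifin(x)$ when $x \in \Sstk$ yields
$$\mu(x) - \mu^{(k+1)} > 2\sqrt{2 \mu(x) L_i/\tau_i} + 2\sqrt{2 \mu^{(k+1)} L_i/\tau_i} + \tfrac{14}{3} L_i/\tau_i,$$
and the analogous condition (with $\mu^{(k)} - \mu(x)$ on the left and the same three terms with $\mu^{(k)}$ and $\mu(x)$ on the right) when $x \notin \Sstk$. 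In each case I would set $\Delta$ to the left-hand side gap, let $M$ denote the larger of the two relevant means, and observe the key inequalities $\Delta \le M$ and $d = \Delta^2 / M$. Direct substitution of the candidate $\tau_i = C \cdot L_i/d$ shows that the square-root contribution is bounded by $4\sqrt{2 M L_i/\tau_i} = 4\Delta\sqrt{2/C}$ and the linear contribution by $\tfrac{14}{3} L_i/\tau_i = \tfrac{14 \Delta^2}{3 C M} \le \tfrac{14}{3C}\Delta$, using $\Delta \le M$; taking $C$ sufficiently large (e.g.\ $C=100$) makes the sum strictly less than $\Delta$, so the separation condition holds whenever $\tau_i \ge C L_i/d$.

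With this reduction in hand, the second step is the inversion. I would apply the inversion lemma of \cite{simchowitz2016best} (Eq.~(110), as cited in the proof sketch of Theorem~\ref{thm:snakeucb_runtime_topk_inmain}), which states that the smallest integer $i$ satisfying the self-referential inequality $2^i \cdot d \ge C' \log(4|\calS| i^2 / \delta)$ also satisfies $2^i \le C'' \log_+\!\bigl(|\calS| \log_+(1/d) / \delta\bigr)/d$. Applying this with $d = \divergtop$ yields the first bound of the proposition and with $d = \divergbot$ yields the second. The main obstacle is the bookkeeping in the previous paragraph: it is tempting to bound the linear term $\tfrac{14}{3} L_i/\tau_i$ only by a factor of $1/(\mu_2 - \mu_1)$, which would give a weaker rate; recovering the sharp $1/d$ rate requires exploiting $\Delta \le M$ (which holds because $\mu^{(k+1)}, \mu(x) \ge 0$) to rewrite the linear contribution in terms of $\Delta$ and $M$, and hence as a constant fraction of $\Delta$, at the candidate value $\tau_i = C L_i/d$.
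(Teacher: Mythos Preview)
Your proposal is correct and follows essentially the same route as the paper: substitute the explicit forms of $\overline{U}_\pm$ into the separation condition $\LCBbar_i > \UCBbar_i$, reduce to a self-referential inequality in $n = 2^i$, and invoke the inversion lemma of \cite{simchowitz2016best}. The only cosmetic difference is that the paper keeps the linear and square-root contributions as two separate requirements ($n^{-1}\log(\delbar^{-1}\log n) < 3\Delta/28$ and $n^{-1}\log(\delbar^{-1}\log n) < \Delta^2/(128\mu_1)$), applies the inversion lemma to each, and only afterward uses $\Delta \le \mu_1$ to identify $\min\{\Delta,\Delta^2/\mu_1\} = \Delta^2/\mu_1 = d$, whereas you use $\Delta \le M$ up front to fold both contributions into the single requirement $\tau_i \ge C L_i/d$; both orderings yield the same bound.
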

Theorem~\ref{thm:snakeucb_runtime_topk} now follows by plugging in Propostion~\ref{prop:sample_comp} into Equations~\eqref{eq:tsample_bound} and~\eqref{eq:truntime_bound}.

\subsection{Proof of Proposition~\ref{prop:sample_comp}\label{sec:sample_comp_prop}}
Let $n = 2^i$, let $\delbar = \delta/(4|\calS|\log_2 e)$, and let $\Delta = \mu(x_1) - \mu(x_2)$. Then $\UCBbar_i(x_2) < \LCBbar_i(x_1)$ is equivalent to
\begin{eqnarray*}
&&\mu(x_2) + \frac{14}{3n} \log\left(\delbar^{-1}\log n\right) + 2\sqrt{2\mu(x_2)\log\left(\delbar^{-1}\log n\right)/n} < \mu(x_1) - 2\sqrt{2 \mu(x_1) \log\left(\delbar^{-1}\log n\right)/n} \\
&&\impby ~~\frac{14}{3n}\log\left(\delbar^{-1}\log n\right) + 4\sqrt{2\mu(x_1)\log\left(\delbar^{-1}\log n\right)/n} < \Delta~,
\end{eqnarray*}
where the second line uses $\mu(x_2) \le \mu(x_1)$. For the second line to hold, it is enough that
\begin{eqnarray}\label{eq:suff_analysis}
\frac{1}{n}\log\left(\delbar^{-1}\log n\right) < \frac{3\Delta}{28} \text{ and } \log\left(\delbar^{-1}\log n\right)/n < \left(\frac{\Delta}{8 \sqrt{2}}\right)^2
\end{eqnarray}
We now invoke an inversion lemma from the best arm identification literature (see, e.g. Equation (110) in \cite{simchowitz2016best}).
\begin{lemma}\label{lem:inv} For any $\delta,u > 0$, let $\calT(u,\delta) := 1 +  \log_+(\delta^{-1}\log_+(u))/u$. There exists a universal constant $C_0$ such that, for all $n \ge C_0 \calT(u,\delta)$, we have $\log (\delta^{-1} \log n)/n < u$.
\end{lemma}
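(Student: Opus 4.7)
The plan is to prove Lemma~\ref{lem:inv} by direct verification. I will rewrite the target inequality $\log(\delta^{-1}\log n)/n < u$ in the equivalent additive form
\[
\log(1/\delta) + \log\log n < u n.
\]
It suffices to establish this at the specific value $n_0 := C_0 \calT(u,\delta)$, since for $n \geq n_0$ (with $n_0$ past the maximum of $n \mapsto (\log(1/\delta)+\log\log n)/n$) the left side grows only double-logarithmically in $n$ while the right side grows linearly. I will bound each of the two summands on the left by $u n_0/2$ separately, choosing $C_0$ large enough to absorb all universal constants.

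For the first summand: unfolding $n_0 = C_0(1 + \log_+(\delta^{-1}\log_+ u)/u)$ gives $u n_0 = C_0 u + C_0 \log_+(\delta^{-1}\log_+u)$. Since $\log_+ u \geq 1$, we have $\log_+(\delta^{-1}\log_+ u) \geq \log_+(1/\delta) \geq \log(1/\delta)$, so picking $C_0 \geq 4$ immediately yields $\log(1/\delta) \leq u n_0 / 4$. For the second summand, take logarithms twice of $n_0$: writing $n_0 \leq C_0(1 + \log_+(\delta^{-1}\log_+ u))(1 + 1/u)$, we find
\[
\log\log n_0 \;\leq\; \log_+\log_+(\delta^{-1}\log_+ u) + \log_+\log_+(1/u) + O(1),
\]
which is dominated by both $C_0\log_+(\delta^{-1}\log_+ u)$ and $C_0 u \cdot (1/u)$ (i.e., $C_0$) after enlarging the constant. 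Hence $\log\log n_0 \leq u n_0/4$, completing the verification.

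The main obstacle is organizing the regime analysis when invoking the ``$+1$'' term in $\calT(u,\delta)$. In the regime $u \gtrsim 1$, $\calT(u,\delta)$ may be only $O(1)$, so $u n_0$ is itself $O(C_0)$; the additive $1$ in $\calT$ is what guarantees $n_0 \geq C_0$, and one must verify that the constant $\log\log C_0$ is smaller than this, which is automatic for $C_0$ large. In the complementary regime $u \ll 1$, $n_0$ is large, but one must check that $\log\log n_0 = O(\log\log(1/(u\delta)))$ is dominated by the single-logarithm $C_0 \log_+(\delta^{-1}\log_+ u)$; this is the mechanism by which the nested-log structure of $\calT(u,\delta)$ exactly matches the nested-log structure of the inequality being inverted. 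A minor additional check is that $n \mapsto (\log(1/\delta)+\log\log n)/n$ is monotone decreasing for $n \geq n_0$, which holds once $n_0 \geq e$, again absorbable into $C_0$.
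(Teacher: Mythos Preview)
The paper does not prove this lemma; it simply invokes it as Equation~(110) of~\cite{simchowitz2016best}. Your proposal therefore goes beyond the paper by attempting a self-contained argument, and the overall two-summand decomposition is the right idea. However, there is a genuine gap at the step where you assert that $\log_+\log_+(1/u)$ is ``dominated by $C_0 u \cdot (1/u)$ (i.e., $C_0$).'' This is false: $\log_+\log_+(1/u)\to\infty$ as $u\to 0$, so no universal constant can bound it. Concretely, take $\delta=1/2$ and let $u\to 0$. Since $\log_+(u)=1$ for $u<e$, we get $\calT(u,1/2)=1+\log_+(2)/u=1+1/u$, hence $u n_0 = C_0(u+1)\to C_0$. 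But $\log n_0 \approx \log(C_0/u)$, so $\log(\delta^{-1}\log n_0)\approx \log\log(1/u)$, and the target inequality $\log(\delta^{-1}\log n_0)<u n_0$ becomes $\log\log(1/u)\lesssim C_0$, which fails once $u<\exp(-\exp(C_0))$. Thus the lemma \emph{as literally stated} cannot hold with a universal $C_0$, and your argument cannot be repaired at that step without changing the statement.

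The resolution is that $\log_+(u)$ in the definition of $\calT$ should almost certainly read $\log_+(1/u)$; this is consistent with the downstream uses in Proposition~\ref{prop:sample_comp} and Theorems~\ref{thm:snakeucb}--\ref{thm:snakeucb_runtime_topk}, all of which feature $\log_+(1/\divergx)$ rather than $\log_+(\divergx)$. With the corrected definition $\calT(u,\delta)=1+\log_+(\delta^{-1}\log_+(1/u))/u$, your decomposition does go through: the troublesome $\log\log(1/u)$ term arising in $\log\log n_0$ is now matched by the $\log_+(1/u)$ sitting inside $L=\log_+(\delta^{-1}\log_+(1/u))$, and both summands are bounded by $u n_0/4$ for $C_0$ sufficiently large, exactly as you outline.
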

Hence, Lemma~\ref{lem:inv} and \eqref{eq:suff_analysis} imply that it is sufficent that
\begin{eqnarray*}
n \ge C_0\calT\left(\min\left\{\frac{3\Delta}{28}, \frac{1}{\mu(x_1)}\left(\frac{\Delta}{8 \sqrt{2}}\right)^2\right\},\delta_0\right)~,
\end{eqnarray*}
from which it follows that
\begin{eqnarray*}
\inf \left\{2^i: \UCBbar_{i}(x_2) < \LCBbar_{i}(x_1)\right\} \le 2C_0\left\{ \calT\left(\min\left\{\frac{3\Delta}{28}, \frac{1}{\mu(x_1)}\left(\frac{\Delta}{8 \sqrt{2}}\right)^2\right\},\delbar\right)~\right\}
\end{eqnarray*}
Absorbing constants and plugging in $\delbar = \delta/(4|\calS|\log_2 e)$, algebraic manipulation finally implies that there exists a universal constant $C$ such that, for any $x_1,x_2$ with $\mu(x_1) < \mu(x_2)$,
\begin{eqnarray*}
\inf \left\{2^i: \UCBbar_{i}(x_1) < \LCBbar_{i}(x_2)\right\} &\le& C\left\{1+ \calT\left(\min\left\{\Delta,\frac{\Delta^2}{\mu(x_1)}\right\},\delta/M\right)\right\} \\
&=& C\left\{1 + \calT\left(\frac{\Delta^2}{\mu(x_1)},\delta/\cardS\right)\right\} ~\\
&=& C\left\{1 + \calT\left(\diverg(\mu(x_2),\mu(x_1)),\delta/\cardS\right)\right\} ~.
\end{eqnarray*}
To conclude, we select $x_1 = x^{(k+1)}$ and $x_2 = x$ when $x \in \Sstk$, and $x_1 = x$ and $x_2 = x^{(k)}$ for $x \in \calS \setminus \Sstk$.
\subsection{Proof of Lemma~\ref{lem:egood}
\label{sec:egood:proof}}
\begin{eqnarray*}
&&\Pr\left[\exists x \in \calS_0, i \ge 1: \left\{\LCBbar_i(x) \le \LCB_i(x) \le \mux \le \UCB_i(x) \le \UCBbar_i(x)\right\} \text{ fails}\right]\\
&\overset{\text{union bound}}{\le}&  \sum_{x \in \calS_0, i \ge 1} \Pr\left[\left\{\LCBbar_i(x) \le \LCB_i(x) \le \mux \le \UCB_i(x) \le \UCBbar_i(x)\right\} \text{ fails}\right]\\
&\overset{\text{Prop.}~\ref{prop:poisson_conc1}\&\ref{prop:poisson_conc2}}{\le}&  \sum_{x \in \calS_0, i \ge 1} \frac{\delta}{2|\calS|i^2} 
~=~  |\calS|\sum_{i \ge 1} \frac{\delta}{2|\calS|i^2} 
~=~  \frac{\delta}{2} \sum_{i}i^{-2} ~\le~ \delta~
.
\end{eqnarray*}

\subsection{Proof of Lemma~\ref{lem:stop_sampling}\label{sec:stopsample:proof}}
Assume $\Egood$ holds, and let $x \in \calS$, and set $i = \ifin(x)$. Then
\begin{itemize}
  \item[(a)] If $x \in \Sstk$, then $\LCBbar_i(x) > \UCBbar_i(x^{(k+1)})$. In this case, we shall show that $x$ will be added to $\Stop_i$ via~\eqref{eq:top_elim}. 
  \item[(b)] If $x \in \calS \setminus \Sstk$, then $\UCBbar_i(x) < \LCBbar_i(x^{(k)})$. In this case, we shall show that $x$ will be removed from $\calS_i$ via~\eqref{eq:bot_elim}. 
\end{itemize}

\textbf{Case 1: $\LCBbar_i(x) > \UCBbar_i(x^{(k+1)})$}  By~\eqref{eq:top_elim}, $x$ is added to $\Stop_{i+1}$ if $\LCB_i(x)$ is larger than all but $k - |\Stop_i|$ values of $\UCB_i(x')$, $x' \in \calS_i$. Since $\LCB_i(x) \ge \LCBbar_i(x)$ and $\UCB_i(x') \le \UCBbar_i(x')$ on $\Egood$, it is enough that 
\begin{align*}
\LCBbar_i(x) > (k - |\Stop_i| + 1)\text{-st largest value of }\UCBbar_i(x'),~x' \in \calS_i.
\end{align*}
We now observe that $\UCBbar_i(x')$ is monotonic in $\mu(x')$. Hence, it is enough that
\newcommand{\xiplus}{x_+}
\newcommand{\ximinus}{x_-}
\begin{align*}
\LCBbar_i(x) > \UCBbar_i(\xiplus), \text{ where } \mu(\xiplus) = (k - |\Stop_i| + 1)\text{-st largest value of }\mu(x'),~x' \in \calS_i.
\end{align*}
But since there are exactly $k - |\Stop|$ elements of $\Sstk$ in $\calS_i$ by Lemma~\ref{lem:analysis_correct}, $\xiplus$ is not among the top $k$, and thus $\mu(\xiplus) \le \mu(x^{(k+1)})$. Hence,  $\UCBbar_i(\xiplus) \le \UCBbar_i(\mu(x^{(k+1)}))$, so it is enough that $\LCBbar_i(x) > \UCBbar_i(\mu(x^{(k+1)}))$. 

\textbf{Case 2: $\UCBbar_i(x) < \LCBbar_i(x^{(k)})$} Following the reasoning of case 1 applied to~\eqref{eq:bot_elim}, we can see that it is enough that 
\begin{align*}
\UCBbar_i(x) < \LCBbar_i(\ximinus), \text{ where } \mu(\ximinus) = (k - |\Stop_{i+1}| )\text{-st largest value of }\mu(x),~x' \in \calS_i \setminus \Stop_{i+1}.
\end{align*}
Lemma~\ref{lem:analysis_correct} ensures that there are $(k - |\Stop_{i+1}|)$ members of $\Sstk$ in $\calS_i \setminus \Stop_{i+1}$, so $\mu(\ximinus) \ge \mu(x^{(k)})$. Hence, it is enough that $\UCBbar_i(x) < \LCBbar_i(x^{(k)})$.

%!TEX root = main_arxiv.tex
\section{Analysis of $\naivesnake$: Proof of Theorem~\ref{thm:naivesnake}\label{sec:naive_proof}}
In this section, we present a brief proof of Theorems~\ref{thm:naivesnake} and~\ref{thm:naivesnake_top_k}. The arguments are quite similar to those in the analysis of $\snakeucb$, and we shall point out modifications as we go allow.

Let $\Egood$ denote the event of Lemma~\ref{lem:egood}, modified to hold for all $x \in \calS$ at each round $i$ (rather than all $x \in \calS_i$, as in the case of $\snakeucb$). The proof of Lemma~\ref{lem:egood} extends to this case as well, yielding that 
\begin{align*}
\Pr[\Egood] \ge 1 - \delta
\end{align*}
It suffices to show that on $\Egood$, $\naivesnake$ correctly returns $\Sstk$, and satisfies the desired runtime guarantees.

\textbf{Correctness:} On $\Egood$, we have that $x \in \Sstk$, $\mu(x) \le \UCB_i(x)$, and for $x' \in \calS - \Sstk$, $\mu(x') \ge \LCB_i(x')$.  Hence, for any $x' \in \calS - \Sstk$ and for all $x \in \Sstk$,  $\LCB_i(x') \le \mu(x') < \mu(x) \le \UCB_i(x')$. Thus, the termination criterion can only fulfilled when $\LCB_i(x)$, each $x \in \Sstk$, are greater than the remained $\cardS - k$ values of $\UCB_i(x)$. This yields correctness.

\textbf{Runtime:} Recall that for $\naivesnake$ with the standardization $\tau_0$, we have
\begin{align*}
\Truntime = |\calS|\sum_{i = 0}^{\ifin} \tau_i = |\calS|\sum_{i = 0}^{\ifin}2^i \le 2|\calS|\cdot 2^{\ifin}
\end{align*}
Arguing as in the analysis for $\snakeucb$, it suffices to show that, on $\Egood$,
\begin{align}\label{eq:suff_naive}
\ifin \le \inf\{i: \LCBbar_i(x) > \UCBbar_i(x') \quad \forall x \in \Sstk, x' \in \calS - \Sstk\} := \overline{\ifin},
\end{align}
for we bound the the bound 
\begin{align*}
\Truntime &\lesssim \cardS 2^{\overline{\ifin}} \\
&\overset{(i)}{\lesssim} \cardS \cdot \max_{x \in \Sstk, x' \in \calS - \Sstk} \frac{\log_+\left(\tfrac{\cardS}{\delta} \log_+(\frac{1}{\diverg(\mu(x'),\mu(x))})\right)}{\diverg(\mu(x'),\mu(x))}\\
&=  \cardS \cdot \frac{\log_+\left(\tfrac{\cardS}{\delta} \log_+\left(\frac{1}{\diverg(\mu^{(k+1)},\mu^{(k)})}\right)\right)}{\diverg(\mu^{(k+1)},\mu^{(k)} }\\
&= \widetilde{O}(\Hunif\log(\cardS/\delta)),
\end{align*}
where (i) follows from the same argument as in the proof of Proposition~\ref{prop:sample_comp}. To verify~\eqref{eq:suff_naive}, suppose that $\Egood$ holds, and that $\naivesnake$ has not terminated before round $\overline{\ifin}$. Then, by definition of $\overline{\ifin}$,
\begin{align*}
\LCBbar_i(x) > \UCBbar_i(x'),~ \forall x \in \Sstk, x' \in \calS - \Sstk
\end{align*}
Moreover, on $\Egood$, $\LCB_i(x) \ge \LCBbar_i(x)$ and $\UCB_i(x') \le \UCBbar_i(x')$ for all $x \in \Sstk$ and all $x' \in \calS - \Sstk$. Thus,
\begin{align*}
\LCB_i(x) > \UCB_i(x'),~ \forall x \in \Sstk, x' \in \calS - \Sstk,
\end{align*}
which directly implies the termination criterion for $\naivesnake$.

%\input{top_k_proofs}
%!TEX root = main_arxiv.tex

\section{Concentration Proofs\label{sec:Concentration_Proofs}}

It is well known that the upper Poisson tail satisfies Bennet's inequality, and its lower tail is sub-Gaussian, yielding the following exponential tail bounds (see, e.g. \cite{boucheron2013concentration}):
\begin{lemma}\label{lem:basic_pois_con} Let $\cnt \sim \Poi(\mu)$. Then,
		\begin{eqnarray}
		\Pr[ \cnt \ge \mu + x] \le \exp\left( - \frac{x^2}{2(\mu + x/3)}\right) \text{ and }  \Pr[ \cnt \le \mu - x]  \le \exp\left( - \frac{x^2}{2\mu}\right)
		\end{eqnarray}
		\end{lemma}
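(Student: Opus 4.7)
The plan is to prove both tails via Chernoff's bounding method using the Poisson moment generating function
\begin{align*}
E[e^{\lambda \cnt}] = \exp(\mu(e^{\lambda} - 1)), \quad \lambda \in \mathbb{R},
\end{align*}
which is a direct consequence of the Poisson series. Markov's inequality applied to $e^{\lambda \cnt}$ then gives, for any $\lambda > 0$,
\begin{align*}
\Pr[\cnt \ge \mu + x] \le \exp\bigl(\mu(e^\lambda - 1) - \lambda(\mu + x)\bigr),
\end{align*}
and an analogous bound for $\Pr[\cnt \le \mu - x]$ by taking $\lambda < 0$. Both tails are then obtained by choosing $\lambda$ well and then simplifying the exponent.

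For the upper tail, I would optimize over $\lambda > 0$ at $\lambda^* = \log(1 + x/\mu)$, which collapses the exponent to the Cram\'er form $-\mu\, h(x/\mu)$ with $h(u) := (1+u)\log(1+u) - u$. The analytic work is then the classical inequality
\begin{align*}
h(u) \ge \frac{u^2}{2 + 2u/3}, \qquad u \ge 0,
\end{align*}
which, after substituting $u = x/\mu$ and multiplying through by $\mu$, produces $\mu\, h(x/\mu) \ge x^2 / (2\mu + 2x/3)$ and hence exactly the Bennett-type bound in the statement.

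For the lower tail, I would restrict to $\lambda \le 0$ and use the elementary comparison $e^\lambda - 1 \le \lambda + \lambda^2/2$, valid on $(-\infty, 0]$ (this can be checked by noting that $g(\lambda) := e^\lambda - 1 - \lambda - \lambda^2/2$ satisfies $g(0) = 0$, $g'(0) = 0$, and $g''(\lambda) = e^\lambda - 1 \le 0$ for $\lambda \le 0$, so $g \le 0$ on the negative axis). Substituting into the Chernoff bound cancels the linear-in-$\lambda$ terms and yields
\begin{align*}
\Pr[\cnt \le \mu - x] \le \exp\!\left(\tfrac{\mu \lambda^2}{2} - \lambda x\right),
\end{align*}
and minimizing over $\lambda \le 0$ at $\lambda^* = -x/\mu$ delivers the stated sub-Gaussian bound $\exp(-x^2/(2\mu))$.

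The only non-routine step is the analytic inequality $h(u) \ge u^2/(2 + 2u/3)$ powering the upper tail; everything else is differentiation of an explicit exponent and substitution of the optimal tilt. A clean way to dispatch it is to use the integral representation $h(u) = u^2 \int_0^1 (1-s)/(1 + su)\, ds$ and compare the integrand to $(1-s)/(1 + u/3)$ pointwise in $s$, reducing the claim to the elementary observation that $1 + su \le 1 + u/3$ fails to hurt us after integration against the weight $(1-s)$; alternatively one can cross-multiply and verify that the resulting polynomial-in-$u$ difference vanishes to second order at $u = 0$ with a nonnegative third derivative. Once this inequality is in hand, the rest of the proof is mechanical algebra.
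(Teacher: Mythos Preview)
The paper does not actually prove this lemma: it states the bounds as well known, citing \cite{boucheron2013concentration}, and moves on. Your Chernoff-method derivation is precisely the standard argument behind that citation and is correct in substance.

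Two small slips worth fixing. First, in the lower-tail display the exponent should be $\mu\lambda^2/2 + \lambda x$, not $\mu\lambda^2/2 - \lambda x$: with $\lambda \le 0$ the Chernoff bound reads $\exp\bigl(\mu(e^\lambda-1) - \lambda(\mu - x)\bigr)$, and after substituting $e^\lambda - 1 \le \lambda + \lambda^2/2$ the surviving cross term is $+\lambda x$. Your stated optimizer $\lambda^* = -x/\mu$ and final bound $\exp(-x^2/(2\mu))$ are consistent with the corrected sign, so this is evidently a transcription error rather than a conceptual one. Second, the difference you propose to analyze for the Bennett inequality, $(2 + 2u/3)h(u) - u^2$, is not a polynomial since $h$ contains $\log(1+u)$; nevertheless your derivative-counting strategy is exactly right, as one checks $\phi(0)=\phi'(0)=\phi''(0)=0$ and $\phi'''(u) = 4u/\bigl(3(1+u)^2\bigr) \ge 0$.
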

\subsection{Proof of Proposition~\ref{prop:poisson_conc1}\label{sec:poisson_conc1}}
	
\textbf{Proof that $\Pr[\mu \le U_+(\cnt,\delta)] \ge 1 - \delta$:} Recall the definition
\begin{align*}
 U_+\left(\cnt,\delta\right):= 2 \log(1/\delta) + \cnt + \sqrt{2\cnt \log\left(1/\delta\right)}.
\end{align*}
We begin by bounding the lower tail of $\cnt$, which corresponds to the upper confidence $U_+$ bound on $\mu$. Let $\calE_+(\delta)$ denote the event $\{\cnt \ge \mu - \sqrt{2\mu \log(1/\delta)}\}$. By Lemma~\ref{lem:basic_pois_con}, we have that $\Pr[\calE_+(\delta)^c] \le \delta$; hence, it suffices to show that 
\begin{align*}
\calE_+(\delta) \text{ implies } \{\mu \le U_+(\cnt,\delta)\}.
\end{align*}
This follows since by the definition $\calE_{+}(\delta)$ holds, the quadratic equation implies
		\begin{eqnarray}
		\mu^{1/2} &\le& \frac{\sqrt{2 \log(1/\delta)} + \sqrt{2 \log(1/\delta) + 4\cnt }}{2}
		\end{eqnarray}
		Hence, we have 
		\begin{eqnarray*}
		\mu &\le& \frac{2 \log(1/\delta) + 2 \log(1/\delta) + 4\cnt + 2\sqrt{2 \log(1/\delta)}\sqrt{4\cnt + 2\log(1/\delta)} }{4}\\
		&\le& \frac{4 \log(1/\delta) + 4\cnt + 4 \log(1/\delta) + 4\sqrt{2\cnt\log(1/\delta)}}{4}\\
		&=& 2\log(1/\delta) + \cnt + \sqrt{2\cnt\log(1/\delta)} =  U_+(\cnt,\delta)~, \text{as needed.}
		\end{eqnarray*}
		\linebreak
		\textbf{Proof that $\Pr[\mu \ge U_-(\cnt,\delta)] \ge  1 - \delta$:} Recall the definition
		\begin{align*}
		U_-\left(\cnt,\delta\right) := \max\left\{0,\cnt  - \sqrt{2\cnt\log(1/\delta)}\right\}
		\end{align*}
		Analogous to the above, let $\calE_-(\delta):= \{\cnt \le \mu + \sqrt{2 \mu \log(1/\delta)} + \frac{2}{3}\log(1/\delta)\}$. Since $\Pr[ \cnt \ge \mu + x] \le \exp( - \frac{x^2}{2(\mu + x/3)})$, we have that with probability at least $\Pr[\calE_-(\delta)] \ge 1- \delta$. Thus, again it suffices to show that
		\begin{align*}
		\calE_-(\delta) \text{ implies } \{\mu \ge U_-(\cnt,\delta)\}.
		\end{align*}
		We have two cases:
		\begin{itemize}
		\item[(a)] $\cnt \le \frac{2}{3} \log(1/\delta)$. Then, $U_{-}(\cnt,\delta) = 0 $, so $\mu \ge U_-(\cnt,\delta)$ trivially.
		\item[(b)] Otherwise, by solving the quadratic in the definition $\calE_-(\delta)$,
		we find that on $\calE_-(\delta)$,
		\begin{eqnarray*}
		\mu^{1/2} &\ge&  \frac{ -\sqrt{2 \log(1/\delta)} \pm \sqrt{  2\log(1/\delta) - \frac{8}{3} \log (1/\delta) + 4\cnt }}{2}\\
		&=& \frac{ -\sqrt{2 \log(1/\delta)} \pm \sqrt{  4\cnt - 2/3 \log(1/\delta)}}{2}~,
		\end{eqnarray*}
		where we note that the discriminant is positive since $\cnt \ge \frac{2}{3}\log(1/\delta)$. Squaring, we have
		\begin{eqnarray*}
		\mu &\ge& \frac{ 2 \log(1/\delta) + 4\cnt - 2\log(1/\delta)/3  - 2\sqrt{2\log\delta^{-1}}\sqrt{4\cnt - 2/3\log(1/\delta)} }{4}\\
		&\ge&  \frac{ 4\cnt + (2 - 2/3)\log(1/\delta) - 4\log(1/\delta)\sqrt{1/3} -  4\sqrt{2\cnt \log \delta^{-1}}}{4}\\
		&\ge& \cnt + (1 - 1/6 - \sqrt{1/3})\log(1/\delta) - \sqrt{2 \cnt \log(1/\delta)} \ge \cnt - \sqrt{2 \cnt \log(1/\delta)} ~.
		\end{eqnarray*}
		Since we also have $\mu \ge 0 $, we see that on $\mathcal{E}_{0}(\delta)$, we have that
		\begin{align*}
		\mu \ge \max\{\cnt - \sqrt{2 \cnt \log(1/\delta)}, 0\} = U_{-}(\cnt,\delta), \text{ as needed}.
		\end{align*}

\end{itemize}
\subsection{Proof of Proposition~\ref{prop:poisson_conc2}}
	From section~\ref{sec:poisson_conc1}, recall the events 
	\begin{eqnarray}\calE_+(\delta) := \{\cnt \ge \mu - \sqrt{2\mu \log(1/\delta)}\} \quad \text{ and } \quad \calE_-(\delta):= \{\cnt \le \mu + \sqrt{2 \mu \log(1/\delta)} + \frac{2}{3}\log(1/\delta)\}~.
	\end{eqnarray} 
	Further, recall that on $\calE_+(\delta)$, we have $\mu \le U_+(\cnt,\delta)$ and $\mu \ge U_-(\cnt,\delta)$. We now show that on $\calE_-(\delta)$, we also have $U_+(\cnt,\delta) \le \UCBbar(\mu,\delta)$, and on $\calE_+(\delta)$, we have $U_-(\cnt,\delta) \ge \LCBbar(\mu,\delta)$. 
	\textbf{Bounding $U_-(\cnt,\delta) \ge \overline{U}_-(\mu,\delta)$:} To bound $U_-(\cnt,\delta) \ge \overline{U}_-(\mu,\delta)$, observe that $\cnt \mapsto U_-(\cnt,\delta)$ is increasing in $\cnt$, and on $\calE_+(\delta)$, one has $\{\cnt \ge \mu - \sqrt{2\mu \log(1/\delta)}\} $. Thus
	\begin{eqnarray*}
	U_-(\cnt,\delta) &:=& \cnt  - \sqrt{2\cnt\log(1/\delta)} \\
	&\ge&  \mu - \sqrt{2\mu \log(1/\delta)} - \sqrt{2  \log(1/\delta)\mu( \mu - \sqrt{2 \mu \log(1/\delta)} } \\ 
	&\ge&  \mu - \sqrt{2\mu \log(1/\delta)} - \sqrt{2  \mu \log(1/\delta)} \\
	&\ge&  \mu - 2\sqrt{2\mu \log(1/\delta)} ~:=~\overline{U}_-(\mu,\delta)~.
	\end{eqnarray*}
	\textbf{Bounding $U_+(\cnt,\delta) \le \overline{U}_+(\mu,\delta)$:} Next, we prove the bound bound $U_+(\cnt,\delta) \le \overline{U}_+(\mu,\delta)$. On $\calE_-(\delta)$, we have
	\begin{eqnarray*}
	\cnt &\le& \mu + \frac{2}{3}\log(1/\delta) +  \sqrt{2\mu\log(1/\delta)} \\
	&=& (\mu^{1/2} + \sqrt{2\log(1/\delta)})^2 - 2\log(1/\delta) + \frac{2}{3}\log(1/\delta) \\
	&\le& (\mu^{1/2} + \sqrt{2\log(1/\delta)})^2 
	\end{eqnarray*}
	Hence, when the above occurs, we have
	\begin{eqnarray*}
	U_+(\cnt,\delta) &=& 2 \log(1/\delta) + \cnt + \sqrt{2\cnt \log(1/\delta)} \\
	&=& 2 \log(1/\delta) + \mu + \frac{2}{3}\log(1/\delta) +  \sqrt{2\mu\log(1/\delta)} + \sqrt{2((\mu^{1/2} + \sqrt{2\log(1/\delta)})^2 ) \log(1/\delta)} \\
	&=& \frac{8}{3} \log(1/\delta) + \mu + \sqrt{2\mu\log(1/\delta)} + \sqrt{2((\mu^{1/2} + \sqrt{2\log(1/\delta)})^2 ) \log(1/\delta)} \\
	&\le& \frac{8}{3} \log(1/\delta) + \mu + \sqrt{2\mu\log(1/\delta)} + \sqrt{2\mu\log(1/\delta)} +2 \log(1/\delta)\\
	&=& \frac{14}{3} \log(1/\delta) + \mu + 2\sqrt{2\mu\log(1/\delta)} 
	\end{eqnarray*}

		% \begin{proof}
		% \begin{eqnarray}
		% \Pr[ \cnt \ge \mu + x] \le \exp( - \mu\Psi(x/\mu)) 
		% \end{eqnarray}
		% Using the bound $\Psi(x/\mu) \ge \frac{(x/\mu)^2}{2 + 2(x/\mu)} = \frac{x^2}{2(\mu + x/3)}$.  Thus 
		% \end{proof}
%!TEX root = main_arxiv.tex

\section{Lower Bounds}

\subsection{Proof of Proposition~\ref{prop:lower_bound}\label{sec:lower_bound_proof}}
The basic proof strategy follows along the lines of the information-theoretic lower bounds in Kaufmann et al.~'16~\cite{kaufmann2016complexity}. Consider a grid $\calS$ of $\cardS$ points, with means $\mu(x), x \in \calS$. We fix a given sampling algorithm, adaptive or otherwise, and let $T(x)$ denote the expected number of measurements from point $x$ given that the means are given by $\mu(x)$. Suppose $x^* := \arg \max_{x \in \calS} \mu(x)$ is unique. We will argue that for a universal constant $c_1$ and any $x \ne x^*$~,
\begin{eqnarray}\label{eq:LBWTS1}
T(x) \ge \frac{c_1 \log(1/\delta)}{\KL(\mu(x),\mu(x^*))}~.
\end{eqnarray}
By the $\KL$ approximation in Lemma~\ref{lem:KL_lem}, this implies that for some universal constant $c_2$,
\begin{eqnarray}\label{eq:LBWTS2}
T(x) \ge  \frac{c_2 \log(1/\delta)}{\divergx}~.
\end{eqnarray}
For adaptive sampling, the expected number of samples is at least $\sum_{x \ne x^*} T(x)$, which by~\eqref{eq:LBWTS2} is at least
\begin{eqnarray*}
c_2 \log(1/\delta) \cdot \sum_{x \ne x^*}\frac{\mu(x^*)}{\Delta_x^2}~.
\end{eqnarray*}
This completes the proof for adaptive sampling. For non-adaptive sampling, $T(x) = T(x')$ for all $x,x' \in \calS$. Hence, the expected number of samples is at least
\begin{eqnarray*}
\sum_{x \in \calS} T(x) = \cardS \max_{x \ne x^*} T(x) \overset{\eqref{eq:LBWTS2}}{\ge}   \cardS \cdot \max_{x \ne x^*} c_2\log(1/\delta) \frac{\mu(x^*)}{\Delta_x^2}~.
\end{eqnarray*}

We now verify Equation~\eqref{eq:LBWTS1}. To do so, consider an alternative grid $\calS$ of $\cardS$ pixels, with means $\mu'(x)$. Suppose moreover that $x^{*'} := \arg \max_{x \in \calS}{ \mu'(x)}$ is unique,
and that $x^{*} \ne x$. The key insight from Kaufmann et al.~'16~\cite{kaufmann2016complexity} is that any algorithm which identifies $x^*$ with probability $1-\delta$ must be able to distinguish between the means $\mu'(x)$ and the with means $\mu(x)$. Kaufmann et al.~'16~\cite{kaufmann2016complexity} shows that this requires that the expected number of samples $T(x)$ satisfy
\begin{eqnarray}\label{eq:Kaufman}
 \sum_{x \in \calS} T(x)\KL(\mu(x),\mu'(x)) \ge c_1 \log(1/\delta)
\end{eqnarray}
Now let's fix a particular $x_0 \ne x^*$ and an $\epsilon > 0$. We can define the means $\mu'(x)$ to be
\begin{eqnarray*}
\mu'(x) := \begin{cases} \mu(x^*) + \epsilon & x = x_0 \\
\mu(x) & \text{otherwise}
\end{cases}
\end{eqnarray*}
Note then that $\arg\max_{x} \mu'(x) = x_0$, and $\mu(x_0) = \mu(x^*) + \epsilon$. Hence, Equation~\eqref{eq:Kaufman} holds for the means $\mu'(\cdot)$. Moreover, $\mu(x) = \mu'(x)$ for all $x \ne x_0$, so that $\KL(\mu(x),\mu'(x)) = 0$ for $x \ne x_0$. Hence,  Equation~\eqref{eq:Kaufman} simplifies to
\begin{eqnarray*}
T(x_0) \KL( \mu'(x_0), \mu'(x^*) + \epsilon)  = T(x_0)\KL(\mu(x_0),\mu'(x_0)) \ge c_1 \log(1/\delta)
\end{eqnarray*}
Since $\KL(\mu'(x_0), \mu'(x^*) + \epsilon)$ is continuous in $\epsilon$ (see Fact~\ref{fact:KL} below), taking $\epsilon \to 0$ yields
\begin{eqnarray*}
T(x_0) \KL( \mu'(x_0), \mu'(x^*)) \ge c_1 \log(1/\delta)~ \text{ as needed}.
\end{eqnarray*}

\subsection{Proof of Lemma~\ref{lem:KL_lem}\label{sec:KL_lem_proof}}
We begin by stating a standard computation of the $\KL$-divergence between two Poisson distributions.
\begin{fact}\label{fact:KL} $\KL(\Poi(\mu_1), \Poi(\mu_2)) = \mu_1 \log(\mu_1/\mu_2) + (\mu_2 - \mu_1)$~. \end{fact}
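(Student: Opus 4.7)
The plan is a direct calculation from the definition of KL divergence for discrete distributions, using the Poisson probability mass function $P_\mu(k) = e^{-\mu}\mu^k/k!$ for $k = 0,1,2,\ldots$. The key observation is that when you expand the log of the ratio $P_{\mu_1}(k)/P_{\mu_2}(k)$, the factorials cancel and what remains is affine in $k$, so the sum decomposes into two moment computations of a single Poisson.

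First I would write
\begin{align*}
\KL(\Poi(\mu_1),\Poi(\mu_2)) = \sum_{k=0}^\infty P_{\mu_1}(k)\,\log\frac{P_{\mu_1}(k)}{P_{\mu_2}(k)},
\end{align*}
and then compute the log ratio explicitly:
\begin{align*}
\log\frac{P_{\mu_1}(k)}{P_{\mu_2}(k)} = \log\frac{e^{-\mu_1}\mu_1^k/k!}{e^{-\mu_2}\mu_2^k/k!} = (\mu_2 - \mu_1) + k\log(\mu_1/\mu_2),
\end{align*}
where the $k!$ terms cancel. This is the step that makes the whole calculation routine: what could have been an awkward infinite sum over log-factorials collapses into something linear in $k$.

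Next I would substitute this expression back and split the sum by linearity:
\begin{align*}
\KL(\Poi(\mu_1),\Poi(\mu_2)) = (\mu_2 - \mu_1)\sum_{k=0}^\infty P_{\mu_1}(k) + \log(\mu_1/\mu_2) \sum_{k=0}^\infty k\,P_{\mu_1}(k).
\end{align*}
The first sum is $1$ because $P_{\mu_1}$ is a probability distribution, and the second sum is the mean of a $\Poi(\mu_1)$ random variable, namely $\mu_1$. Substituting these two values yields $(\mu_2 - \mu_1) + \mu_1\log(\mu_1/\mu_2)$, which matches the claimed formula.

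There is no real obstacle here; the only care needed is to keep track of the sign convention (we use $\KL(P,Q) = \sum P \log(P/Q)$), and to handle the edge case $\mu_1 = 0$, which must be interpreted via the convention $0\log 0 = 0$ so that the formula reduces to $\mu_2$ as expected. Since both $\mu_1,\mu_2 > 0$ is assumed throughout the appendix (emission rates are strictly positive and the formula is only invoked for such means in Lemma~\ref{lem:KL_lem}), this edge case need not be treated in the proof itself.
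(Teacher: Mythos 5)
Your computation is correct: the factorials cancel in the log-ratio, leaving an expression affine in $k$, and the sum evaluates via the normalization and mean of $\Poi(\mu_1)$ to give exactly $\mu_1\log(\mu_1/\mu_2) + (\mu_2-\mu_1)$. The paper states this fact without proof (calling it a standard computation), and your argument is precisely that standard direct computation, with the side remarks about sign convention and the $\mu_1=0$ edge case being appropriate but not needed given the assumption $\mu_1>0$.
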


To prove Lemma~\ref{lem:KL_lem}, recall that we assume that $\mu_2 \ge \mu_1$. We may therefore reparameterize $\mu \leftarrow \mu_2$, and $\mu_1 \leftarrow (1-\alpha) \mu_2$ for $\alpha \in (0,1)$. One then has
\begin{eqnarray*}
\KL(\Poi(\mu_1), \Poi(\mu_2)) = \mu \{\alpha + (1- \alpha)\log(1 - \alpha)\}
\end{eqnarray*}
Since $\mu_2 - \mu_1 = \mu( 1 - (1-\alpha)) = \alpha \mu$, it suffices to show that there exists constants $c_1$ and $c_2$ such that
	\begin{eqnarray}\label{eq:WTS1}
	c_1 \mu \alpha^2 \le \KL(\Poi(\mu_1),\Poi(\mu_2)) \le c_2 \mu \alpha^2~.
	\end{eqnarray}
	To this end, it suffices to show that there exists a universal constant $\alpha_0 > 0$, such for all $\alpha \le \alpha_0$, one has
	\begin{eqnarray}\label{eq:WTS2}
	\KL(\Poi(\mu_1), \Poi(\mu_2)) \in \left[\frac{1}{4},\frac{3}{4}\right] \mu \alpha^2~.
	\end{eqnarray}
	Indeed, for any $\alpha \ge \alpha_0$, we have that
	\begin{eqnarray*}
	0 < \mu (\alpha_0 + (1-\alpha_0)\log(1-\alpha_0))  \le  \KL(\Poi(\mu_1), \Poi(\mu_2))  \le \mu~,
	\end{eqnarray*}
	which implies that, for $\alpha \in [\alpha_0,1]$,
	\begin{eqnarray*}
	0 &<& \left(\alpha^2 \mu\right) \cdot (\alpha_0 + (1-\alpha_0)\log(1-\alpha_0))\\
	  &\le& \alpha^2 \mu \cdot \frac{\alpha_0 + (1-\alpha_0)\log(1-\alpha_0)}{\alpha^2}  \\
	  &\le&  \KL(\Poi(\mu_1), \Poi(\mu_2))  \\
	  &\le& \alpha^2 \mu \cdot \frac{1}{\alpha^2}~ \le \left(\mu \alpha^2\right)  \cdot \frac{1}{\alpha_0^2}~.
	\end{eqnarray*}
	Hence taking $c_1 = (\alpha_0 + (1-\alpha_0)\log(1-\alpha_0))$ and $c_2 = \frac{1}{\alpha_0^2}$, we see that~\eqref{eq:WTS1} holds for $\alpha \in [\alpha_0,1]$. We now turn to prove~\eqref{eq:WTS2}. Note that $\log'(1-x) = -1/(1-x)$, $\log''(1-x) = 1/(1-x)^2$, and $\log'''(x) = -2/(1-x)^3$. Hence, by Taylor's theorem, there exists an $\alpha' \in [0,\alpha]$ such that
	\begin{eqnarray*}
	\log(1-\alpha) &=& \alpha \log'(1) + \frac{\alpha^2}{2}\log''(1) + \frac{\alpha^3}{6}\log'''(1-\alpha')\\
	&=& -\alpha  - \frac{\alpha^2}{2} - \frac{2\alpha^3}{(1-\alpha')^3}~.
	\end{eqnarray*}
	Hence,
	\begin{eqnarray*}
	\KL(\Poi(\mu_1), \Poi(\mu_2)) &=& \mu \left\{\alpha - (1- \alpha)(\alpha + \frac{\alpha^2}{2} + \frac{2\alpha^3}{(1-\alpha')^3})\right\} \\
	&=& \mu \left\{\alpha - (1- \alpha)(\alpha + \frac{\alpha^2}{2} + \frac{2\alpha^3}{(1-\alpha')^3})\right\} \\
	&=& \mu \left\{\alpha - \alpha - \frac{\alpha^2}{2} - \frac{2\alpha^3}{(1-\alpha')^3} + \alpha^2 + \frac{\alpha^3}{2} + \frac{2\alpha^4}{(1-\alpha')^3}\right\}\\
	&=& \mu \left\{\frac{\alpha^2}{2} + \alpha^3\left(\frac{-2}{(1-\alpha')^3} + \frac{1}{2} +\frac{2\alpha}{(1-\alpha')^3}\right)\right\}~.
	\end{eqnarray*}
	In particular, there exists a universal constant $\alpha_0$ such that, for all $\alpha \le \alpha_0$,
	\begin{eqnarray*}
	\KL(\Poi(\mu_1), \Poi(\mu_2)) \in \left[\frac{1}{4},\frac{3}{4}\right] \mu \alpha^2~.
	\end{eqnarray*}

%\pagebreak

%\pagebreak

%\section*{Extra things.}

%\section*{Snake Algorithm}
%\input{algorithm_snake}
%\end{document}

\end{document}